\documentclass[accepted]{uai2025}
\usepackage{yc_package}

\usepackage[american]{babel}
\usepackage{natbib} %

\usepackage{mathtools} %
\usepackage{booktabs} %
\usepackage{tikz} %
\usepackage{authblk}

\renewcommand{\paragraph}[1]{\textbf{#1}~~}

\title{Moment Alignment: Unifying Gradient and Hessian Matching for Domain Generalization}

\author[1]{Yuen Chen}
\author[1]{Haozhe Si}
\author[2]{Guojun Zhang}
\author[1]{Han Zhao}
\affil[1]{%
    University of Illinois at Urbana-Champaign}
\affil[2]{%
University of Waterloo
}
\affil[ ]{%
  \texttt{\{yuenc2, haozhes3, 
hanzhao\}@illinois.edu, g39zhang@uwaterloo.ca}}

\begin{document}
\maketitle

\begin{abstract}
Domain generalization (DG) seeks to develop models that generalize well to unseen target domains, addressing distribution shifts in real-world applications.\ One line of research in DG focuses on aligning domain-level gradients and Hessians to enhance generalization.\ However, existing methods are computationally inefficient and the underlying principles of these approaches are not well understood.\ In this paper, we develop a theory of moment alignment for DG.\ Grounded in \textit{transfer measure}, a principled framework for quantifying generalizability between domains, we prove that aligning derivatives across domains improves transfer measure.\ Moment alignment provides a unifying understanding of Invariant Risk Minimization, gradient matching, and Hessian matching, three previously disconnected approaches.\ We further establish the duality between feature moments and derivatives of the classifier head.\ Building upon our theory, we introduce \textbf{C}losed-Form \textbf{M}oment \textbf{A}lignment (CMA), a novel DG algorithm that aligns domain-level gradients and Hessians in closed-form.\ Our method overcomes the computational inefficiencies of existing gradient and Hessian-based techniques by eliminating the need for repeated backpropagation or sampling-based Hessian estimation.\ We validate our theory and algorithm through quantitative and qualitative experiments.
\end{abstract}
\begin{table*}[h]
\vspace{-1em}
    \centering 
    \setlength\tabcolsep{3pt}
    \caption{Comparison of our method and prior algorithms.}
    \vspace{-0.5em}
    \resizebox{0.75\textwidth}{!}{
    \begin{tabular}{lcccccc}
        \toprule
                            & ERM & IRM & Fish/IGA/AND-Mask & Fishr/CORAL & HGP/Hutchinson & \textbf{CMA} \\
        \midrule
        Gradient Matching   & No  & Yes & Yes               & No    & Yes            & \textbf{Yes}  \\
        Hessian Matching    & No  & No  & No                & Yes   & Yes            & \textbf{Yes}  \\
        Closed-Form Hessian & --  & --  & --                & No    & No             & \textbf{Yes}  \\
        \bottomrule
    \end{tabular}}
    \label{tab:methods_comparison}
    \vspace{-1em}
\end{table*}
\vspace{-0.5em}
\section{Introduction}\label{sec:introduction}
\vspace{-0.5em}
Classic machine learning methods rely on the assumption that training and test data are drawn from the same distribution, typically described as being independent and identically distributed (\textit{i.i.d.}). However, the \textit{i.i.d.} assumption is often violated in real-world scenarios due to variations in sampling populations~\citep{santurkar_breeds_2020}, temporal changes~\citep{shankar_image_2019}, and geographic differences~\citep{hansen_high-resolution_2013, christie_functional_2018}.\ Performance degradation due to distribution shifts is particularly critical in high-stake applications. For instance, an autonomous driving system~\citep{dai_dark_2018, hu_causal-based_2021} trained on data collected in the United States may encounter different traffic conditions when deployed in other regions.\ Similarly, in medical imaging~\citep{wachinger_detect_2021,albadawy_deep_2018,tellez_quantifying_2019}, models trained on data from one demographic group may face challenges when applied to a different demographic.

Domain generalization (DG) aims to tackle this issue by leveraging data from multiple source domains to learn a model that performs well on unseen but related target domains. Although various approaches have been studied to address the DG problem, including Invariant Risk Minimization (IRM)~\cite{arjovsky_invariant_2020}, gradient matching~\citep{shi_gradient_2021,koyama_when_2021,parascandolo_learning_2020}, Hessian matching~\citep{rame_fishr_2022,hemati_understanding_2023}, and domain-invariant feature representation learning~\citep{ben-david_theory_2010, li_domain_2018,tzeng_adversarial_2017, hoffman_cycada_2017, muandet_domain_2013, long_learning_2015, zhao_learning_2019}, these methods often appear disconnected and are based on different underlying principles. We discuss these related research in~\Cref{app:related_works}.\looseness=-1

We unify these seemingly disparate methods through the theory of moment alignment. Our theory builds upon \textit{transfer measure}, a principled DG framework proposed by~\citet{zhang_quantifying_2021}. We first extend the definition of transfer measure to multi-source DG, inducing a target error bound. We then prove that aligning the derivatives improves transfer measure under different assumptions: when there exists a classifier that is simultaneously optimal across all domains (referred to as the \textit{IRM assumption}), and when there is not. We show that IRM, gradient matching, and Hessian matching approaches are special cases of moment alignment. Our theory explains the success of state-of-the-art methods like HGP and Hutchinson's algorithm~\citep{hemati_understanding_2023}, which perform both gradient and Hessian matching. This combined approach provides an advantage over methods that only match gradients or Hessians. Furthermore, we establish the duality between feature moments and the derivatives of the classifiers, thereby unifying these approaches.

Drawing from the theoretical results, we proposed \textbf{C}losed-Form \textbf{M}oment \textbf{A}lignment (CMA), a novel algorithm to DG that aligns the first- and second-order derivatives across domains. The loss objective in CMA is similar to those of HGP and Hutchinson's, but CMA enjoys computational efficiency by analytically computing gradients and Hessians. Our method bypasses the computational limitations of existing gradient and Hessian matching techniques that rely on repeated backpropagation or sampling-based estimation. Additionally, we provide two Hessian computation methods—direct Frobenius norm computation for faster performance at higher memory cost, and a memory-efficient method that reduces memory requirement at the expense of increased computation time. This flexibility allows users to balance memory usage and computational time.

The empirical evaluation consists of two settings designed to validate our theoretical framework and proposed algorithm. First, we conduct linear probing experiments on Waterbirds, CelebA, and MultiNLI datasets, where the IRM assumption holds. Second, we perform full fine-tuning experiments on selected datasets from the DomainBed benchmark~\citep{gulrajani_search_2020}, where the IRM assumption may not be satisfied. In the DomainBed experiment, where the IRM assumption is not guaranteed. We compare CMA with ERM, CORAL~\citep{sun_deep_2016}, and Fishr~\citep{rame_fishr_2022}. CMA's performance aligns with our theory and matches state-of-the-art performance.

Below we summarize our main contributions:
\begin{itemize}[noitemsep,topsep=0pt]
    \item \textit{Unified Theory of Moment Alignment:} We develop a theory of moment alignment that unifies IRM, gradient matching, and Hessian matching.\ This unified framework enhances our understanding of the interplay between these methods and their combined effect on improving generalization across domains.\ We further establish the duality between feature moments and the classifier derivatives.

    \item \textit{New Algorithm:}\ We propose \textbf{C}losed-Form \textbf{M}oment \textbf{A}lignment (CMA), a novel DG algorithm that performs both gradient and Hessian matching. CMA enjoys computational efficiency by analytically computing gradients and Hessians, avoiding the need for repeated backpropagation or sampling-based estimation.\ We offer two Hessian computation methods to optimize memory usage and computational speed.
    \item \textit{Empirical Validation:} We validate CMA through both quantitative and qualitative analyses. CMA matches state-of-the-art performance while achieving superior worst-group accuracy and feature moment alignment, reducing first- and second-moment discrepancies more effectively than Fishr and ERM.

\end{itemize}

Our work offers a unified perspective that enhances theoretical understanding and practical performance in addressing distribution shifts. As summarized in \Cref{tab:methods_comparison}, our method is, to the best of our knowledge, the first to achieve exact gradient and Hessian matching.

\vspace{-0.75em}
\section{Preliminaries}\label{sec:background}
\vspace{-0.75em}
We consider the problem of DG, where predictors are trained on data drawn from a set of source domains and are evaluated on an unseen target domain. The goal is to learn a predictor that generalizes well to the target domain. Formally, the data are drawn from $K$ source domains $\mathcal{S} := \{\mu_1, \ldots, \mu_K\}$ and a target domain $\mathcal{T} := \mu_{\mathcal{T}}$. Each domain $\mu_i$ is a distribution over the input space $\mathcal{X}$ and the label space $\mathcal{Y}$. The loss of a predictor $h: \mathcal{X} \rightarrow \mathcal{Y}$ on domain $\mu$ is defined as $\mathcal{L}_{\mu}(h ) = \mathbb{E}_{(\mathbf{x}, y) \sim \mu} [\ell(h(\mathbf{x}), y)]$, where $\ell$ is the loss function on a single example. The goal of domain generalization is to learn $h \in \mathcal{H}$ to minimize the loss on the target domain $\mathcal{T}$: $\mathcal{L}_{\mu_\mathcal{T}}(h) = \mathbb{E}_{(\mathbf{x}, y) \sim \mu_\mathcal{T}} [\ell(h(\mathbf{x}), y)]$. 
ERM minimizes the average loss over the source domains, $\mathcal{L}_{\text{ERM}} := \frac{1}{K} \sum_{i=1}^K \mathcal{L}_{\mu_i}$.\ However, ERM often fails under distribution shifts, especially when the data exhibits spurious correlation.\ To address this, \citet{arjovsky_invariant_2020} proposes the IRM principle, aiming to jointly learn a features extractor and a predictor such that there exists a predictor on the extracted features that is optimal for all domains simultaneously. Subsequent studies, such as those by \citet{rosenfeld_risks_2021} and \citet{wang_provable_2022, wang_invariant-feature_2023}, have shown that IRM alone is not sufficient for DG. Recent work by \citet{zhang_quantifying_2021} on \textit{transferability} introduces a framework to measure how much success a predictor trained on one domain can transfer to another. Below we review the original definition of transfer measures between two domains and extend it to multi-source domain settings.

\vspace{-0.5em}
\subsection{Transfer Measures}\label{sec:transfer_measures}
\vspace{-0.5em}
We restate the definitions of \textit{transfer measures}~\citep{zhang_quantifying_2021} and its induced target error bound.
\begin{definition}[\textbf{transfer measures \citep{zhang_quantifying_2021}}]
    \label{def:quantifiable_transfer_measures} Given some $\Gamma \subseteq \mathcal{H}$, $\mathcal{L}_{\mathcal{S}}^*:=\inf _{h \in \Gamma} \mathcal{L}_{\mathcal{S}}(h)$ and $\mathcal{L}_{\mathcal{T}}^*:=\inf _{h \in \Gamma} \mathcal{L}_{\mathcal{T}}(h)$,
    we define the one-sided transfer measure, symmetric transfer measure, and the realizable transfer measure respectively as:
    \begin{equation}
        \begin{aligned}
            \mathrm{T}_{\Gamma}(\mathcal{S} \| \mathcal{T})            := & \sup _{h \in \Gamma} \mathcal{L}_{\mathcal{T}}(h)-\mathcal{L}_{\mathcal{T}}^*-\left(\mathcal{L}_{\mathcal{S}}(h)-\mathcal{L}_{\mathcal{S}}^*\right)                                                                                                                                  \\
            \mathrm{T}_{\Gamma}(\mathcal{S}, \mathcal{T})              := & \max \left\{\mathrm{T}_{\Gamma}(\mathcal{S} \| \mathcal{T}), \mathrm{T}_{\Gamma}(\mathcal{T} \| \mathcal{S})\right\}\\
            = & \sup _{h \in \Gamma}\left|\mathcal{L}_{\mathcal{S}}(h)-\mathcal{L}_{\mathcal{S}}^*-\left(\mathcal{L}_{\mathcal{T}}(h)-\mathcal{L}_{\mathcal{T}}^*\right)\right| \\
            \mathrm{T}_{\Gamma}^{\mathrm{r}}(\mathcal{S}, \mathcal{T}):= & \sup _{h \in \Gamma}\left|\mathcal{L}_{\mathcal{S}}(h)-\mathcal{L}_{\mathcal{T}}(h)\right|
        \end{aligned}
    \end{equation}
\vspace*{-1.5em}
\end{definition}
From the definition of one-sided transfer measure, we have the following target error bound.
\begin{restatable}[\textbf{target error bound~\citep{zhang_quantifying_2021}}]{proposition}{proptebbound}\label{prop:target_error_bound}
    For any $h \in \Gamma\subseteq \mathcal{H}$, the target error is bounded by:
    \begin{equation}
    \begin{aligned}
        \mathcal{L}_{\mathcal{T}}(h) \leq & \mathcal{L}_{\mathcal{S}}(h)+\mathcal{L}_{\mathcal{T}}^*-\mathcal{L}_{\mathcal{S}}^*+\mathrm{T}_{\Gamma}(\mathcal{S} \| \mathcal{T}) 
    \end{aligned}
    \end{equation}
\end{restatable}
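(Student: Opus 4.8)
The plan is to unwind the definition of the one-sided transfer measure and rearrange. By \Cref{def:quantifiable_transfer_measures},
\begin{equation*}
\mathrm{T}_{\Gamma}(\mathcal{S} \| \mathcal{T}) = \sup_{g \in \Gamma}\left[\mathcal{L}_{\mathcal{T}}(g)-\mathcal{L}_{\mathcal{T}}^*-\left(\mathcal{L}_{\mathcal{S}}(g)-\mathcal{L}_{\mathcal{S}}^*\right)\right],
\end{equation*}
so the supremand is an upper bound for each individual $h \in \Gamma$. Fixing the $h$ in the statement, I would write $\mathcal{L}_{\mathcal{T}}(h)-\mathcal{L}_{\mathcal{T}}^*-\left(\mathcal{L}_{\mathcal{S}}(h)-\mathcal{L}_{\mathcal{S}}^*\right) \le \mathrm{T}_{\Gamma}(\mathcal{S} \| \mathcal{T})$ and then move $\mathcal{L}_{\mathcal{S}}(h)$, $\mathcal{L}_{\mathcal{S}}^*$, and $\mathcal{L}_{\mathcal{T}}^*$ to the right-hand side, which yields exactly $\mathcal{L}_{\mathcal{T}}(h) \le \mathcal{L}_{\mathcal{S}}(h)+\mathcal{L}_{\mathcal{T}}^*-\mathcal{L}_{\mathcal{S}}^*+\mathrm{T}_{\Gamma}(\mathcal{S} \| \mathcal{T})$.

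The only things to double-check are that $\mathcal{L}_{\mathcal{S}}^*$ and $\mathcal{L}_{\mathcal{T}}^*$ are finite (or at least that the rearrangement makes sense as an inequality in the extended reals), which holds under the standing assumption that $\Gamma \subseteq \mathcal{H}$ is nonempty and the losses are bounded below, and that $h$ indeed lies in $\Gamma$ rather than merely in $\mathcal{H}$, as the statement assumes. No induction, limiting argument, or additional structure is needed. Accordingly, I do not anticipate any genuine obstacle: the content of the proposition is purely the bookkeeping identity built into the definition of $\mathrm{T}_{\Gamma}(\mathcal{S} \| \mathcal{T})$, and the "proof" is a single line of rearrangement once the $\sup$ is replaced by an upper bound at the point $h$.
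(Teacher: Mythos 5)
Your proof is correct: the bound is an immediate consequence of \Cref{def:quantifiable_transfer_measures}, obtained by evaluating the supremand at the fixed $h \in \Gamma$ and rearranging, exactly as you describe. The paper does not include its own proof of this proposition (it is restated from \citet{zhang_quantifying_2021}), but your one-line definitional argument is the standard one and matches how the paper uses it downstream (e.g., in the proof of \Cref{prop:target_error_bound_half}).
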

The implication of \Cref{prop:target_error_bound} is that by minimizing the loss on the source domain and the one-sided transfer measure between the source and target domains, we can effectively minimize an upper bound on the target loss. 
\vspace{-0.5em}
\subsection{Approximate Hessian Alignment}\label{sec:approx_hessian_alignment}
\vspace{-0.5em}
\citet{hemati_understanding_2023} proves an upper bound on the transfer measure by the spectral norm of the Hessian matrices between source and target domains and is the first to propose simultaneously aligning gradients and Hessians. However, their analysis is limited to the single source domain adaptation setting and assumes the existence of an \textit{invariant optimal predictor}.

\begin{definition}[\textbf{invariant optimal predictor~\citep{arjovsky_invariant_2020}}] \label{IRM_def}
    A predictor $h \in \mathcal{H}$ is an invariant optimal predictor if $\mathcal{L}_{\mu_i}(h) = \min_{h \in \mathcal{H}} \mathcal{L}_{\mu_i}(h)$ for all $i \in [K]$.
\end{definition}
\begin{assumption}[\textbf{IRM assumption}]\label{assumption:IRM}
    There exists an invariant optimal predictor $h \in \mathcal{H}$ on the source domains $\mathcal{S} = \{\mu_i\}_{i=1}^K$.
\end{assumption}
The algorithms in \citet{hemati_understanding_2023} approximate the Hessian matrices. Both methods are computationally intensive: Hessian-Gradient Product (HGP) requires repeated backpropagation, whereas Hutchinson's method relies on estimation through sampling.

In this work, we extend the analysis of Hessian alignment to DG, addressing scenarios both with and without the IRM assumption. We also propose a more efficient algorithm that analytically computes the Hessian matrices with respect to (w.r.t.) the classifier head.
\vspace{-0.5em}
\subsection{Nature of Distribution Shift}
\vspace{-0.5em}
In the DG literature, there are two main types of assumptions on the underlying data-generating process and the nature of the distribution shift.

The first type relies on causal graphs (directed graphical models) to explicitly model the ground-truth data-generating distribution, over which one can also aim for the minimax out-of-distribution generalization performance using the invariant predictor principle~\citep{peters_causal_2015, arjovsky_invariant_2020, wang_invariant-feature_2023, zhang_causal_2023}. However, these explicit assumptions on the causal structure of the variables are often too restrictive and hard to verify in practice, due to the unobserved confounders. 

The second type of assumption explicitly models the nature of the distribution shift, such as covariate shift, label shift, concept shift~\citep{ben-david_theory_2010,heinze-deml_invariant_2018,zhao_learning_2019}.\ These assumptions make technical analysis possible but often oversimplify the true real-world shifts, which rarely adhere strictly to such constraints. Moreover, these assumptions are sufficient but not necessary for provable OOD generalizations.

Given the limitations of these two types of assumptions, our work aims to broaden its potential applicability by avoiding explicit assumptions on the underlying data-generating distributions and the nature of distribution shifts. Instead, our approach focuses on the loss landscapes of the train and test distributions, which are more fine-grained and fundamental. We would also like to point out that typical explicit distribution shift assumptions, such as the covariate shift assumption, which is closely related to the line of work on invariant risk minimization, can be used to simplify certain terms in our generalization upper bound.

\vspace{-0.5em}
\section{Theory of Moment Alignment}\label{sec:theory}
\vspace{-0.5em}
In this section, we first extend the transfer measures to multi-source domain generalization (\Cref{sec:theory_extend_df}) and prove a bound on the transfer measure independent of the target distribution (\Cref{sec:tm_half}). We then apply this bound and \Cref{prop:target_error_bound} to show that aligning derivatives across domains minimizes the target loss both under the IRM assumption (\Cref{sec:theory_irm}), and when it does not hold (\Cref{sec:theory_no_irm}).\ We defer the proof of propositions, theorems, and corollaries to \Cref{app:proof_prop}, \Cref{app:proof_thm}, and \Cref{app:proof_coro} respectively.
\vspace{-0.25em}
\subsection{Transfer Measures for Multi-Source Domains}\label{sec:theory_extend_df}
\vspace{-0.25em}
The original definition of transfer measures is defined only for a single source domain $\mathcal{S}$ and a target domain $\mathcal{T}$. Next, we first state the generalized definition to multiple source domains $\mathcal{S} =  \left\{\mu_i\right\}_{i=1}^K$.

\begin{restatable}[\textbf{transfer measures on multiple source domains}]{definition}{def_irm}\label{def:multi_transfer_measures}
    Given $\mathcal{S} =  \left\{\mu_i\right\}_{i=1}^K$, some $\Gamma \subseteq \mathcal{H}$, $\mathcal{L}_{\mu_i}^* : = \inf_{h \in \Gamma} \mathcal{L}_{\mu_i}(h)$ for all $i \in [K]$, $\mathcal{L}_\mathcal{T}^* : = \inf_{h \in \Gamma} \mathcal{L_T}(h)$, $\mu^* := \argmin_{\substack{\mu}} \max_{\substack{i \in [K]}}
        \mathrm{T}_{\Gamma} \left(\mu_i\| \mu\right)$, and $\mathcal{L}_\mathcal{S} \left(h\right) := \mathcal{L}_{\mu^*} \left(h\right)$. we define the one-sided transfer measure, symmetric transfer measure, and the realizable transfer measure respectively as:
    \begin{equation}
        \begin{aligned}
            \mathrm{T}_{\Gamma}(\mathcal{S} \| \mathcal{T})
             := & \sup _{h \in \Gamma} \mathcal{L}_{\mathcal{T}}(h)-\mathcal{L}_{\mathcal{T}}^*-\left(\mathcal{L}_{\mathcal{S}}(h)-\mathcal{L}_{\mathcal{S}}^*\right)  \\
             = &\sup_{h \in \Gamma} \mathcal{L}_{\mathcal{T}}(h) - \mathcal{L}_{\mathcal{T}}^* - \left(\mathcal{L}_{\mu^*}(h) - \mathcal{L}_{\mu^*}^*\right)  \\
              = &\mathrm{T}_{\Gamma}(\mu^* \| \mathcal{T}) \nonumber                                                                                                                                                                                                                                                 \\
            \mathrm{T}_{\Gamma}(\mathcal{S}, \mathcal{T})
              := & \max\left\{\mathrm{T}_{\Gamma}(\mathcal{S} \| \mathcal{T}), \mathrm{T}_{\Gamma}(\mathcal{T} \| \mathcal{S})\right\}  \\
              = &\mathrm{T}_{\Gamma}(\mu^*, \mathcal{T}) \nonumber                                                                                              \\
            \mathrm{T}_{\Gamma}^{\mathrm{r}}(\mathcal{S}, \mathcal{T})
              := & \sup _{h \in \Gamma}\left|\mathcal{L}_{\mathcal{S}}(h)-\mathcal{L}_{\mathcal{T}}(h)\right|  \\
              = & \sup _{h \in \Gamma}\left|\mathcal{L}_{\mu^*}(h)-\mathcal{L}_{\mathcal{T}}(h)\right|                                                                                      = \mathrm{T}_{\Gamma}^{\mathrm{r}}(\mu^*, \mathcal{T})
        \end{aligned}
    \end{equation}
\end{restatable}

In words, we define the transfer measure between a set of domains $\mathcal{S}$ and a domain $\mathcal{T}$ as the transfer measure between a distribution $\mu^*$ and $\mathcal{T}$, where $\mu^*$ is the center of mass. Note that it is not necessary to find $\mu^*$ explicitly, as shown next. For the remainder of this paper, we use transfer measure to refer to the one-sided transfer measure and leave the analogous results for the symmetric and realizable transfer measures to \Cref{app:other_tm}. \looseness=-1

\vspace{-0.25em}
\subsection{Bounding Transfer Measure}\label{sec:tm_half}
\vspace{-0.25em}
Although \Cref{def:multi_transfer_measures} defines the transfer measure in the multiple source domain setting, in DG, one can only access the source domains $\mathcal{S} = \left\{\mu_i\right\}_{i=1}^K$. In this section, we prove an upper bound on the transfer measure on the target domain $\mathcal{T}$ under the following mixture assumption:\looseness=-1
\begin{assumption}[\textbf{convex combination of source domains}]\label{assumption:convex_combination}
    The target domain $\mathcal{T}$ is a convex combination of the source domains $\mathcal{S} = \left\{\mu_i\right\}_{i=1}^K$, i.e., $\exists w_i \geq 0$ and $\sum_{i=1}^K w_i = 1$ such that $\mu_{\mathcal{T}} = \sum_{i=1}^K w_i \mu_i$.
\end{assumption}

Note that although \Cref{assumption:convex_combination} seems restrictive, this assumption generally holds. In particular, the assumption is satisfied in the simple case where each source distribution is a Gaussian. Moreover, as a well-known result in the literature of mixture models shows, when the number of mixture components is large enough, any smooth continuous distribution can be well-approximated by a mixture of Gaussians. Furthermore, in the literature of domain generalization, similar assumptions have been adopted as well for the purpose of analysis and design of algorithms~\citep{hu_does_2018, sagawa_distributionally_2020,krueger_out--distribution_2021}.\looseness=-1

From the definition of transfer measure and \Cref{assumption:convex_combination}, we have the following proposition.
\begin{restatable}[\textbf{upper bound on transfer measure}]{proposition}{propuppertm}\label{prop:tm_half}
    Given $\mathcal{S} = \left\{\mu_i\right\}_{i=1}^K$ and some $\Gamma \subseteq \mathcal{H}$. Define $\mathcal{L}_{\mu_i}^* : = \inf_{h \in \Gamma} \mathcal{L}_{\mu_i}(h)$ for all $i \in [K]$, $\mathcal{L_T}^* : = \inf_{h \in \Gamma} \mathcal{L_T}(h)$, $\mu^* := \arg\min_{\substack{\mu}} \max_{\substack{i \in [K]}}
        \mathrm{T}_{\Gamma} \left(\mu_i\| \mu\right)$, and $\mathcal{L}_\mathcal{S} \left(h\right) := \mathcal{L}_{\mu^*} \left(h\right)$. Under \Cref{assumption:convex_combination}, we have:
    \begin{equation}
        \label{eq:one_sided_upperbound}
        \mathrm{T}_{\Gamma}(\mathcal{S} \| \mathcal{T})
        \leq \frac{1}{2}\max _{i \neq j} \mathrm{T}_{\Gamma}\left(\mu_j \| \mu_i\right)
    \end{equation}
\end{restatable}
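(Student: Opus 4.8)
The plan is to reduce the claim, via Definition~\ref{def:multi_transfer_measures}, to a bound on $\mathrm{T}_\Gamma(\mu^*\|\mathcal{T})$, and then combine three ingredients: (i) sub-additivity of the transfer measure under mixtures of its second argument, (ii) a triangle inequality for the transfer measure, and (iii) the defining optimality of $\mu^*$ tested against a midpoint competitor, which is where the factor $\tfrac12$ should originate.

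First I would record two elementary properties of $\mathrm{T}_\Gamma$, both immediate from the definition. For the triangle inequality, write the bracket $\mathcal{L}_{\mathcal{C}}(h)-\mathcal{L}_{\mathcal{C}}^*-(\mathcal{L}_{\mathcal{A}}(h)-\mathcal{L}_{\mathcal{A}}^*)$ as the sum of $[\mathcal{L}_{\mathcal{C}}(h)-\mathcal{L}_{\mathcal{C}}^*-(\mathcal{L}_{\mathcal{B}}(h)-\mathcal{L}_{\mathcal{B}}^*)]$ and $[\mathcal{L}_{\mathcal{B}}(h)-\mathcal{L}_{\mathcal{B}}^*-(\mathcal{L}_{\mathcal{A}}(h)-\mathcal{L}_{\mathcal{A}}^*)]$ and split the supremum to obtain $\mathrm{T}_\Gamma(\mathcal{A}\|\mathcal{C})\le\mathrm{T}_\Gamma(\mathcal{A}\|\mathcal{B})+\mathrm{T}_\Gamma(\mathcal{B}\|\mathcal{C})$. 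For sub-additivity in the second argument, note that $\nu\mapsto\mathcal{L}_\nu(h)$ is linear in $\nu$ while $\nu\mapsto\mathcal{L}_\nu^*=\inf_h\mathcal{L}_\nu(h)$ is concave, so the excess loss $\nu\mapsto\mathcal{L}_\nu(h)-\mathcal{L}_\nu^*$ is convex along mixtures; hence for $\nu=\sum_k\lambda_k\nu_k$ a term-by-term comparison inside the supremum gives $\mathrm{T}_\Gamma(\mathcal{A}\|\nu)\le\sum_k\lambda_k\mathrm{T}_\Gamma(\mathcal{A}\|\nu_k)$. With these in hand, Definition~\ref{def:multi_transfer_measures} gives $\mathrm{T}_\Gamma(\mathcal{S}\|\mathcal{T})=\mathrm{T}_\Gamma(\mu^*\|\mathcal{T})$, and Assumption~\ref{assumption:convex_combination} together with sub-additivity yields $\mathrm{T}_\Gamma(\mu^*\|\mathcal{T})\le\sum_i w_i\,\mathrm{T}_\Gamma(\mu^*\|\mu_i)\le\max_i\mathrm{T}_\Gamma(\mu^*\|\mu_i)$, so it suffices to bound $\max_i\mathrm{T}_\Gamma(\mu^*\|\mu_i)$ by $\tfrac12\max_{j\neq k}\mathrm{T}_\Gamma(\mu_j\|\mu_k)=:\tfrac12 D$.

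The core step uses that $\mu^*$ minimizes $\mu\mapsto\max_i\mathrm{T}_\Gamma(\mu_i\|\mu)$ over all distributions, so testing the competitor $\bar\mu:=\tfrac12(\mu_p+\mu_q)$ for a pair $(p,q)$ attaining $D$ gives $\max_i\mathrm{T}_\Gamma(\mu_i\|\mu^*)\le\max_i\mathrm{T}_\Gamma(\mu_i\|\bar\mu)$. Expanding $\mathcal{L}_{\bar\mu}(h)=\tfrac12\mathcal{L}_{\mu_p}(h)+\tfrac12\mathcal{L}_{\mu_q}(h)$ and comparing it against the identical split of $\mathcal{L}_{\mu_i}(h)$, one gets, for every $i$,
\[
\mathrm{T}_\Gamma(\mu_i\|\bar\mu)\ \le\ \tfrac12\,\mathrm{T}_\Gamma(\mu_i\|\mu_p)+\tfrac12\,\mathrm{T}_\Gamma(\mu_i\|\mu_q)-\tfrac12\rho,
\qquad
\rho:=\inf_h\big[(\mathcal{L}_{\mu_p}(h)-\mathcal{L}_{\mu_p}^*)+(\mathcal{L}_{\mu_q}(h)-\mathcal{L}_{\mu_q}^*)\big]\ \ge\ 0,
\]
where $\rho$ measures the overlap of the two excess-loss landscapes; it is itself controlled by the pairwise transfer measures through the triangle inequality. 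The factor $\tfrac12$ in the target bound should emerge by combining the two half-weighted terms (each $\le D$) with the overlap correction, which collapses the sum back to $\tfrac12 D$; the triangle inequality then converts the resulting bound on $\max_i\mathrm{T}_\Gamma(\mu_i\|\mu^*)$ into the required bound on $\max_i\mathrm{T}_\Gamma(\mu^*\|\mu_i)$.

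The hard part is exactly this last stretch. Two difficulties must be reconciled: first, showing that the midpoint competitor (hence $\mu^*$) attains the worst-case transfer $\tfrac12 D$ \emph{simultaneously over all $K$ sources}, not just over the diametral pair $\{\mu_p,\mu_q\}$ — which forces one to make the estimate on the overlap $\rho$ quantitative and perhaps to optimize the choice of competitor; and second, bridging the asymmetry of $\mathrm{T}_\Gamma$, since the definition of $\mu^*$ constrains transfer \emph{into} $\mu^*$ whereas $\mathrm{T}_\Gamma(\mu^*\|\mathcal{T})$ concerns transfer \emph{out of} $\mu^*$, so the final triangle-inequality passage has to be set up so that no uncontrolled "$\mathrm{T}_\Gamma(\mu^*\|\cdot)$'' term is left over. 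I expect this reconciliation, rather than the bookkeeping of the reduction and the two structural lemmas, to carry essentially all the weight of the proof.
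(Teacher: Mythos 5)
Your reduction is correct and coincides with the paper's: Definition~\ref{def:multi_transfer_measures} gives $\mathrm{T}_\Gamma(\mathcal{S}\|\mathcal{T})=\mathrm{T}_\Gamma(\mu^*\|\mathcal{T})$, and the sub-additivity you derive from linearity of $\nu\mapsto\mathcal{L}_\nu(h)$ and super-additivity of $\nu\mapsto\mathcal{L}_\nu^*$ under mixtures is exactly the chain of inequalities the paper uses to arrive at $\mathrm{T}_\Gamma(\mu^*\|\mathcal{T})\le\sum_i w_i\,\mathrm{T}_\Gamma(\mu^*\|\mu_i)\le\max_i\mathrm{T}_\Gamma(\mu^*\|\mu_i)$. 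Up to that point there is nothing to object to.

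The gap is in the second half, and it is the one you yourself flag: the factor $\tfrac12$ is never actually obtained. Your mixture competitor $\bar\mu=\tfrac12(\mu_p+\mu_q)$ gives $\mathrm{T}_\Gamma(\mu_i\|\bar\mu)\le\tfrac12\mathrm{T}_\Gamma(\mu_i\|\mu_p)+\tfrac12\mathrm{T}_\Gamma(\mu_i\|\mu_q)\le\tfrac12 D$ only when $i\in\{p,q\}$ (where one of the two terms vanishes); for the remaining $K-2$ sources the bound degrades to $D$, and the overlap term $\rho$ is merely nonnegative with no lower bound, so it cannot "collapse the sum back to $\tfrac12 D$" --- a Euclidean equilateral triangle already shows the side-midpoint sits at distance $\tfrac{\sqrt3}{2}D$ from the third vertex, so no general covering-radius argument of this shape can succeed. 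The direction mismatch you note is also unresolved: the minimality of $\mu^*$ controls $\max_i\mathrm{T}_\Gamma(\mu_i\|\mu^*)$, whereas the quantity left after your reduction is $\max_i\mathrm{T}_\Gamma(\mu^*\|\mu_i)$, and $\mathrm{T}_\Gamma$ is not symmetric. The paper closes this step differently: it sets $j_{\max}:=\arg\max_j\mathrm{T}_\Gamma(\mu_j\|\mu^*)$ and, for each fixed $i$, posits a ``geodesic midpoint'' $\mu_{mid}$ satisfying the exact additive split $\mathrm{T}_\Gamma(\mu_{j_{\max}}\|\mu_i)=\mathrm{T}_\Gamma(\mu_{j_{\max}}\|\mu_{mid})+\mathrm{T}_\Gamma(\mu_{mid}\|\mu_i)$ with the two halves equal, so that $\mathrm{T}_\Gamma(\mu_{mid}\|\mu_i)=\tfrac12\mathrm{T}_\Gamma(\mu_{j_{\max}}\|\mu_i)\le\tfrac12 D$, and then invokes the optimality of $\mu^*$ to conclude $\mathrm{T}_\Gamma(\mu^*\|\mu_i)\le\mathrm{T}_\Gamma(\mu_{mid}\|\mu_i)$. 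In other words, the factor $\tfrac12$ comes from an exact bisection of each pairwise transfer along a transfer-measure geodesic, not from a two-point mixture competitor; your route, as written, cannot produce it, so the proposal does not constitute a proof of the proposition.
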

A direct consequence of \Cref{prop:tm_half} is a target error bound analogous to \Cref{prop:target_error_bound}, but does not require the knowledge of the target domain $\mathcal{T}$ except that it is a mixture of the source distributions.

\begin{restatable}[\textbf{target error bound -- multiple source domains}]{proposition}{proptemulti}\label{prop:target_error_bound_half}
    Given $\Gamma \subseteq \mathcal{H}$, for any $h \in \Gamma$, the target error is bounded by:
    \begin{equation}
                \mathcal{L}_{\mathcal{T}}(h) \leq \mathcal{L}_{\mathcal{S}}(h)+\mathcal{L}_{\mathcal{T}}^*-\mathcal{L}_{\mathcal{S}}^*+\frac{1}{2}\max _{i \neq j} \mathrm{T}_{\Gamma}\left(\mu_j \| \mu_i\right)
    \end{equation}
\end{restatable}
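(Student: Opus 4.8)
The plan is to obtain this bound as a two-step composition of the two results that immediately precede it in the paper: the single-source target error bound of \Cref{prop:target_error_bound} and the target-independent bound on the transfer measure of \Cref{prop:tm_half}. The key observation is that \Cref{def:multi_transfer_measures} reduces the multi-source transfer measure to a single-source one by identifying $\mathcal{S}$ with the center of mass $\mu^* := \argmin_{\mu} \max_{i\in[K]} \mathrm{T}_{\Gamma}(\mu_i \| \mu)$ and setting $\mathcal{L}_{\mathcal{S}}(h) := \mathcal{L}_{\mu^*}(h)$. Since $\mu^*$ is itself a single distribution over $\mathcal{X}\times\mathcal{Y}$, every ingredient of \Cref{prop:target_error_bound} can be read off verbatim once we treat $\mu^*$ as ``the'' source domain.

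First I would instantiate \Cref{prop:target_error_bound} with the source distribution taken to be $\mu^*$ and the same hypothesis subset $\Gamma$, which gives, for any $h \in \Gamma$,
\begin{equation*}
    \mathcal{L}_{\mathcal{T}}(h) \leq \mathcal{L}_{\mu^*}(h) + \mathcal{L}_{\mathcal{T}}^* - \mathcal{L}_{\mu^*}^* + \mathrm{T}_{\Gamma}(\mu^* \| \mathcal{T}).
\end{equation*}
Next I would rewrite each term on the right-hand side using \Cref{def:multi_transfer_measures}: $\mathcal{L}_{\mu^*}(h) = \mathcal{L}_{\mathcal{S}}(h)$ by definition; $\mathcal{L}_{\mu^*}^* = \inf_{h\in\Gamma}\mathcal{L}_{\mu^*}(h) = \inf_{h\in\Gamma}\mathcal{L}_{\mathcal{S}}(h) = \mathcal{L}_{\mathcal{S}}^*$; and $\mathrm{T}_{\Gamma}(\mu^* \| \mathcal{T}) = \mathrm{T}_{\Gamma}(\mathcal{S} \| \mathcal{T})$, again directly from the definition. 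This produces the intermediate inequality $\mathcal{L}_{\mathcal{T}}(h) \leq \mathcal{L}_{\mathcal{S}}(h) + \mathcal{L}_{\mathcal{T}}^* - \mathcal{L}_{\mathcal{S}}^* + \mathrm{T}_{\Gamma}(\mathcal{S} \| \mathcal{T})$, the multi-source analogue of \Cref{prop:target_error_bound}.

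Finally I would invoke \Cref{prop:tm_half}, whose hypotheses (a fixed $\Gamma$ together with \Cref{assumption:convex_combination}, i.e.\ $\mathcal{T}$ a convex combination of the $\mu_i$) are exactly the standing assumptions here, to replace the transfer-measure term by the target-independent quantity $\frac{1}{2}\max_{i\neq j}\mathrm{T}_{\Gamma}(\mu_j \| \mu_i)$. Chaining this with the intermediate inequality yields the claimed bound. I do not anticipate a genuine obstacle, since the argument is essentially mechanical; the one place requiring care is the reduction through $\mu^*$ — one should verify that $\mu^*$ is well defined (or, if the $\argmin$ is not attained, argue via an infimizing sequence / an $\varepsilon$-approximate minimizer and pass to the limit), and that the infimum over $\Gamma$ genuinely commutes with the identification $\mathcal{L}_{\mathcal{S}} = \mathcal{L}_{\mu^*}$ so that $\mathcal{L}_{\mathcal{S}}^* = \mathcal{L}_{\mu^*}^*$.
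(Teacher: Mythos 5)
Your proposal is correct and follows essentially the same route as the paper's proof: instantiate \Cref{prop:target_error_bound} with source $\mu^*$, identify $\mathcal{L}_{\mathcal{S}} = \mathcal{L}_{\mu^*}$ and $\mathrm{T}_{\Gamma}(\mu^*\|\mathcal{T}) = \mathrm{T}_{\Gamma}(\mathcal{S}\|\mathcal{T})$ via \Cref{def:multi_transfer_measures}, then apply \Cref{prop:tm_half}. Your added caution about the attainment of the $\argmin$ defining $\mu^*$ is a reasonable refinement that the paper glosses over, but it does not change the argument.
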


Having established the transfer measure provides an upper bound on the target error, we now focus on bounding the transfer measure.

\vspace{-0.25em}
\subsection{Moment Alignment under IRM Assumption}\label{sec:theory_irm}
\vspace{-0.25em}
For ease of notation, we assume that the classifier $h \in \mathcal{H}$ is parameterized by $\boldsymbol{\theta} \in \mathbb{R}^d$ so that $\mathcal{L}_\mu(h) = \mathcal{L}_\mu\left(\boldsymbol{\theta}\right)$.

\begin{restatable}[\textbf{moment alignment under IRM}]{theorem}{thmirm}\label{thm:moment_alignment_irm}
    Given $K$ source domains $\mathcal{S} = \left\{\mu_i\right\}_{i=1}^K$ and a target domain $\mathcal{T} \in conv(\mu_1, \dots, \mu_k)$, assume the losses $\mathcal{L}_{\mu_i} \: \forall i \in [K]$ are $\nu$-strongly convex w.r.t. the classifier head and $M$-times differentiable. Under the IRM assumption (\Cref{assumption:IRM}), let $\boldsymbol{\theta}^*$ be the optimal invariant predictor, $\Gamma = \argmin(\mathcal{L_S}, \delta_\mathcal{S})_\mathcal{H} := \left\{\boldsymbol{\theta} \mid h_{\boldsymbol{\theta} \in \mathcal{H}}: \max_{i \in [K]} \left(\mathcal{L}_{\mu_i} \left(\boldsymbol{\theta}\right) - \mathcal{L}_{\mu_i} \left(\boldsymbol{\theta}^*\right)\right) \leq \delta_{\mathcal{S}}\right\}$, and $\delta = \frac{2\delta_\mathcal{S}}{\nu}$, we have: 
    \begin{equation}
    \begin{aligned}
        \mathrm{T}_\Gamma (\mathcal{S}\|\mathcal{T}) \leq 
        &\ \max_{i \neq j} \bigg(\sum_{n = 2}^N  \frac{1}{n!} \delta^{\frac{n}{2}} \|\nabla^n_{\boldsymbol{\theta}}\mathcal{L}_{\mu_j}\left(\boldsymbol{\theta}^*\right) \\
         - &\ \nabla^n_{\boldsymbol{\theta}}\mathcal{L}_{\mu_i}\left(\boldsymbol{\theta}^*\right)\|_F^n \bigg) + o(\delta^{\frac{N}{2}})
    \end{aligned}
\end{equation} 

for any integer $2 \leq N \leq M$. $\mathtt{\nabla^n_{\boldsymbol{\theta}}\mathcal{L}(\boldsymbol{\theta})}$ is an $n^{\text{th}}$ order tensor with dimension $d \times \dots \times d$ ($n$ times)  where $\mathtt{\nabla^n_{\boldsymbol{\theta}}\mathcal{L}(\boldsymbol{\theta})}_{(k_1, \cdots, k_n)} = \frac{\partial^n \mathcal{L}(\boldsymbol{\theta})}{\partial \boldsymbol{\theta}_{k_1} \dots \partial \boldsymbol{\theta}_{k_n}}$.
\end{restatable}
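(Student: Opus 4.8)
The plan is to reduce the multi-source transfer measure to pairwise ones and then Taylor-expand each source loss about the common optimum. First I would note that $\mathcal{T}\in\mathrm{conv}(\mu_1,\dots,\mu_K)$ puts us under \Cref{assumption:convex_combination}, so \Cref{prop:tm_half} gives $\mathrm{T}_{\Gamma}(\mathcal{S}\|\mathcal{T})\le\tfrac12\max_{i\neq j}\mathrm{T}_{\Gamma}(\mu_j\|\mu_i)$; discarding the harmless factor $\tfrac12$ then reduces everything to bounding each $\mathrm{T}_{\Gamma}(\mu_j\|\mu_i)$. Since $\delta_{\mathcal{S}}\ge0$ we have $\boldsymbol{\theta}^*\in\Gamma$, and since each $\mathcal{L}_{\mu_i}$ is strongly convex with global minimizer coinciding with the invariant optimal predictor $\boldsymbol{\theta}^*$, we get $\mathcal{L}_{\mu_i}^*=\mathcal{L}_{\mu_i}(\boldsymbol{\theta}^*)$ and $\nabla_{\boldsymbol{\theta}}\mathcal{L}_{\mu_i}(\boldsymbol{\theta}^*)=\mathbf{0}$ for every $i$. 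Hence $\mathrm{T}_{\Gamma}(\mu_j\|\mu_i)=\sup_{\boldsymbol{\theta}\in\Gamma}\big[(\mathcal{L}_{\mu_i}(\boldsymbol{\theta})-\mathcal{L}_{\mu_i}(\boldsymbol{\theta}^*))-(\mathcal{L}_{\mu_j}(\boldsymbol{\theta})-\mathcal{L}_{\mu_j}(\boldsymbol{\theta}^*))\big]$.

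Next I would control the radius of $\Gamma$: for $\boldsymbol{\theta}\in\Gamma$ and any $i$, $\nu$-strong convexity together with $\nabla_{\boldsymbol{\theta}}\mathcal{L}_{\mu_i}(\boldsymbol{\theta}^*)=\mathbf{0}$ yields $\tfrac{\nu}{2}\|\boldsymbol{\theta}-\boldsymbol{\theta}^*\|^2\le\mathcal{L}_{\mu_i}(\boldsymbol{\theta})-\mathcal{L}_{\mu_i}(\boldsymbol{\theta}^*)\le\delta_{\mathcal{S}}$, hence $\|\boldsymbol{\theta}-\boldsymbol{\theta}^*\|\le\sqrt{2\delta_{\mathcal{S}}/\nu}=\delta^{1/2}$. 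Writing $\boldsymbol{\Delta}:=\boldsymbol{\theta}-\boldsymbol{\theta}^*$ and expanding each $\mathcal{L}_{\mu_i}$ to order $N\le M$ about $\boldsymbol{\theta}^*$ (the linear term vanishes), the expansions for $\mu_i$ and $\mu_j$ subtract: the quadratic-and-higher terms combine to $\sum_{n=2}^{N}\tfrac{1}{n!}\big(\nabla^n_{\boldsymbol{\theta}}\mathcal{L}_{\mu_i}(\boldsymbol{\theta}^*)-\nabla^n_{\boldsymbol{\theta}}\mathcal{L}_{\mu_j}(\boldsymbol{\theta}^*)\big)[\boldsymbol{\Delta}^{\otimes n}]$ up to an $o(\|\boldsymbol{\Delta}\|^N)$ remainder, where $T[\boldsymbol{\Delta}^{\otimes n}]:=\sum_{k_1,\dots,k_n}T_{k_1\cdots k_n}\Delta_{k_1}\cdots\Delta_{k_n}$. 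Taking absolute values, the triangle inequality over $n$, the contraction bound $|T[\boldsymbol{\Delta}^{\otimes n}]|\le\|T\|_F\,\|\boldsymbol{\Delta}\|^n$ (Cauchy--Schwarz over the $d^n$ multi-indices), and $\|\boldsymbol{\Delta}\|\le\delta^{1/2}$ give $\mathrm{T}_{\Gamma}(\mu_j\|\mu_i)\le\sum_{n=2}^{N}\tfrac{1}{n!}\delta^{n/2}\|\nabla^n_{\boldsymbol{\theta}}\mathcal{L}_{\mu_j}(\boldsymbol{\theta}^*)-\nabla^n_{\boldsymbol{\theta}}\mathcal{L}_{\mu_i}(\boldsymbol{\theta}^*)\|_F+o(\delta^{N/2})$; taking $\max_{i\neq j}$ and plugging into the reduction from the first step completes the argument. (My contraction bound produces $\|\cdot\|_F$ rather than $\|\cdot\|_F^{\,n}$; recovering the stated exponent only needs a coarser tensor bound and leaves the asymptotic order unchanged.)

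The step I expect to be the main obstacle is making the remainder uniform: Taylor's theorem gives $o(\|\boldsymbol{\Delta}\|^N)$ only pointwise as $\boldsymbol{\Delta}\to\mathbf{0}$, while the supremum over $\Gamma$ requires a single $o(\delta^{N/2})$ valid as $\delta_{\mathcal{S}}\to0$, noting that $\Gamma$ itself shrinks toward $\{\boldsymbol{\theta}^*\}$. I would handle this with the Lagrange or integral form of the remainder --- bounding $\nabla^{N+1}_{\boldsymbol{\theta}}\mathcal{L}_{\mu_i}$ on a fixed neighborhood of $\boldsymbol{\theta}^*$ when $N<M$, or using continuity of $\nabla^{M}_{\boldsymbol{\theta}}\mathcal{L}_{\mu_i}$ at $\boldsymbol{\theta}^*$ when $N=M$ --- so the remainder is dominated by $\|\boldsymbol{\Delta}\|^N$ times a factor vanishing with $\mathrm{diam}(\Gamma)$, hence uniformly $o(\delta^{N/2})$. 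The remaining ingredients --- the reduction via \Cref{prop:tm_half}, the strong-convexity radius bound, and the elementary contraction inequality --- are routine.
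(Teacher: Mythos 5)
Your proposal is correct and follows essentially the same route as the paper's proof: reduce to pairwise transfer measures via \Cref{prop:tm_half} (dropping the factor $\tfrac12$), bound the radius of $\Gamma$ by $\nu$-strong convexity together with the vanishing gradient at the invariant optimum, Taylor-expand both losses about $\boldsymbol{\theta}^*$, and control the difference by the triangle inequality and the tensor Cauchy--Schwarz contraction $|T[\boldsymbol{\Delta}^{\otimes n}]|\le\|T\|_F\|\boldsymbol{\Delta}\|_2^n$. Two minor observations: your bound with $\|\cdot\|_F$ to the first power is exactly what the paper's own proof derives (the exponent $n$ on the Frobenius norm in the theorem statement appears to be a typo), and your treatment of making the Taylor remainder uniformly $o(\delta^{N/2})$ over the shrinking set $\Gamma$ is more careful than the paper's, which asserts this without comment.
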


The implication of \Cref{thm:moment_alignment_irm} is that the transfer measure is upper-bounded by the sum of the differences in higher-order derivatives across domains.\ Specifically, this suggests that aligning higher-order moments of the loss function promotes domain generalization.

Consider the special case when $N = 2$, we recover an upper bound similar to the Hessian alignment theorem in \citet{hemati_understanding_2023}, which we state next as a corollary.\looseness=-1

\begin{restatable}[\textbf{hessian alignment under IRM}]{corollary}{corhessianirm}\label{cor:hessian_alignment_irm}
Under the same setup as in \Cref{thm:moment_alignment_irm}, we have:
    \begin{equation}
        \mathrm{T}_\Gamma (\mathcal{S}\|\mathcal{T}) \leq \frac{1}{2}\delta \max_{i \neq j} \|\mathbf{H}_{\mu_j}\left(\boldsymbol{\theta}^*\right) -\mathbf{H}_{\mu_i} \left(\boldsymbol{\theta}^*\right)\|_F  + o(\delta)
    \end{equation}
where $\mathbf{H}(\boldsymbol{\theta})$ denotes the Hessian matrix of $\mathcal{L}(\boldsymbol{\theta})$.
\end{restatable}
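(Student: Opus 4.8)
The plan is to read off the statement as the $N=2$ instance of \Cref{thm:moment_alignment_irm}; once that theorem is established, no genuinely new argument is required, and the remaining work is just unwinding notation and tracking the leading constant.

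I would proceed in four small steps. (i) Observe that $N=2$ is admissible in \Cref{thm:moment_alignment_irm}: the corollary keeps the same hypotheses, so each $\mathcal{L}_{\mu_i}$ is $M$-times differentiable with $M\ge 2$ and the requirement $2\le N\le M$ holds for $N=2$. (ii) Set $N=2$ in the bound of \Cref{thm:moment_alignment_irm}: the sum $\sum_{n=2}^{N}$ collapses to its single $n=2$ term and the residual $o(\delta^{N/2})$ becomes $o(\delta)$. (iii) Translate tensor notation back to matrices: by the entrywise definition of $\nabla^n_{\boldsymbol{\theta}}\mathcal{L}(\boldsymbol{\theta})$ given in \Cref{thm:moment_alignment_irm}, the $2$nd-order tensor $\nabla^2_{\boldsymbol{\theta}}\mathcal{L}_\mu(\boldsymbol{\theta}^*)$ is exactly the Hessian $\mathbf{H}_\mu(\boldsymbol{\theta}^*)$, and the Frobenius norm of a $2$-tensor is the usual matrix Frobenius norm. (iv) Simplify constants ($\tfrac{1}{2!}=\tfrac12$, $\delta^{2/2}=\delta$) and factor $\tfrac12\delta$ out of the maximum, obtaining
\begin{equation*}
\mathrm{T}_\Gamma(\mathcal{S}\|\mathcal{T})\ \le\ \tfrac12\,\delta\,\max_{i\ne j}\big\|\mathbf{H}_{\mu_j}(\boldsymbol{\theta}^*)-\mathbf{H}_{\mu_i}(\boldsymbol{\theta}^*)\big\|_F\ +\ o(\delta),
\end{equation*}
which is the claim.

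Should a self-contained derivation be preferred over quoting \Cref{thm:moment_alignment_irm}, the same ingredients suffice, specialized to second order: reduce $\mathrm{T}_\Gamma(\mathcal{S}\|\mathcal{T})$ to pairwise terms via \Cref{prop:tm_half}; use \Cref{assumption:IRM} to identify $\mathcal{L}^*_{\mu_i}=\mathcal{L}_{\mu_i}(\boldsymbol{\theta}^*)$ and $\nabla_{\boldsymbol{\theta}}\mathcal{L}_{\mu_i}(\boldsymbol{\theta}^*)=0$; combine $\nu$-strong convexity with the vanishing gradient to get $\|\boldsymbol{\theta}-\boldsymbol{\theta}^*\|^2\le 2\delta_{\mathcal{S}}/\nu=\delta$ for every $\boldsymbol{\theta}\in\Gamma$; expand each $\mathcal{L}_{\mu_i}(\boldsymbol{\theta})-\mathcal{L}_{\mu_i}(\boldsymbol{\theta}^*)$ to second order about $\boldsymbol{\theta}^*$, subtract the two expansions, and bound the remaining quadratic form by $\|\mathbf{H}_{\mu_i}(\boldsymbol{\theta}^*)-\mathbf{H}_{\mu_j}(\boldsymbol{\theta}^*)\|_F\,\|\boldsymbol{\theta}-\boldsymbol{\theta}^*\|^2\le\delta\,\|\mathbf{H}_{\mu_i}(\boldsymbol{\theta}^*)-\mathbf{H}_{\mu_j}(\boldsymbol{\theta}^*)\|_F$ using $|\mathbf{v}^{\top}\mathbf{A}\mathbf{v}|\le\|\mathbf{A}\|_{\mathrm{op}}\|\mathbf{v}\|^2\le\|\mathbf{A}\|_F\|\mathbf{v}\|^2$.

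The proof carries no real conceptual obstacle beyond \Cref{thm:moment_alignment_irm} itself. The one place I would be careful in the write-up is the bookkeeping: checking that the $n=2$ summand of the general bound (together with whatever combination of steps the theorem's proof uses) reduces to precisely the leading constant $\tfrac12$ stated here, and --- in the self-contained route --- that the second-order Taylor remainder is genuinely $o(\delta)$ \emph{uniformly} over $\boldsymbol{\theta}\in\Gamma$ as $\delta\to0$ (which holds because $\|\boldsymbol{\theta}-\boldsymbol{\theta}^*\|\to0$ on $\Gamma$), so that the supremum over $\Gamma$ does not inflate the order of the residual.
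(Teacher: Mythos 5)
Your proposal is correct and matches the paper: the paper's written proof of this corollary is precisely your self-contained second route (restrict $\Gamma$ to the ball $\mathcal{F}_2$ via strong convexity and the vanishing gradient at $\boldsymbol{\theta}^*$, apply \Cref{prop:tm_half}, take second-order Taylor expansions of $\mathcal{L}_{\mu_i}$ and $\mathcal{L}_{\mu_j}$ about $\boldsymbol{\theta}^*$, bound the resulting quadratic form, and take suprema and the max over $i\neq j$), and your primary route of specializing \Cref{thm:moment_alignment_irm} to $N=2$ reaches the same constant $\tfrac12\delta$. The one bookkeeping caveat worth recording is that the theorem as printed carries the exponent $\|\cdot\|_F^n$, so a purely literal substitution at $n=2$ would produce a \emph{squared} Frobenius norm; the theorem's own proof in the appendix delivers the first power (and the corollary's proof actually establishes the bound in spectral norm before relaxing to $\|\cdot\|_F$), which is consistent with what you derive and with the stated corollary.
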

\Cref{cor:hessian_alignment_irm} implies that the transfer measure is upper-bounded by the Frobenius norm of the difference of the Hessian matrices across domains.\ This result aligns with the findings of \citet{hemati_understanding_2023}.\ However, unlike their result, \Cref{cor:hessian_alignment_irm} does not require knowledge of the target domain; instead, it relies on the assumption of $\mathcal{T}$ being a convex combination of the source domains. 

\vspace{-0.25em}
\subsection{Moment Alignment without IRM Assumption}\label{sec:theory_no_irm}
\vspace{-0.25em}
In practice, unless the features are explicitly trained or post-processed to satisfy the IRM assumption—such as through IRM training or Invariant Feature Subspace Recovery~\citep{wang_invariant-feature_2023}—invariant optimal predictors generally do not exist. In this section, we derive a bound on the transfer measure under this setting.

\begin{assumption}[\textbf{bounded gradients, approximate IRM}]\label{assumption:bounded_gradients}
There exists a constant $g > 0$ such that
    $
        \min _{\boldsymbol{\theta} \in \mathcal{H}} \max _{i \in[K]} \|\nabla_{\boldsymbol{\theta}} \mathcal{L}_{\mu_i} \left(\boldsymbol{\theta}\right)\|_2 \leq g$.
\end{assumption}
\begin{restatable}[\textbf{moment alignment}]{theorem}{thmnonirm}
    \label{thm:moment_alignment} Given $K$ source domains $\mathcal{S} = \left\{\mu_i\right\}_{i=1}^K$ and target domain $\mathcal{T} \in conv(\mu_1, \dots, \mu_k)$.\ Assume loss $\mathcal{L}_{\mu_i}$, $\forall i \in [K]$ are $\nu$-strongly convex and $M$-times differentiable w.r.t. the classifier head. {Let $\mathcal{P}(\{\mathcal{L}_{\mu_i}\}_{i= 1}^K):=\{\boldsymbol{\theta}: \max_{i \in [K]} (\boldsymbol{\theta^{\prime}} - \boldsymbol{\theta})^{\top}\nabla_{\boldsymbol{\theta}}\mathcal{L}_{\mu_i}(\boldsymbol{\theta}) \geq 0, \; \forall \boldsymbol{\theta^{\prime}} \in \Gamma\}$ (a set of weakly Pareto optimal points for the objectives $\{\mathcal{L}_{\mu_i}\}_{i= 1}^K$), and let $\boldsymbol{\theta}^* \in  \argmin _{\boldsymbol{\theta} \in \mathcal{P}} \max_{i \in [K]} \|\nabla {\mathcal{L}}_{\mu_i} \left(\boldsymbol{\theta}\right)\|_2$},
    $\Gamma 
    := \left\{\boldsymbol{\theta} \in \mathcal{H}: \max_{i \in [K]} \left(\mathcal{L}_{\mu_i} \left(\boldsymbol{\theta}\right) - \mathcal{L}_{\mu_i} \left(\boldsymbol{\theta}^*\right)\right) \leq \delta_{\mathcal{S}}\right\}$, and $\delta = \frac{2\delta_\mathcal{S}}{\nu}$, we have:
    \begin{equation}
        \begin{aligned}
            \mathrm{T}_\Gamma (\mathcal{S} \| \mathcal{T})
             \leq & \frac{1}{2} \max_{i \neq j} \bigg(\mathcal{L}_{\mu_j}\left(\boldsymbol{\theta}^*\right)- \mathcal{L}_{\mu_j}(\boldsymbol{\theta}^*_j)\ \\
            - & \left(\mathcal{L}_{\mu_i} \left(\boldsymbol{\theta}^*\right) - \mathcal{L}_{\mu_i} \left(\boldsymbol{\theta}^*_i\right)\right)\\
            + & \sum_{n = 1}^N \frac{1}{n !} \delta^{\frac{n}{2}} \left\|\mathtt{\nabla^n_{\boldsymbol{\theta}} \mathcal{L}_{\mu_j} \left(\boldsymbol{\theta}^*\right) }- \mathtt{\nabla^n_{\boldsymbol{\theta}} \mathcal{L}_{\mu_i} \left(\boldsymbol{\theta}^*\right) }\right\|_F \bigg) \\
            + & o(\delta^{\frac{N}{2}}) 
        \end{aligned}
    \end{equation}
    where $\boldsymbol{\theta}^*_i$ is the minimizer of $\mathcal{L}_{\mu_i} \left(\boldsymbol{\theta}\right)$. Furthermore, suppose~\Cref{assumption:bounded_gradients} holds with $g > 0$: 
    \begin{equation}
        \begin{aligned}
            \mathrm{T}_\Gamma (\mathcal{S} \| \mathcal{T})
              \leq  & \delta^{\frac{1}{2}} g  + \frac{1}{2} \max_{i \neq j} \bigg(\mathcal{L}_{\mu_j}\left(\boldsymbol{\theta}^*\right)- \mathcal{L}_{\mu_j}\left(\boldsymbol{\theta}^*_j\right) \\ 
              - & \left(\mathcal{L}_{\mu_i} \left(\boldsymbol{\theta}^*\right) - \mathcal{L}_{\mu_i} \left(\boldsymbol{\theta}^*_i\right) \right)   \\
               + & \sum_{n = 2}^N \frac{1}{n !} \delta^{\frac{n}{2}} \left\|\mathtt{\nabla^n_{\boldsymbol{\theta}} \mathcal{L}_{\mu_j} \left(\boldsymbol{\theta}^*\right) }- \mathtt{\nabla^n_{\boldsymbol{\theta}} \mathcal{L}_{\mu_i} \left(\boldsymbol{\theta}^*\right) }\right\|_F\bigg) \\
               + & o(\delta^{\frac{N}{2}}) 
        \end{aligned}
    \end{equation}
\end{restatable}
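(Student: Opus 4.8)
The plan is to bound $\mathrm{T}_\Gamma(\mathcal{S}\|\mathcal{T})$ by chaining three ingredients: (i) \Cref{prop:tm_half}, which reduces the problem to controlling $\max_{i\neq j}\mathrm{T}_\Gamma(\mu_j\|\mu_i)$ up to a factor $\tfrac12$; (ii) a diameter estimate on $\Gamma$ coming from $\nu$-strong convexity; and (iii) a Taylor expansion of each $\mathcal{L}_{\mu_i}$ around the common anchor point $\boldsymbol{\theta}^*$. First I would write, for fixed $i\neq j$ and any $\boldsymbol{\theta}\in\Gamma$,
\begin{equation*}
\mathcal{L}_{\mu_j}(\boldsymbol{\theta}) - \mathcal{L}_{\mu_j}^* - \bigl(\mathcal{L}_{\mu_i}(\boldsymbol{\theta}) - \mathcal{L}_{\mu_i}^*\bigr),
\end{equation*}
where $\mathcal{L}_{\mu_i}^* = \mathcal{L}_{\mu_i}(\boldsymbol{\theta}^*_i)$, and decompose it as $\bigl(\mathcal{L}_{\mu_j}(\boldsymbol{\theta}) - \mathcal{L}_{\mu_i}(\boldsymbol{\theta})\bigr) + \bigl(\mathcal{L}_{\mu_i}(\boldsymbol{\theta}^*_i) - \mathcal{L}_{\mu_j}(\boldsymbol{\theta}^*_j)\bigr)$. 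The second bracket is a constant in $\boldsymbol{\theta}$ and will reappear verbatim as the $\mathcal{L}_{\mu_j}(\boldsymbol{\theta}^*) - \mathcal{L}_{\mu_j}(\boldsymbol{\theta}^*_j) - (\mathcal{L}_{\mu_i}(\boldsymbol{\theta}^*) - \mathcal{L}_{\mu_i}(\boldsymbol{\theta}^*_i))$ term once I also Taylor-expand $\mathcal{L}_{\mu_i}(\boldsymbol{\theta})$ and $\mathcal{L}_{\mu_j}(\boldsymbol{\theta})$ about $\boldsymbol{\theta}^*$ — in fact the cleaner route is: write $\mathcal{L}_{\mu_j}(\boldsymbol{\theta}) - \mathcal{L}_{\mu_i}(\boldsymbol{\theta}) = \bigl[\mathcal{L}_{\mu_j}(\boldsymbol{\theta}^*) - \mathcal{L}_{\mu_i}(\boldsymbol{\theta}^*)\bigr] + \sum_{n=1}^N \tfrac{1}{n!}\bigl(\nabla^n_{\boldsymbol{\theta}}\mathcal{L}_{\mu_j}(\boldsymbol{\theta}^*) - \nabla^n_{\boldsymbol{\theta}}\mathcal{L}_{\mu_i}(\boldsymbol{\theta}^*)\bigr)[\boldsymbol{\theta}-\boldsymbol{\theta}^*]^{\otimes n} + o(\|\boldsymbol{\theta}-\boldsymbol{\theta}^*\|^N)$, and then add $\mathcal{L}_{\mu_i}(\boldsymbol{\theta}^*) - \mathcal{L}_{\mu_j}(\boldsymbol{\theta}^*) + \mathcal{L}_{\mu_j}(\boldsymbol{\theta}^*) - \mathcal{L}_{\mu_j}(\boldsymbol{\theta}^*_j) - (\mathcal{L}_{\mu_i}(\boldsymbol{\theta}^*) - \mathcal{L}_{\mu_i}(\boldsymbol{\theta}^*_i))$ to convert the $\mathcal{L}_{\mu_j}^*,\mathcal{L}_{\mu_i}^*$ normalizers into the stated loss-gap differences; the constant $\mathcal{L}_{\mu_j}(\boldsymbol{\theta}^*) - \mathcal{L}_{\mu_i}(\boldsymbol{\theta}^*)$ cancels.

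Next I would bound the diameter of $\Gamma$: by $\nu$-strong convexity of $\mathcal{L}_{\mu_i}$ at its minimizer $\boldsymbol{\theta}^*_i$ we have $\tfrac{\nu}{2}\|\boldsymbol{\theta}-\boldsymbol{\theta}^*_i\|^2 \le \mathcal{L}_{\mu_i}(\boldsymbol{\theta}) - \mathcal{L}_{\mu_i}(\boldsymbol{\theta}^*_i)$, but since the anchor $\boldsymbol{\theta}^*$ used in $\Gamma$ is \emph{not} the per-domain minimizer, I instead apply strong convexity relative to $\boldsymbol{\theta}^*$ in the form $\tfrac{\nu}{2}\|\boldsymbol{\theta}-\boldsymbol{\theta}^*\|^2 \le \mathcal{L}_{\mu_i}(\boldsymbol{\theta}) - \mathcal{L}_{\mu_i}(\boldsymbol{\theta}^*) - \langle\nabla\mathcal{L}_{\mu_i}(\boldsymbol{\theta}^*),\boldsymbol{\theta}-\boldsymbol{\theta}^*\rangle$ — and this is precisely where \Cref{assumption:bounded_gradients} enters in the second (sharper) bound. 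For the first bound, I would use that $\boldsymbol{\theta}^*$ lies in the weakly-Pareto set $\mathcal{P}$, so $\max_{i\in[K]}\langle\nabla\mathcal{L}_{\mu_i}(\boldsymbol{\theta}^*),\boldsymbol{\theta}-\boldsymbol{\theta}^*\rangle \ge 0$ for $\boldsymbol{\theta}\in\Gamma$; picking the maximizing index and combining with $\mathcal{L}_{\mu_i}(\boldsymbol{\theta}) - \mathcal{L}_{\mu_i}(\boldsymbol{\theta}^*)\le \delta_{\mathcal{S}} + (\mathcal{L}_{\mu_i}(\boldsymbol{\theta}^*)-\mathcal{L}_{\mu_i}(\boldsymbol{\theta}^*_i))$... more carefully, $\boldsymbol{\theta}\in\Gamma$ gives $\mathcal{L}_{\mu_i}(\boldsymbol{\theta})-\mathcal{L}_{\mu_i}(\boldsymbol{\theta}^*)\le\delta_{\mathcal{S}}$ directly, hence $\tfrac{\nu}{2}\|\boldsymbol{\theta}-\boldsymbol{\theta}^*\|^2 \le \delta_{\mathcal{S}} - \langle\nabla\mathcal{L}_{\mu_i}(\boldsymbol{\theta}^*),\boldsymbol{\theta}-\boldsymbol{\theta}^*\rangle \le \delta_{\mathcal{S}}$ for the Pareto-maximizing $i$, yielding $\|\boldsymbol{\theta}-\boldsymbol{\theta}^*\|^2 \le 2\delta_{\mathcal{S}}/\nu = \delta$. (This mirrors the diameter argument behind \Cref{thm:moment_alignment_irm}, where $\boldsymbol{\theta}^*$ is the exact invariant minimizer and the gradient term vanishes.) For the second bound, the gradient inner product is instead bounded by Cauchy–Schwarz as $|\langle\nabla\mathcal{L}_{\mu_i}(\boldsymbol{\theta}^*),\boldsymbol{\theta}-\boldsymbol{\theta}^*\rangle| \le g\|\boldsymbol{\theta}-\boldsymbol{\theta}^*\|$ using $\|\nabla\mathcal{L}_{\mu_i}(\boldsymbol{\theta}^*)\|_2\le g$; this $\delta^{1/2}g$ contribution, when the $n=1$ term of the Taylor sum is pulled out and bounded by $\delta^{1/2}\|\nabla\mathcal{L}_{\mu_j}(\boldsymbol{\theta}^*)-\nabla\mathcal{L}_{\mu_i}(\boldsymbol{\theta}^*)\|_2 \le 2\delta^{1/2}g$... wait — the factor $\tfrac12$ from \Cref{prop:tm_half} turns $2\delta^{1/2}g$ into $\delta^{1/2}g$, which is exactly the stated leading term, so the $n=1$ term is absorbed and the sum in the second bound starts at $n=2$.

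Finally, for each multilinear Taylor term I would bound $\bigl|(\nabla^n\mathcal{L}_{\mu_j}(\boldsymbol{\theta}^*) - \nabla^n\mathcal{L}_{\mu_i}(\boldsymbol{\theta}^*))[\boldsymbol{\theta}-\boldsymbol{\theta}^*]^{\otimes n}\bigr| \le \|\nabla^n\mathcal{L}_{\mu_j}(\boldsymbol{\theta}^*) - \nabla^n\mathcal{L}_{\mu_i}(\boldsymbol{\theta}^*)\|_F \cdot \|\boldsymbol{\theta}-\boldsymbol{\theta}^*\|_2^n \le \|\nabla^n\mathcal{L}_{\mu_j}(\boldsymbol{\theta}^*) - \nabla^n\mathcal{L}_{\mu_i}(\boldsymbol{\theta}^*)\|_F \,\delta^{n/2}$, take the supremum over $\boldsymbol{\theta}\in\Gamma$, then the $\max_{i\neq j}$, and multiply by $\tfrac12$ via \Cref{prop:tm_half}; the remainder $o(\|\boldsymbol{\theta}-\boldsymbol{\theta}^*\|^N) = o(\delta^{N/2})$ uniformly on the compact set $\Gamma$. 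I expect the main obstacle to be the bookkeeping that converts the transfer-measure normalizers $\mathcal{L}_{\mu_i}^* = \mathcal{L}_{\mu_i}(\boldsymbol{\theta}^*_i)$ into the explicit gap terms $\mathcal{L}_{\mu_j}(\boldsymbol{\theta}^*)-\mathcal{L}_{\mu_j}(\boldsymbol{\theta}^*_j) - (\mathcal{L}_{\mu_i}(\boldsymbol{\theta}^*)-\mathcal{L}_{\mu_i}(\boldsymbol{\theta}^*_i))$ appearing in the statement while keeping the $\sup_{h\in\Gamma}$ and $\max_{i\neq j}$ in the right order — in particular, verifying that the $\boldsymbol{\theta}$-independent pieces can be pulled outside the supremum cleanly, and that pulling out the $n=1$ term (and only that term) in the $g$-bounded case correctly produces the $\delta^{1/2}g$ prefactor after the factor-$\tfrac12$ reduction. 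A secondary subtlety is justifying that $\boldsymbol{\theta}^*\in\mathcal{P}$ can legitimately be used as the Taylor anchor simultaneously for all pairs and that the Pareto inequality holds for all $\boldsymbol{\theta}^{\prime}\in\Gamma$ including the specific $\boldsymbol{\theta}$ attaining the supremum.
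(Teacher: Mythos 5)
Your proposal follows essentially the same route as the paper's proof: reduce via \Cref{prop:tm_half} to pairwise transfer measures, use $\nu$-strong convexity at $\boldsymbol{\theta}^*$ together with the weak-Pareto property ($\max_i \langle\nabla\mathcal{L}_{\mu_i}(\boldsymbol{\theta}^*),\boldsymbol{\theta}-\boldsymbol{\theta}^*\rangle\ge 0$) to get $\Gamma\subseteq\{\|\boldsymbol{\theta}-\boldsymbol{\theta}^*\|_2^2\le\delta\}$, Taylor-expand around $\boldsymbol{\theta}^*$ with Frobenius-norm bounds on the multilinear terms, and bound the $n=1$ term by $2\delta^{1/2}g$ via the triangle inequality so that the outer factor $\tfrac12$ yields $\delta^{1/2}g$. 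Your self-correction lands on exactly the paper's mechanism (the $g$ bound enters through the first-order Taylor term, not the diameter estimate), and your bookkeeping of the normalizers $\mathcal{L}_{\mu_i}^*=\mathcal{L}_{\mu_i}(\boldsymbol{\theta}^*_i)$ is an equivalent rearrangement of the paper's.
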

As a special case, when $N = 2$, we have the following guarantee on gradient and Hessian alignment.

\begin{restatable}[\textbf{hessian alignment}]{corollary}{corhessiannoirm}\label{cor:hessian_alignment_non_irm}
Under the same setup as in \Cref{thm:moment_alignment}, we have:
    \begin{equation}
    \begin{aligned}
            \mathrm{T}_\Gamma (\mathcal{S}\|\mathcal{T}) \leq & \frac{1}{2} \max_{i \neq j}  \bigg(\mathcal{L}_{\mu_j}\left(\boldsymbol{\theta}^*\right)-\mathcal{L}_{\mu_j}\left(\boldsymbol{\theta}^*_j\right)\\ 
            - &\left(\mathcal{L}_{\mu_i} \left(\boldsymbol{\theta}^*\right)
            -  \mathcal{L}_{\mu_i} \left(\boldsymbol{\theta}^*_i\right) \right)\\
            + & \delta^{\frac{1}{2}} \left\|\nabla_{\boldsymbol{\theta}}\mathcal{L}_{\mu_j}\left(\boldsymbol{\theta}^*\right) - \nabla_{\boldsymbol{\theta}}\mathcal{L}_{\mu_j}\left(\boldsymbol{\theta}^*\right)\right\|_2  \\
            + & \frac{1}{2}\delta \left\|\mathbf{H}_{\mu_j}\left(\boldsymbol{\theta}^*\right) - \mathbf{H}_{\mu_i} \left(\boldsymbol{\theta}^*\right)\right\|_F \bigg)+ o(\delta)
    \end{aligned}    
    \end{equation}
    where $\boldsymbol{\theta}^*_i$ is the minimizer of $\mathcal{L}_{\mu_i} \left(\boldsymbol{\theta}\right)$. 

    Furthermore, suppose~\Cref{assumption:bounded_gradients} holds with $g > 0$: 
    \begin{equation}
    \begin{aligned}
        \mathrm{T}_\Gamma (\mathcal{S}\|\mathcal{T}) \leq 
        & \delta^{\frac{1}{2}} g + \frac{1}{2}  \max_{i \neq j} \bigg(
            \mathcal{L}_{\mu_j}\left(\boldsymbol{\theta}^*\right)
            - \mathcal{L}_{\mu_j}\left(\boldsymbol{\theta}^*_j\right) \\
            - & \left(\mathcal{L}_{\mu_i} \left(\boldsymbol{\theta}^*\right)
            - \mathcal{L}_{\mu_i} \left(\boldsymbol{\theta}^*_i\right)\right)  \\
            + & \frac{1}{2} \delta \left\|\mathbf{H}_{\mu_j}\left(\boldsymbol{\theta}^*\right)
            - \mathbf{H}_{\mu_i} \left(\boldsymbol{\theta}^*\right)\right\|_F
        \bigg) + o(\delta)
    \end{aligned}
\end{equation}

\end{restatable}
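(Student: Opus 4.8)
The plan is to obtain \Cref{cor:hessian_alignment_non_irm} directly as the $N = 2$ instance of \Cref{thm:moment_alignment}; no fresh analysis is needed, only specialization and notational simplification. First I would verify that the substitution $N = 2$ is admissible: \Cref{thm:moment_alignment} assumes each loss is $M$-times differentiable, and since the corollary refers to Hessians we are implicitly in the regime $M \ge 2$, so the constraint $2 \le N \le M$ is met.

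For the first inequality I would set $N = 2$ in the first bound of \Cref{thm:moment_alignment}. The sum $\sum_{n=1}^{N} \frac{1}{n!}\,\delta^{n/2}\,\|\nabla^n_{\boldsymbol{\theta}}\mathcal{L}_{\mu_j}(\boldsymbol{\theta}^*) - \nabla^n_{\boldsymbol{\theta}}\mathcal{L}_{\mu_i}(\boldsymbol{\theta}^*)\|_F$ then collapses to two terms. For $n = 1$, the first-order derivative ``tensor'' is a vector, so its Frobenius norm equals the Euclidean norm, contributing $\delta^{1/2}\,\|\nabla_{\boldsymbol{\theta}}\mathcal{L}_{\mu_j}(\boldsymbol{\theta}^*) - \nabla_{\boldsymbol{\theta}}\mathcal{L}_{\mu_i}(\boldsymbol{\theta}^*)\|_2$. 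For $n = 2$, the second-order derivative tensor is precisely the Hessian $\mathbf{H}$, contributing $\frac{1}{2!}\,\delta\,\|\mathbf{H}_{\mu_j}(\boldsymbol{\theta}^*) - \mathbf{H}_{\mu_i}(\boldsymbol{\theta}^*)\|_F = \frac{1}{2}\delta\,\|\mathbf{H}_{\mu_j}(\boldsymbol{\theta}^*) - \mathbf{H}_{\mu_i}(\boldsymbol{\theta}^*)\|_F$. Since $o(\delta^{N/2}) = o(\delta)$ at $N = 2$, reassembling these inside $\frac{1}{2}\max_{i\ne j}(\cdot)$ and carrying the loss-gap terms $\mathcal{L}_{\mu_j}(\boldsymbol{\theta}^*) - \mathcal{L}_{\mu_j}(\boldsymbol{\theta}^*_j) - (\mathcal{L}_{\mu_i}(\boldsymbol{\theta}^*) - \mathcal{L}_{\mu_i}(\boldsymbol{\theta}^*_i))$ over unchanged yields the stated bound.

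For the second inequality I would repeat the identical specialization on the strengthened bound of \Cref{thm:moment_alignment} that holds under \Cref{assumption:bounded_gradients}: the additive $\delta^{1/2} g$ term and the loss-gap terms are copied verbatim, the residual tail $\sum_{n=2}^{N}$ with $N = 2$ contributes only the single Hessian-difference term $\frac{1}{2}\delta\,\|\mathbf{H}_{\mu_j}(\boldsymbol{\theta}^*) - \mathbf{H}_{\mu_i}(\boldsymbol{\theta}^*)\|_F$, and the error is again $o(\delta)$. The only thing demanding care is purely bookkeeping---matching the tensor Frobenius norms to the familiar vector and matrix norms, the factorial $\frac{1}{2!} = \frac{1}{2}$, and the little-$o$ order---so I anticipate no genuine obstacle: all the substantive work (the Taylor expansion of each $\mathcal{L}_{\mu_i}$ about $\boldsymbol{\theta}^*$, the control of the diameter of $\Gamma$ via $\nu$-strong convexity in terms of $\delta = 2\delta_{\mathcal{S}}/\nu$, and the mixture reduction of \Cref{prop:tm_half}) is already discharged inside the proof of \Cref{thm:moment_alignment}.
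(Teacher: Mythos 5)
Your proposal is correct and matches the paper's reasoning in substance: the corollary is exactly the $N=2$ instance of \Cref{thm:moment_alignment}, and your bookkeeping of the $n=1$ and $n=2$ terms, the factor $\tfrac{1}{2!}$, and $o(\delta^{N/2})=o(\delta)$ is right (the $\mu_j$-minus-$\mu_j$ gradient difference in the printed corollary is a typo for $\mu_j$ minus $\mu_i$, as you implicitly read it). The paper's appendix proof does not cite the theorem but instead re-runs the same Taylor-expansion argument explicitly at second order (even obtaining the slightly tighter spectral norm on the Hessian difference, which implies the stated Frobenius bound), so there is no substantive difference between your route and theirs.
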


To summarize, under the IRM assumption, the transfer measure is bounded by the differences in higher-order derivatives (second order and above) across domains. Conversely, when the IRM assumption does not hold, the transfer measure is bounded by the maximum optimality gaps and the gradient norms, in addition to the differences in higher-order derivatives across domains.

The implication of \Cref{thm:moment_alignment} is the necessity of minimizing the gradient norm, as the upper bound depends on it regardless of the differences in higher-order derivatives. Fortunately, this bound can be reduced by incorporating gradient norm minimization—a strategy already embedded in many existing methods, as we will see later.

The results above rely on the assumption that the loss is strongly convex w.r.t. the classifier head, which is satisfied by widely used losses with L2 regularization, such as cross-entropy loss or mean-square error.

\vspace{-0.5em}
\section{Moment Alignment: A Unifying Framework}\label{sec:unif}
\vspace{-0.5em}
While various approaches to DG exist, they appear largely disconnected, and, to the best of our knowledge, no prior work has explicitly drawn connections between them. In this section, we unify IRM, gradient matching, and Hessian matching under the CMA framework. We further establish a duality between feature learning space and classifier fitting.
\vspace{-0.25em}
\subsection{IRM as Moment Alignment}\label{thm:unif_irm}
\vspace{-0.25em}
When the features are fixed and satisfy the IRM assumption, minimizing the IRMv1 objective~\citep{arjovsky_invariant_2020}
\begin{equation}\label{eq:irmv1_loss}
    \begin{aligned}
        \mathcal{L}_{\text{IRM}} := \mathcal{L}_{\text{ERM}} +  \lambda \frac{1}{K}\sum_{i = 1}^K \| \nabla_{\boldsymbol{\theta}} \mathcal{L}_{\mu_i}\left(\boldsymbol{\theta}\right) \|_2^2,
    \end{aligned}
    \tag{IRMv1}
\end{equation}
 recovers such invariant optimal predictor, and~\Cref{thm:moment_alignment_irm} provides an upper bound on the target error. On the other hand, when the fixed features do not satisfy the IRM assumption,
 The IRMv1 penalty seeks a parameter $\boldsymbol{\theta}$ whose average gradient norm is small, thereby minimizing $g$ in the upper bound in \Cref{thm:moment_alignment}.

 \vspace{-0.5em}
\subsection{Gradient and Hessian Matching as Moment Alignment}
\vspace{-0.5em}
Their general gradient and Hessian matching objectives are either the following or their variants:
\begin{equation}\label{eq:gm_loss}
\small
    \begin{aligned}
        \mathcal{L}_{\text{GM}} := \mathcal{L}_{\text{ERM}} +\lambda \frac{1}{K} \sum_{i = 1}^K \left\|\nabla_{\boldsymbol{\theta}} \mathcal{L}_{\mu_i} \left(\boldsymbol{\theta}\right)- \overline{\nabla_{\boldsymbol{\theta}} \mathcal{L}\left(\boldsymbol{\theta}\right)}\right\|^2_2
    \end{aligned}
    \tag{GM}
\end{equation}
\begin{equation}\label{eq:hm_loss} \tag{HM}
\small
    \mathcal{L}_{\text{HM}} := \mathcal{L}_{\text{ERM}} +  \lambda \frac{1}{K}\sum_{i = 1}^K \| \mathbf{H}_{\mu_i}\left(\boldsymbol{\theta}\right) - \overline{\mathbf{H}\left(\boldsymbol{\theta}\right)}\|_F^2
\end{equation}
By their definitions, gradient matching and Hessian matching are special cases of moment alignment, reducing the first-order and second-order terms, respectively, in the upper bound of the transfer measure. Notably, when the IRM assumption holds, the penalty in \Cref{eq:gm_loss} will favor an invariant optimal predictor.

From the results in \Cref{sec:theory}, aligning both gradients and Hessians improves DG over aligning only one of them. This explains the success of HGP and Hutchinson~\citep{hemati_understanding_2023} over methods that focus on gradient matching~\citep{shi_gradient_2021, parascandolo_learning_2020, koyama_out--distribution_2020} or Hessian matching~\citep{rame_fishr_2022, sun_deep_2016}.
\vspace{-0.5em}
\subsection{Feature Matching as Moment Alignment}\label{sec:unif_feature}
\vspace{-0.5em}
So far, we have discussed moment alignment under fixed features. Next, we establish a connection between the derivatives of the classifier and moments of features, where the classifier is assumed to be the last layer of an NN, i.e., linear predictor over the learned features.

For a softmax classifier, the prediction is a function of $\mathbf{x}^{\top} \boldsymbol{\theta} $, where $\mathbf{x}$ is a feature vector and $\boldsymbol{\theta}$ is the classifier. Therefore, $\mathtt{\nabla^n_{\boldsymbol{\theta}} \ell(\boldsymbol{\theta})} $ involves the $n^{\text{th}}$ moment of $\mathbf{x}$, and by matching the $n^{\text{th}}$ order derivatives w.r.t. the classifier head, we are matching the $n^{\text{th}}$ moment of $\mathbf{x}$ across domains.  Another view of this duality is that by the symmetry between $\mathbf{x}$ and  $\boldsymbol{\theta}$, we can derive analogously results in \Cref{sec:theory} with optimization target $\mathbf{x}$.

IRM~\citep{ahuja_invariant_2020} and CORAL~\citep{sun_deep_2016} are two concrete examples of this feature-parameter duality. Going from the feature space to the parameter space, CORAL \citep{sun_deep_2016} matches the feature covariance, namely the second moment of $\mathbf{x}$. Thus, CORAL is approximately Hessian matching in the parameter space. We refer interested readers to Proposition 4 in \citet{hemati_understanding_2023} for discussion on the attributes aligned by CORAL. Conversely, starting from the parameter space and moving to the feature space, the penalty term in \Cref{eq:irmv1_loss} regularizes the gradient w.r.t. the classifier, corresponding to the first-moment alignment in the feature space, i.e., aligning the features themselves.

\vspace{-0.5em}
\section{Closed-Form Moment Alignment}\label{sec:cma_algo}
\vspace{-0.5em}
Motivated by the theory of moment alignment, we introduce \textbf{C}losed-Form \textbf{M}oment \textbf{A}lignment (CMA), a DG algorithm that minimizes the following objective: \begin{equation}\label{eq:cma_loss} \tag{CMA}
\begin{aligned}
    \mathcal{L}_{\text{CMA}} = &\frac{1}{K}\sum_{i=1}^K \mathcal{L}_{\mu_i} + \alpha \| \nabla_{\boldsymbol{\theta}} \mathcal{L}_{\mu_i} - \overline{\nabla_{\boldsymbol{\theta}} \mathcal{L}}\|_2^2 \\
    + &\beta \| \mathbf{H}_{\mu_i} - \overline{\mathbf{H}}\|_F^2,
\end{aligned}
\end{equation} where $\overline{\nabla_{\boldsymbol{\theta}} \mathcal{L}} = \frac{1}{K} \sum_{i=1}^K \nabla_{\boldsymbol{\theta}} \mathcal{L}_{\mu_i}$ and $\overline{\mathbf{H}} = \frac{1}{K} \sum_{i=1}^K \mathbf{H}_{\mu_i}$ are the average gradient and Hessian. Similar to HGP and Hutchinson~\citep{hemati_understanding_2023}, CMA aligns the gradients and Hessians across domains, but we compute the derivatives in closed form. In \Cref{app:cma_conection}, we connect CMA and other DG algorithms.

Gradient and Hessian matching~\citep{koyama_out--distribution_2020, shi_gradient_2021,hemati_understanding_2023, rame_fishr_2022}, despite their theoretical and empirical success, often incur significant computations due to multiple backpropagations for a single update. CMA bypasses this limitation by analytically computing gradient and Hessian penalty.

\vspace{-1em}
\subsection{Closed-Form Gradient and Hessian}
\vspace{-1em}
CMA computes the gradient and Hessian penalty without requiring additional backpropagations.\ Leveraging closed-form solutions for the gradients and Hessians of the cross-entropy loss w.r.t. a linear classifier, CMA reduces computational overhead.\ The derivations are provided in \Cref{app:ce_grad_hess_derivation}.
\vspace{-1em}
\subsection{Memory-Efficient Hessian Matching}
\vspace{-1em}
The Hessian of the cross-entropy loss for a single feature vector $\mathbf{x}$ w.r.t. a softmax classifier is:
\begin{equation*}
    \mathbf{H} = \left( \mathrm{diag}(\mathbf{p}) - \mathbf{p} \mathbf{p}^\top \right) \otimes \left( \mathbf{x} \mathbf{x}^\top \right),
\end{equation*}
where $\mathbf{p} \in \mathbb{R}^{C}$ is a vector of predicted probabilities, $\mathrm{diag}(\mathbf{p}) \in \mathbb{R}^{C \times C}$ is the diagonal matrix with elements of $\mathbf{p}$, $\mathbf{x} \mathbf{x}^\top \in \mathbb{R}^{d \times d}$, and $\otimes$ is the Kronecker product. 

The dimension of $\mathbf{H}$ is quadratic in the number of classes $C$ and feature dimension $d$, which could be memory-prohibitive under many features or classes. To mitigate this issue, we use properties of the Frobenius norm to avoid storing the full $dC \times dC$ matrix. Instead, we compute:
\begin{equation}\label{hession_norm}
    \|\mathbf{H}\|^2_F = 
    \operatorname{tr}\left( \mathrm{diag}(\mathbf{p}) - \mathbf{p} \mathbf{p}^\top \right) \operatorname{tr}\left( \mathbf{x} \mathbf{x}^\top \right)
\end{equation}
which only requires storing a $C \times C$ and a $d \times d$ matrix. 

\vspace{-1em}
\subsection{Trade-offs in Hessian Computation}
\vspace{-1em}

Our code offers two versions of Hessian computation, each with its trade-offs.\ The first version directly computes the Frobenius norm of the Kronecker product, which is faster but requires more memory.\ The second version avoids storing the full Kronecker product matrix, reducing memory usage, but requires computing traces for all pairs of Hessians.\ We lay out the derivation in \Cref{app:eff_hessian}.
\vspace{-0.5em}

\begin{table*}[t]
    \centering
    \caption{Test accuracy (\%) with standard error over three datasets. Each experiment is repeated over 5 seeds.}
\vspace{-0.5em}
    \centering
    \resizebox{0.8\textwidth}{!}{%
        \begin{tabular}{@{}lcccccc@{}}
            \toprule
            \textbf{Method} & \multicolumn{2}{c}{\textbf{Waterbirds (CLIP ViT-B/32)}} & \multicolumn{2}{c}{\textbf{CelebA (CLIP ViT-B/32)}} & \multicolumn{2}{c}{\textbf{MultiNLI (BERT)}}                                                                         \\
            \cmidrule(l){2-3} \cmidrule(l){4-5} \cmidrule(l){6-7}
                            & \textbf{Average}                                        & \textbf{Worst-Group}                                & \textbf{Average}                             & \textbf{Worst-Group}  & \textbf{Average}      & \textbf{Worst-Group}  \\
            \midrule
            ERM & 89.52 ± 0.10 & 84.58 ± 0.20 & 78.76 ± 0.03 & 72.22 ± 0.39 & 81.15 ± 0.30 & 68.82 ± 0.64 \\ 
            CORAL & 89.67 ± 0.14 & 84.85 ± 0.22 & \textbf{78.81 ± 0.03} & 73.00 ± 0.22 & 81.22 ± 0.21 & 68.71 ± 0.52 \\
            Fishr & 89.79 ± 0.10 & 86.08 ± 0.10 & 73.95 ± 0.86 & 69.63 ± 1.20 & \textbf{81.35 ± 0.16} & \textbf{71.55 ± 1.20} \\  
            CMA & \textbf{90.11 ± 0.17} & \textbf{86.16 ± 0.29} & 77.87 ± 0.04 & \textbf{74.16 ± 0.10} & 81.30 ± 0.25 & 69.72 ± 0.66 \\
        \bottomrule
    \end{tabular}\label{tab:real-datasets-transposed}}
\vspace{-1em}
\end{table*}

\begin{table*}[t]
    \centering
    \caption{DomainBed results with test-domain validation model selection.}
    \vspace{-0.5em}
    \centering
    \resizebox{0.8\textwidth}{!}{%
\begin{tabular}{lcccccc}
\toprule
\textbf{Algorithm}        & \textbf{ColoredMNIST}     & \textbf{RotatedMNIST}     & \textbf{VLCS}             & \textbf{PACS}             & \textbf{TerraIncognita}   & \textbf{Avg}              \\
\midrule
ERM                       & 54.5 $\pm$ 0.2            & 97.8 $\pm$ 0.1            & 76.9 $\pm$ 0.3            & 80.2 $\pm$ 0.5            & 36.5 $\pm$ 0.5            & 69.2                      \\
CORAL                     & 55.7 $\pm$ 0.5            & \textbf{98.0 $\pm$ 0.0}            & 75.9 $\pm$ 0.2            & 80.2 $\pm$ 0.2            & 33.6 $\pm$ 0.5            & 68.7                      \\
Fishr                     & 62.0 $\pm$ 1.7            & 97.9 $\pm$ 0.0            & \textbf{77.5 $\pm$ 0.5}            & 81.5 $\pm$ 0.2            & 37.3 $\pm$ 1.1            & 71.2                      \\
CMA          & \textbf{62.5 $\pm$ 0.9}            & 97.9 $\pm$ 0.1            & 77.4 $\pm$ 0.8            & \textbf{81.6 $\pm$ 0.3}            & \textbf{38.4 $\pm$ 1.2}            & \textbf{71.5}                      \\
\bottomrule
\end{tabular}
}
\label{tab:domainbed}
\vspace{-1em}
\end{table*}

\vspace{-0.5em}
\section{Experiments}\label{sec:experiment}
\vspace{-0.5em}
We validate CMA through both quantitative and qualitative analyses. First, we describe the experimental setup, including dataset details and model training procedures. We then present quantitative results, evaluating CMA’s performance under the \hyperref[assumption:IRM]{IRM assumption} and scenarios where it does not hold. Finally, we conduct qualitative analyses to better understand CMA’s effect on worst-group performance and feature moment alignment.

\vspace{-1em}
\subsection{Implementation}\label{sec:implementation}
\vspace{-1em}
\textbf{Linear Probing (IRM)}~~We evaluate liner probing performance on Waterbirds~\citep{sagawa_distributionally_2020}, CelebA~\citep{liu_deep_2015}, and MultiNLI~\citep{williams_broad-coverage_2018}. To enforce the \hyperref[assumption:IRM]{IRM assumption}, we apply the Invariant-feature Subspace Recovery (ISR) algorithm~\citep{wang_provable_2022, wang_invariant-feature_2023}, which provably yields features that induce an optimal invariant predictor. For Waterbirds and CelebA, we extract features from a CLIP-pretrained Vision Transformer (ViT-B/32). For MultiNLI, we fine-tune a BERT model using the code and hyperparameters in \citet{sagawa_distributionally_2020}, then extract features from the fine-tuned model. These features are transformed using the ISR-mean algorithm~\citep{wang_provable_2022,wang_invariant-feature_2023}. Finally, we train a linear classifier using ERM, Fishr~\citep{rame_fishr_2022}, and CMA objectives.

\textbf{Full Fine-Tuning (Non-IRM)}~~We run end-to-end fine-tuning on a subset of DomainBed~\citep{gulrajani_search_2020}, applying gradient and Hessian regularization from \Cref{eq:cma_loss} to the classifier head while back-propagating the loss through both the linear classifier and the encoder. Specifically, penalizing large gradient variance aligns gradients across domains, while the ERM loss drives gradients toward zero. The two mechanisms promote a small gradient norm for each domain, aligning with the theory in \Cref{sec:theory_no_irm} and \Cref{sec:unif_feature}. Given recent empirical evidence supporting strong DG capabilities of Vision Transformers~\citep{ghosal_are_2022, zheng_prompt_2022, sultana_self-distilled_2022}, we have selected ViT-S as the backbone for DomainBed experiments. Using the DomainBed codebase~\citep{gulrajani_search_2020}, we compare ERM~\citep{vapnik_overview_1999}, CORAL~\citep{sun_deep_2016}, Fishr~\citep{rame_fishr_2022}, and CMA by fine-tuning small Vision Transformers~\citep{steiner_how_2022, dosovitskiy_image_2021, wightman_pytorch_2019}. For more implementation details, please refer to \Cref{app:irm_exp} and \Cref{app:non_irm_exp}.
\vspace{-1em}
\subsection{Quantitative Results}
\vspace{-1em}

Our goal is not to claim that CMA surpassed existing algorithms but to demonstrate that our framework encompasses gradient matching (e.g., \citet{koyama_out--distribution_2020}) and Hessian matching (e.g., \citet{sun_deep_2016,rame_fishr_2022,hemati_understanding_2023}). To this end, our experimental results confirm that CMA achieves performance comparable to state-of-the-art moment matching methods.

\textbf{Linear Probing (IRM)} From \Cref{tab:real-datasets-transposed}, we observe that CMA consistently outperforms ERM on worst-group accuracy while maintaining comparable average accuracy across all datasets. Compared to Fishr, CMA achieves higher worst-group and average accuracy on two out of three datasets. In contrast, Fishr's performance varies, underperforming ERM on CelebA. Compared to CORAL, CMA achieves better worst-group performance across all datasets, while maintaining similar average accuracy. 

\textbf{Full Fine-Tuning (Non-IRM)} We follow \citet{rame_fishr_2022} to employ the test-domain model selection method, where the validation set is a holdout set from the test domain. As shown in \Cref{tab:domainbed}, CMA achieves comparable performance to Fishr, with both methods consistently outperforming ERM. This result supports the performance guarantee in \Cref{cor:hessian_alignment_non_irm} and validates our unified framework. Please refer to \Cref{app:per_data_result} for per-dataset and training-domain validation performance. 
\begin{figure}[t]
    \centering
    \includegraphics[width=\linewidth]{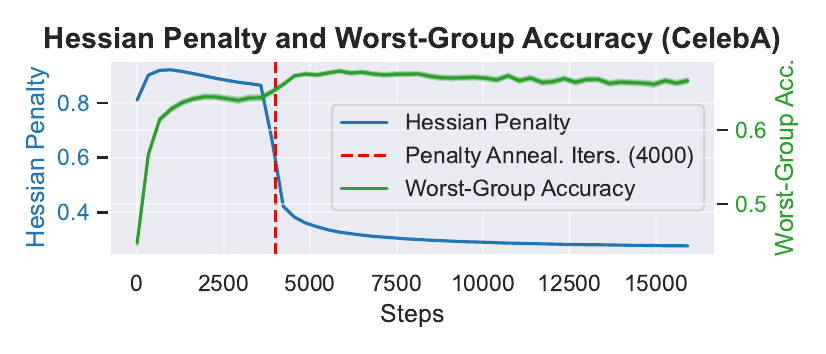}
    \vspace{-2.5em}
    \caption{Hessian Penalty and worst-case accuracy on CelebA. Both curves represent the mean values, with shaded areas indicating $\pm$ one standard deviation over five runs.}
    \label{fig:hess_acc}
    \vspace{-1em}
\end{figure}
\vspace{-1em}
\subsection{Qualitative Results}\label{sec:exp_qual}
\vspace{-1em}
\textbf{Effect of Hessian Matching} 
We analyze CMA’s training progression and its impact on worst-group performance by plotting the Hessian loss:  
\begin{equation*}
    \frac{\beta}{K} \sum_{i = 1}^K \|\mathbf{H}_{\mu_i}(\boldsymbol{\theta}) - \overline{\mathbf{H}(\boldsymbol{\theta})}\|_F^2
\end{equation*}
for linear probing on the CelebA dataset, with the same hyperparameters as those reported for accuracy in \Cref{tab:real-datasets-transposed} ($\alpha = 5000$, $\beta = 100$, $\text{penalty annealing iterations} = 4000$).
As shown in \Cref{fig:hess_acc}, near step 4000, when the gradient and Hessian matching terms take effect, there is a sharp drop in Hessian penalty, accompanied by a noticeable increase in worst-case accuracy, aligning with our theory that aligning Hessians across domains improves worst-case performance.

\begin{figure}[t]
    \centering
    \includegraphics[width=0.9\linewidth]{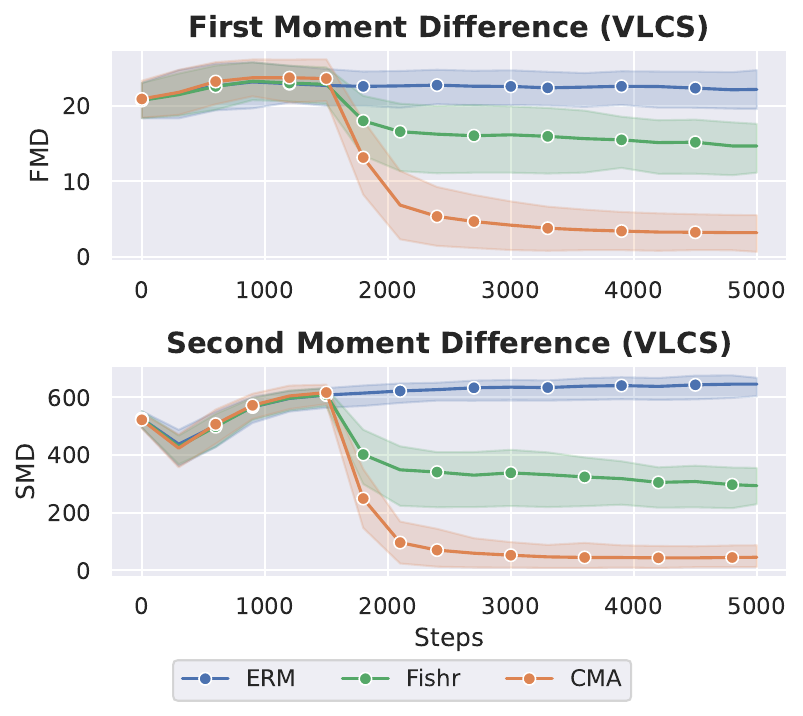}
    \caption{Comparison of first and second-moment differences across test domains for the VLCS dataset. The plots show the progression of moment differences over training steps for ERM, Fishr, and CMA. ERM fails to align the feature moments while CMA achieves the most effective alignment. The shaded regions represent one standard deviation above and below the mean across test domains.}
    \label{fig:cmnist_mom}
\vspace{-1em}
\end{figure}

\textbf{Feature Moment Matching}~~As discussed in \Cref{sec:unif_feature}, we illustrate the effect of CMA in matching the moments of features across domains. \Cref{fig:cmnist_mom} presents the moment differences between domains on VLCS dataset, where we average over all test domains. While ERM shows significant discrepancies in feature moments between domains, both Fishr and CMA successfully reduce these differences. Notably, CMA is more effective in reducing both first and second-moment disparities.
\vspace{-1em}
\subsection{Runtime and Memory Comparison}\label{app:time_mem}
\vspace{-1em}
We report the average time per step (in seconds) and memory usage (in GB) for each (algorithm, dataset) pair in  \Cref{tab:wallclock} and \Cref{tab:mem}. It is important to note that, in addition to the algorithms' efficiency, the wall-clock time also depends on the hardware status at the time of training. We include additional comparisons of two versions of CMA, HGP, and Hutchinson algorithms, where ``CMA (Speed)'' uses the time-efficient Hessian computation, while ``CMA (Memory)'' uses the memory-efficient Hessian computation.

Among the methods compared, only CMA and Hutchinson compute full Hessian matrices. While CMA is inherently slower than Fishr, CORAL, and HGP, which rely on diagonal approximations of the Hessian, it remains more time-efficient than Hutchinson's.

To highlight the scalability of ``CMA (Memory)'', we run small-scale experiments on OfficeHome, a dataset with 65 classes. In this setting, ``CMA (Speed)'' requires more than 75\,GB of memory and could not run on a single GPU, whereas ``CMA (Memory)'' completed successfully with peak usage under 13.7\,GB.

\begin{table*}[ht]
\centering
\caption{Wall-clock time across datasets (in seconds). Algorithms are grouped by the type of moment matching. For each dataset, we bold the most time-efficient algorithm within each category.}
\resizebox{\textwidth}{!}{%
\begin{tabular}{lcccccc}
\toprule
\textbf{Algorithm} & \textbf{ColoredMNIST (2)} & \textbf{RotatedMNIST (10)} & \textbf{VLCS (5)} & \textbf{PACS (7)} & \textbf{TerraIncognita (10)} & \textbf{OfficeHome (65)} \\
\midrule
\multicolumn{7}{l}{\textit{No Moment Matching}} \\
\hspace{1em}ERM          & 0.0278 & 0.0403 & 0.4019 & 0.3620 & 0.4216 & 0.4064 \\

\midrule
\multicolumn{7}{l}{\textit{Approximate Second-Order}} \\
\hspace{1em}CORAL        & \textbf{0.0457} & \textbf{0.1003} & 0.6241 & \textbf{0.5244} & 0.7697 & 0.5279 \\
\hspace{1em}Fishr        & 0.0925 & 0.1331 & 0.7472 & 0.6757 & 0.6057 & 0.6600 \\
\hspace{1em}HGP          & 0.0657 & 0.1292 & \textbf{0.6048} & 0.6729 & \textbf{0.6045} & \textbf{0.4977} \\

\midrule
\multicolumn{7}{l}{\textit{Exact Second-Order}} \\
\hspace{1em}Hutchinson   & 4.1663 & 9.7935 & 7.7604 & 7.3284 & 7.7270 & 7.8446 \\
\hspace{1em}CMA (Speed)  & \textbf{0.0676} & \textbf{0.1326} & \textbf{0.7354} & \textbf{0.7266} & \textbf{0.7421} & --     \\
\hspace{1em}CMA (Memory) & 0.1226 & 0.4723 & 0.8874 & 1.1699 & 1.0685 & \textbf{0.8495} \\
\bottomrule
\end{tabular}\label{tab:wallclock}
}
\end{table*}

\begin{table*}[ht]
\centering
\caption{Memory usage across datasets (in GB). For each dataset, we bold the most memory-efficient algorithm within each category.}
\resizebox{\textwidth}{!}{%
\begin{tabular}{lcccccc}
\toprule
\textbf{Algorithm} & \textbf{ColoredMNIST (2)} & \textbf{RotatedMNIST (10)} & \textbf{VLCS (5)} & \textbf{PACS (7)} & \textbf{TerraIncognita (10)} & \textbf{OfficeHome (65)} \\
\midrule
\multicolumn{7}{l}{\textit{No Moment Matching}} \\
\hspace{1em}ERM          & 0.1550 & 0.3728 & 6.8865 & 6.8865 & 6.8865 & 6.8868 \\

\midrule
\multicolumn{7}{l}{\textit{Approximate Second-Order}} \\
\hspace{1em}CORAL        & \textbf{0.1391} & 0.3190 & 6.7093 & 6.7093 & 6.7093 & 6.7097 \\
\hspace{1em}Fishr        & 0.3192 & 0.7936 & 14.1433 & 14.1436 & 14.1441 & 14.1522 \\
\hspace{1em}HGP          & 0.1477 & \textbf{0.3099} & \textbf{5.6835} & \textbf{5.6835} & \textbf{5.6836} & \textbf{5.6843} \\
\midrule
\multicolumn{7}{l}{\textit{Exact Second-Order}} \\
\hspace{1em}Hutchinson   & \textbf{0.1502} & \textbf{0.3496} & \textbf{5.7047} & \textbf{5.7125} & \textbf{5.7284} & \textbf{6.0029} \\
\hspace{1em}CMA (Speed)  & 0.2867 & 1.4537 & 13.9511 & 14.8391 & 16.7272 & \textasciitilde75 \\
\hspace{1em}CMA (Memory) & 0.3914 & 0.7776 & 13.6447 & 13.6448 & 13.6448 & 13.6474 \\
\bottomrule
\end{tabular}\label{tab:mem}
}
\end{table*}

\vspace{-1em}
\section{Limitations and Future Directions}\label{sec:limitation}
\vspace{-1em}
Our analysis assumes that the target distribution is in the convex hull of the source distributions, which may not always hold or be verifiable in practice. It might be of future interest to relax this convexity assumption to accommodate a broader range of target distributions.

While closed-form Hessians eliminate the need for sampling-based approximations or multiple backpropagations, they introduce scalability challenges. Specifically, the Hessian matrix of the cross-entropy loss w.r.t. the classifier head scales quadratically with the number of classes and feature dimensions. To remedy this challenge, we provide a memory-efficient alternative for computing the Hessian Frobenius norm, albeit at the cost of longer computation time. Future work could explore Hessian approximations to further balance efficiency and accuracy.

The primary focus of this work is on a unifying theory for DG using gradient and Hessian matching. Our theory suggests that aligning higher-order derivatives improves generalization, but in practice, even second-order alignment is computationally demanding. The feasibility and potential benefits of higher-order alignments remain open questions, presenting intriguing directions for future research.

\vspace{-0.5em}
\section{Conclusions}\label{sec:conclusions}
\vspace{-0.5em}
We introduced a unified theory of moment alignment for DG, providing upper bounds by examining differences in derivatives. The moment alignment framework reinterprets IRM, gradient matching, and Hessian matching, explaining the success of algorithms that match both gradients and Hessians across domains. Additionally, we established the duality between moments of features and derivatives of classifier heads, a novel perspective that we believe will open new research avenues.

Inspired by our theory, we proposed Closed-Form Moment Alignment (CMA), a DG algorithm that aligns gradients and Hessians analytically, avoiding the computational inefficiencies of previous methods that relied on repeated backpropagation or sampling-based Hessian estimations. We validated the efficacy of CMA through both quantitative and qualitative experiments. The results demonstrated that CMA achieves performance on par with state-of-the-art methods (e.g., Fishr). These findings not only confirm our theoretical predictions but also underscore the practical benefits of our moment alignment framework.

\vspace{-0.5em}
\begin{acknowledgements} 
\vspace{-0.5em}
    This work is partially supported by an NSF IIS grant with No.\ 2416897. HZ would like to thank the support from a Google Research Scholar Award. This research used the Delta advanced computing and data resource which is supported by the National Science Foundation (award OAC 2005572) and the State of Illinois. We thank Haoxiang Wang for suggestions of our experiments. The views and conclusions expressed in this paper are solely those of the authors and do not necessarily reflect the official policies or positions of the supporting companies and government agencies.
\end{acknowledgements}

\newpage
\bibliographystyle{plainnat}
\bibliography{references}

\begin{thebibliography}{64}
\providecommand{\natexlab}[1]{#1}
\providecommand{\url}[1]{\texttt{#1}}
\expandafter\ifx\csname urlstyle\endcsname\relax
  \providecommand{\doi}[1]{doi: #1}\else
  \providecommand{\doi}{doi: \begingroup \urlstyle{rm}\Url}\fi

\bibitem[Ahuja et~al.(2020)Ahuja, Shanmugam, Varshney, and
  Dhurandhar]{ahuja_invariant_2020}
Kartik Ahuja, Karthikeyan Shanmugam, Kush~R. Varshney, and Amit Dhurandhar.
\newblock Invariant {Risk} {Minimization} {Games}, March 2020.
\newblock URL \url{http://arxiv.org/abs/2002.04692}.
\newblock arXiv:2002.04692 [cs, stat].

\bibitem[Ahuja et~al.(2022{\natexlab{a}})Ahuja, Caballero, Zhang, Gagnon-Audet,
  Bengio, Mitliagkas, and Rish]{ahuja_invariance_2022}
Kartik Ahuja, Ethan Caballero, Dinghuai Zhang, Jean-Christophe Gagnon-Audet,
  Yoshua Bengio, Ioannis Mitliagkas, and Irina Rish.
\newblock Invariance {Principle} {Meets} {Information} {Bottleneck} for
  {Out}-of-{Distribution} {Generalization}, November 2022{\natexlab{a}}.
\newblock URL \url{http://arxiv.org/abs/2106.06607}.
\newblock arXiv:2106.06607 [cs, stat].

\bibitem[Ahuja et~al.(2022{\natexlab{b}})Ahuja, Wang, Dhurandhar, Shanmugam,
  and Varshney]{ahuja_empirical_2022}
Kartik Ahuja, Jun Wang, Amit Dhurandhar, Karthikeyan Shanmugam, and Kush~R.
  Varshney.
\newblock Empirical or {Invariant} {Risk} {Minimization}? {A} {Sample}
  {Complexity} {Perspective}, August 2022{\natexlab{b}}.
\newblock URL \url{http://arxiv.org/abs/2010.16412}.
\newblock arXiv:2010.16412 [cs, stat].

\bibitem[AlBadawy et~al.(2018)AlBadawy, Saha, and
  Mazurowski]{albadawy_deep_2018}
Ehab~A. AlBadawy, Ashirbani Saha, and Maciej~A. Mazurowski.
\newblock Deep learning for segmentation of brain tumors: {Impact} of
  cross-institutional training and testing.
\newblock \emph{Medical Physics}, 45\penalty0 (3):\penalty0 1150--1158, March
  2018.
\newblock ISSN 2473-4209.
\newblock \doi{10.1002/mp.12752}.

\bibitem[Arjovsky et~al.(2020)Arjovsky, Bottou, Gulrajani, and
  Lopez-Paz]{arjovsky_invariant_2020}
Martin Arjovsky, Léon Bottou, Ishaan Gulrajani, and David Lopez-Paz.
\newblock Invariant {Risk} {Minimization}, March 2020.
\newblock URL \url{http://arxiv.org/abs/1907.02893}.
\newblock arXiv:1907.02893 [cs, stat].

\bibitem[Beery et~al.(2018)Beery, van Horn, and Perona]{beery_recognition_2018}
Sara Beery, Grant van Horn, and Pietro Perona.
\newblock Recognition in {Terra} {Incognita}, July 2018.
\newblock URL \url{http://arxiv.org/abs/1807.04975}.
\newblock arXiv:1807.04975 [cs, q-bio].

\bibitem[Bekas et~al.(2007)Bekas, Kokiopoulou, and Saad]{bekas_estimator_2007}
C.~Bekas, E.~Kokiopoulou, and Y.~Saad.
\newblock An estimator for the diagonal of a matrix.
\newblock \emph{Applied Numerical Mathematics}, 57\penalty0 (11):\penalty0
  1214--1229, November 2007.
\newblock ISSN 0168-9274.
\newblock \doi{10.1016/j.apnum.2007.01.003}.
\newblock URL
  \url{https://www.sciencedirect.com/science/article/pii/S0168927407000244}.

\bibitem[Ben-David et~al.(2010)Ben-David, Blitzer, Crammer, Kulesza, Pereira,
  and Vaughan]{ben-david_theory_2010}
Shai Ben-David, John Blitzer, Koby Crammer, Alex Kulesza, Fernando Pereira, and
  Jennifer~Wortman Vaughan.
\newblock A theory of learning from different domains.
\newblock \emph{Machine Learning}, 79\penalty0 (1):\penalty0 151--175, May
  2010.
\newblock ISSN 1573-0565.
\newblock \doi{10.1007/s10994-009-5152-4}.
\newblock URL \url{https://doi.org/10.1007/s10994-009-5152-4}.

\bibitem[Blanchard et~al.(2011)Blanchard, Lee, and
  Scott]{blanchard_generalizing_2011}
Gilles Blanchard, Gyemin Lee, and Clayton Scott.
\newblock Generalizing from {Several} {Related} {Classification} {Tasks} to a
  {New} {Unlabeled} {Sample}.
\newblock In \emph{Advances in {Neural} {Information} {Processing} {Systems}},
  volume~24. Curran Associates, Inc., 2011.
\newblock URL
  \url{https://papers.nips.cc/paper_files/paper/2011/hash/b571ecea16a9824023ee1af16897a582-Abstract.html}.

\bibitem[Chang(2015)]{chang_chapter_2015}
Kuang-Hua Chang.
\newblock Chapter 19 - {Multiobjective} {Optimization} and {Advanced} {Topics}.
\newblock In Kuang-Hua Chang, editor, \emph{e-{Design}}, pages 1105--1173.
  Academic Press, Boston, January 2015.
\newblock ISBN 978-0-12-382038-9.
\newblock \doi{10.1016/B978-0-12-382038-9.00019-3}.
\newblock URL
  \url{https://www.sciencedirect.com/science/article/pii/B9780123820389000193}.

\bibitem[Christie et~al.(2018)Christie, Fendley, Wilson, and
  Mukherjee]{christie_functional_2018}
Gordon Christie, Neil Fendley, James Wilson, and Ryan Mukherjee.
\newblock Functional {Map} of the {World}, April 2018.
\newblock URL \url{http://arxiv.org/abs/1711.07846}.
\newblock arXiv:1711.07846 [cs].

\bibitem[Dai and Van~Gool(2018)]{dai_dark_2018}
Dengxin Dai and Luc Van~Gool.
\newblock Dark {Model} {Adaptation}: {Semantic} {Image} {Segmentation} from
  {Daytime} to {Nighttime}, October 2018.
\newblock URL \url{http://arxiv.org/abs/1810.02575}.
\newblock arXiv:1810.02575 [cs].

\bibitem[Dosovitskiy et~al.(2021)Dosovitskiy, Beyer, Kolesnikov, Weissenborn,
  Zhai, Unterthiner, Dehghani, Minderer, Heigold, Gelly, Uszkoreit, and
  Houlsby]{dosovitskiy_image_2021}
Alexey Dosovitskiy, Lucas Beyer, Alexander Kolesnikov, Dirk Weissenborn,
  Xiaohua Zhai, Thomas Unterthiner, Mostafa Dehghani, Matthias Minderer, Georg
  Heigold, Sylvain Gelly, Jakob Uszkoreit, and Neil Houlsby.
\newblock An {Image} is {Worth} 16x16 {Words}: {Transformers} for {Image}
  {Recognition} at {Scale}, June 2021.
\newblock URL \url{http://arxiv.org/abs/2010.11929}.
\newblock arXiv:2010.11929 [cs] version: 2.

\bibitem[Fang et~al.(2013)Fang, Xu, and Rockmore]{fang_unbiased_2013}
Chen Fang, Ye~Xu, and Daniel~N. Rockmore.
\newblock Unbiased {Metric} {Learning}: {On} the {Utilization} of {Multiple}
  {Datasets} and {Web} {Images} for {Softening} {Bias}.
\newblock In \emph{2013 {IEEE} {International} {Conference} on {Computer}
  {Vision}}, pages 1657--1664, Sydney, Australia, December 2013. IEEE.
\newblock ISBN 978-1-4799-2840-8.
\newblock \doi{10.1109/ICCV.2013.208}.
\newblock URL \url{http://ieeexplore.ieee.org/document/6751316/}.

\bibitem[Ganin et~al.(2016)Ganin, Ustinova, Ajakan, Germain, Larochelle,
  Laviolette, Marchand, and Lempitsky]{ganin_domain-adversarial_2016}
Yaroslav Ganin, Evgeniya Ustinova, Hana Ajakan, Pascal Germain, Hugo
  Larochelle, François Laviolette, Mario Marchand, and Victor Lempitsky.
\newblock Domain-{Adversarial} {Training} of {Neural} {Networks}, May 2016.
\newblock URL \url{http://arxiv.org/abs/1505.07818}.
\newblock arXiv:1505.07818 [cs, stat].

\bibitem[Ghifary et~al.(2015)Ghifary, Kleijn, Zhang, and
  Balduzzi]{ghifary_domain_2015}
Muhammad Ghifary, W.~Bastiaan Kleijn, Mengjie Zhang, and David Balduzzi.
\newblock Domain {Generalization} for {Object} {Recognition} with {Multi}-task
  {Autoencoders}, August 2015.
\newblock URL \url{http://arxiv.org/abs/1508.07680}.
\newblock arXiv:1508.07680 [cs, stat].

\bibitem[Ghosal et~al.(2022)Ghosal, Ming, and Li]{ghosal_are_2022}
Soumya~Suvra Ghosal, Yifei Ming, and Yixuan Li.
\newblock Are {Vision} {Transformers} {Robust} to {Spurious} {Correlations}?,
  March 2022.
\newblock URL \url{http://arxiv.org/abs/2203.09125}.
\newblock arXiv:2203.09125 [cs].

\bibitem[Gulrajani and Lopez-Paz(2020)]{gulrajani_search_2020}
Ishaan Gulrajani and David Lopez-Paz.
\newblock In {Search} of {Lost} {Domain} {Generalization}, July 2020.
\newblock URL \url{http://arxiv.org/abs/2007.01434}.
\newblock arXiv:2007.01434 [cs, stat].

\bibitem[Gururangan et~al.(2018)Gururangan, Swayamdipta, Levy, Schwartz,
  Bowman, and Smith]{gururangan_annotation_2018}
Suchin Gururangan, Swabha Swayamdipta, Omer Levy, Roy Schwartz, Samuel~R.
  Bowman, and Noah~A. Smith.
\newblock Annotation {Artifacts} in {Natural} {Language} {Inference} {Data},
  April 2018.
\newblock URL \url{http://arxiv.org/abs/1803.02324}.
\newblock arXiv:1803.02324 [cs].

\bibitem[Hansen et~al.(2013)Hansen, Potapov, Moore, Hancher, Turubanova,
  Tyukavina, Thau, Stehman, Goetz, Loveland, Kommareddy, Egorov, Chini,
  Justice, and Townshend]{hansen_high-resolution_2013}
M.~C. Hansen, P.~V. Potapov, R.~Moore, M.~Hancher, S.~A. Turubanova,
  A.~Tyukavina, D.~Thau, S.~V. Stehman, S.~J. Goetz, T.~R. Loveland,
  A.~Kommareddy, A.~Egorov, L.~Chini, C.~O. Justice, and J.~R.~G. Townshend.
\newblock High-resolution global maps of 21st-century forest cover change.
\newblock \emph{Science (New York, N.Y.)}, 342\penalty0 (6160):\penalty0
  850--853, November 2013.
\newblock ISSN 1095-9203.
\newblock \doi{10.1126/science.1244693}.

\bibitem[Heinze-Deml et~al.(2018)Heinze-Deml, Peters, and
  Meinshausen]{heinze-deml_invariant_2018}
Christina Heinze-Deml, Jonas Peters, and Nicolai Meinshausen.
\newblock Invariant {Causal} {Prediction} for {Nonlinear} {Models}.
\newblock \emph{Journal of Causal Inference}, 6\penalty0 (2):\penalty0
  20170016, September 2018.
\newblock ISSN 2193-3685, 2193-3677.
\newblock \doi{10.1515/jci-2017-0016}.
\newblock URL
  \url{https://www.degruyter.com/document/doi/10.1515/jci-2017-0016/html}.

\bibitem[Hemati et~al.(2023)Hemati, Zhang, Estiri, and
  Chen]{hemati_understanding_2023}
Sobhan Hemati, Guojun Zhang, Amir Estiri, and Xi~Chen.
\newblock Understanding {Hessian} {Alignment} for {Domain} {Generalization}.
\newblock In \emph{2023 {IEEE}/{CVF} {International} {Conference} on {Computer}
  {Vision} ({ICCV})}, pages 18958--18968, Paris, France, October 2023. IEEE.
\newblock ISBN 9798350307184.
\newblock \doi{10.1109/ICCV51070.2023.01742}.
\newblock URL \url{https://ieeexplore.ieee.org/document/10376731/}.

\bibitem[Hoffman et~al.(2017)Hoffman, Tzeng, Park, Zhu, Isola, Saenko, Efros,
  and Darrell]{hoffman_cycada_2017}
Judy Hoffman, Eric Tzeng, Taesung Park, Jun-Yan Zhu, Phillip Isola, Kate
  Saenko, Alexei~A. Efros, and Trevor Darrell.
\newblock {CyCADA}: {Cycle}-{Consistent} {Adversarial} {Domain} {Adaptation},
  December 2017.
\newblock URL \url{http://arxiv.org/abs/1711.03213}.
\newblock arXiv:1711.03213 [cs].

\bibitem[Hu et~al.(2018)Hu, Niu, Sato, and Sugiyama]{hu_does_2018}
Weihua Hu, Gang Niu, Issei Sato, and Masashi Sugiyama.
\newblock Does {Distributionally} {Robust} {Supervised} {Learning} {Give}
  {Robust} {Classifiers}?, July 2018.
\newblock URL \url{http://arxiv.org/abs/1611.02041}.
\newblock arXiv:1611.02041 [stat].

\bibitem[Hu et~al.(2021)Hu, Jia, Tomizuka, and Zhan]{hu_causal-based_2021}
Yeping Hu, Xiaogang Jia, Masayoshi Tomizuka, and Wei Zhan.
\newblock Causal-based {Time} {Series} {Domain} {Generalization} for {Vehicle}
  {Intention} {Prediction}, December 2021.
\newblock URL \url{http://arxiv.org/abs/2112.02093}.
\newblock arXiv:2112.02093 [cs, stat].

\bibitem[Huang et~al.(2025)Huang, Li, Shen, Yu, Gong, Han, and
  Liu]{huang_winning_2025}
Zhuo Huang, Muyang Li, Li~Shen, Jun Yu, Chen Gong, Bo~Han, and Tongliang Liu.
\newblock Winning {Prize} {Comes} from {Losing} {Tickets}: {Improve}
  {Invariant} {Learning} by {Exploring} {Variant} {Parameters} for
  {Out}-of-{Distribution} {Generalization}.
\newblock \emph{International Journal of Computer Vision}, 133\penalty0
  (1):\penalty0 456--474, January 2025.
\newblock ISSN 1573-1405.
\newblock \doi{10.1007/s11263-024-02075-x}.
\newblock URL \url{https://doi.org/10.1007/s11263-024-02075-x}.

\bibitem[Kamath et~al.(2021)Kamath, Tangella, Sutherland, and
  Srebro]{kamath_does_2021}
Pritish Kamath, Akilesh Tangella, Danica~J. Sutherland, and Nathan Srebro.
\newblock Does {Invariant} {Risk} {Minimization} {Capture} {Invariance}?,
  February 2021.
\newblock URL \url{http://arxiv.org/abs/2101.01134}.
\newblock arXiv:2101.01134 [cs, stat].

\bibitem[Koyama and Yamaguchi(2020)]{koyama_out--distribution_2020}
Masanori Koyama and Shoichiro Yamaguchi.
\newblock Out-of-{Distribution} {Generalization} with {Maximal} {Invariant}
  {Predictor}.
\newblock October 2020.
\newblock URL \url{https://openreview.net/forum?id=FzGiUKN4aBp}.

\bibitem[Koyama and Yamaguchi(2021)]{koyama_when_2021}
Masanori Koyama and Shoichiro Yamaguchi.
\newblock When is invariance useful in an {Out}-of-{Distribution}
  {Generalization} problem ?, November 2021.
\newblock URL \url{http://arxiv.org/abs/2008.01883}.
\newblock arXiv:2008.01883 [cs, stat].

\bibitem[Krueger et~al.(2021)Krueger, Caballero, Jacobsen, Zhang, Binas, Zhang,
  Priol, and Courville]{krueger_out--distribution_2021}
David Krueger, Ethan Caballero, Joern-Henrik Jacobsen, Amy Zhang, Jonathan
  Binas, Dinghuai Zhang, Remi~Le Priol, and Aaron Courville.
\newblock Out-of-{Distribution} {Generalization} via {Risk} {Extrapolation}
  ({REx}), February 2021.
\newblock URL \url{http://arxiv.org/abs/2003.00688}.
\newblock arXiv:2003.00688 [cs, stat].

\bibitem[LeCun et~al.(2010)LeCun, Cortes, and Burges]{lecun_mnist_2010}
Yann LeCun, Corinna Cortes, and Chris Burges.
\newblock {MNIST} handwritten digit database, 2010.
\newblock URL \url{http://yann.lecun.com/exdb/mnist/}.

\bibitem[Li et~al.(2017)Li, Yang, Song, and Hospedales]{li_deeper_2017}
Da~Li, Yongxin Yang, Yi-Zhe Song, and Timothy~M. Hospedales.
\newblock Deeper, {Broader} and {Artier} {Domain} {Generalization}, October
  2017.
\newblock URL \url{http://arxiv.org/abs/1710.03077}.
\newblock arXiv:1710.03077 [cs].

\bibitem[Li et~al.(2018)Li, Pan, Wang, and Kot]{li_domain_2018}
Haoliang Li, Sinno~Jialin Pan, Shiqi Wang, and Alex~C. Kot.
\newblock Domain {Generalization} with {Adversarial} {Feature} {Learning}.
\newblock In \emph{2018 {IEEE}/{CVF} {Conference} on {Computer} {Vision} and
  {Pattern} {Recognition}}, pages 5400--5409, Salt Lake City, UT, June 2018.
  IEEE.
\newblock ISBN 978-1-5386-6420-9.
\newblock \doi{10.1109/CVPR.2018.00566}.
\newblock URL \url{https://ieeexplore.ieee.org/document/8578664/}.

\bibitem[Liu et~al.(2015)Liu, Luo, Wang, and Tang]{liu_deep_2015}
Ziwei Liu, Ping Luo, Xiaogang Wang, and Xiaoou Tang.
\newblock Deep {Learning} {Face} {Attributes} in the {Wild}, September 2015.
\newblock URL \url{http://arxiv.org/abs/1411.7766}.
\newblock arXiv:1411.7766 [cs] version: 3.

\bibitem[Long et~al.(2015)Long, Cao, Wang, and Jordan]{long_learning_2015}
Mingsheng Long, Yue Cao, Jianmin Wang, and Michael~I. Jordan.
\newblock Learning {Transferable} {Features} with {Deep} {Adaptation}
  {Networks}, May 2015.
\newblock URL \url{http://arxiv.org/abs/1502.02791}.
\newblock arXiv:1502.02791 [cs].

\bibitem[Muandet et~al.(2013)Muandet, Balduzzi, and
  Schölkopf]{muandet_domain_2013}
Krikamol Muandet, David Balduzzi, and Bernhard Schölkopf.
\newblock Domain {Generalization} via {Invariant} {Feature} {Representation},
  January 2013.
\newblock URL \url{http://arxiv.org/abs/1301.2115}.
\newblock arXiv:1301.2115 [cs, stat].

\bibitem[Parascandolo et~al.(2020)Parascandolo, Neitz, Orvieto, Gresele, and
  Schölkopf]{parascandolo_learning_2020}
Giambattista Parascandolo, Alexander Neitz, Antonio Orvieto, Luigi Gresele, and
  Bernhard Schölkopf.
\newblock Learning explanations that are hard to vary, October 2020.
\newblock URL \url{http://arxiv.org/abs/2009.00329}.
\newblock arXiv:2009.00329 [cs, stat].

\bibitem[Peng et~al.(2019)Peng, Bai, Xia, Huang, Saenko, and
  Wang]{peng_moment_2019}
Xingchao Peng, Qinxun Bai, Xide Xia, Zijun Huang, Kate Saenko, and Bo~Wang.
\newblock Moment {Matching} for {Multi}-{Source} {Domain} {Adaptation}, August
  2019.
\newblock URL \url{http://arxiv.org/abs/1812.01754}.
\newblock arXiv:1812.01754 [cs].

\bibitem[Peters et~al.(2015)Peters, Bühlmann, and
  Meinshausen]{peters_causal_2015}
Jonas Peters, Peter Bühlmann, and Nicolai Meinshausen.
\newblock Causal inference using invariant prediction: identification and
  confidence intervals, November 2015.
\newblock URL \url{http://arxiv.org/abs/1501.01332}.
\newblock arXiv:1501.01332 [stat].

\bibitem[Rame et~al.(2022)Rame, Dancette, and Cord]{rame_fishr_2022}
Alexandre Rame, Corentin Dancette, and Matthieu Cord.
\newblock Fishr: {Invariant} {Gradient} {Variances} for {Out}-of-{Distribution}
  {Generalization}.
\newblock In \emph{Proceedings of the 39th {International} {Conference} on
  {Machine} {Learning}}, pages 18347--18377. PMLR, June 2022.
\newblock URL \url{https://proceedings.mlr.press/v162/rame22a.html}.
\newblock ISSN: 2640-3498.

\bibitem[Rosenfeld et~al.(2021)Rosenfeld, Ravikumar, and
  Risteski]{rosenfeld_risks_2021}
Elan Rosenfeld, Pradeep Ravikumar, and Andrej Risteski.
\newblock The {Risks} of {Invariant} {Risk} {Minimization}, March 2021.
\newblock URL \url{http://arxiv.org/abs/2010.05761}.
\newblock arXiv:2010.05761 [cs, stat].

\bibitem[Sagawa et~al.(2020)Sagawa, Koh, Hashimoto, and
  Liang]{sagawa_distributionally_2020}
Shiori Sagawa, Pang~Wei Koh, Tatsunori~B. Hashimoto, and Percy Liang.
\newblock Distributionally {Robust} {Neural} {Networks} for {Group} {Shifts}:
  {On} the {Importance} of {Regularization} for {Worst}-{Case}
  {Generalization}, April 2020.
\newblock URL \url{http://arxiv.org/abs/1911.08731}.
\newblock arXiv:1911.08731 [cs, stat].

\bibitem[Santurkar et~al.(2020)Santurkar, Tsipras, and
  Madry]{santurkar_breeds_2020}
Shibani Santurkar, Dimitris Tsipras, and Aleksander Madry.
\newblock {BREEDS}: {Benchmarks} for {Subpopulation} {Shift}, August 2020.
\newblock URL \url{http://arxiv.org/abs/2008.04859}.
\newblock arXiv:2008.04859 [cs, stat].

\bibitem[Shankar et~al.(2019)Shankar, Dave, Roelofs, Ramanan, Recht, and
  Schmidt]{shankar_image_2019}
Vaishaal Shankar, Achal Dave, Rebecca Roelofs, Deva Ramanan, Benjamin Recht,
  and Ludwig Schmidt.
\newblock Do {Image} {Classifiers} {Generalize} {Across} {Time}?, December
  2019.
\newblock URL \url{http://arxiv.org/abs/1906.02168}.
\newblock arXiv:1906.02168 [cs, stat].

\bibitem[Shi et~al.(2021)Shi, Seely, Torr, Siddharth, Hannun, Usunier, and
  Synnaeve]{shi_gradient_2021}
Yuge Shi, Jeffrey Seely, Philip H.~S. Torr, N.~Siddharth, Awni Hannun, Nicolas
  Usunier, and Gabriel Synnaeve.
\newblock Gradient {Matching} for {Domain} {Generalization}, July 2021.
\newblock URL \url{http://arxiv.org/abs/2104.09937}.
\newblock arXiv:2104.09937 [cs, stat].

\bibitem[Steiner et~al.(2022)Steiner, Kolesnikov, Zhai, Wightman, Uszkoreit,
  and Beyer]{steiner_how_2022}
Andreas Steiner, Alexander Kolesnikov, Xiaohua Zhai, Ross Wightman, Jakob
  Uszkoreit, and Lucas Beyer.
\newblock How to train your {ViT}? {Data}, {Augmentation}, and {Regularization}
  in {Vision} {Transformers}, June 2022.
\newblock URL \url{http://arxiv.org/abs/2106.10270}.
\newblock arXiv:2106.10270 [cs].

\bibitem[Sultana et~al.(2022)Sultana, Naseer, Khan, Khan, and
  Khan]{sultana_self-distilled_2022}
Maryam Sultana, Muzammal Naseer, Muhammad~Haris Khan, Salman Khan, and
  Fahad~Shahbaz Khan.
\newblock Self-{Distilled} {Vision} {Transformer} for {Domain}
  {Generalization}.
\newblock pages 3068--3085, 2022.
\newblock URL
  \url{https://openaccess.thecvf.com/content/ACCV2022/html/Sultana_Self-Distilled_Vision_Transformer_for_Domain_Generalization_ACCV_2022_paper.html}.

\bibitem[Sun and Saenko(2016)]{sun_deep_2016}
Baochen Sun and Kate Saenko.
\newblock Deep {CORAL}: {Correlation} {Alignment} for {Deep} {Domain}
  {Adaptation}, July 2016.
\newblock URL \url{http://arxiv.org/abs/1607.01719}.
\newblock arXiv:1607.01719 [cs].

\bibitem[Tellez et~al.(2019)Tellez, Litjens, Bándi, Bulten, Bokhorst, Ciompi,
  and van~der Laak]{tellez_quantifying_2019}
David Tellez, Geert Litjens, Péter Bándi, Wouter Bulten, John-Melle Bokhorst,
  Francesco Ciompi, and Jeroen van~der Laak.
\newblock Quantifying the effects of data augmentation and stain color
  normalization in convolutional neural networks for computational pathology.
\newblock \emph{Medical Image Analysis}, 58:\penalty0 101544, December 2019.
\newblock ISSN 1361-8415.
\newblock \doi{10.1016/j.media.2019.101544}.
\newblock URL
  \url{https://www.sciencedirect.com/science/article/pii/S1361841519300799}.

\bibitem[Teney et~al.(2022)Teney, Abbasnejad, Lucey, and
  Hengel]{teney_evading_2022}
Damien Teney, Ehsan Abbasnejad, Simon Lucey, and Anton van~den Hengel.
\newblock Evading the {Simplicity} {Bias}: {Training} a {Diverse} {Set} of
  {Models} {Discovers} {Solutions} with {Superior} {OOD} {Generalization},
  September 2022.
\newblock URL \url{http://arxiv.org/abs/2105.05612}.
\newblock arXiv:2105.05612 [cs].

\bibitem[Tzeng et~al.(2017)Tzeng, Hoffman, Saenko, and
  Darrell]{tzeng_adversarial_2017}
Eric Tzeng, Judy Hoffman, Kate Saenko, and Trevor Darrell.
\newblock Adversarial {Discriminative} {Domain} {Adaptation}, February 2017.
\newblock URL \url{http://arxiv.org/abs/1702.05464}.
\newblock arXiv:1702.05464 [cs].

\bibitem[Vapnik(1999)]{vapnik_overview_1999}
Vladimir~N Vapnik.
\newblock An overview of statistical learning theory.
\newblock \emph{IEEE Transactions on Neural Networks}, 10\penalty0
  (5):\penalty0 988--999, September 1999.
\newblock ISSN 1941-0093.
\newblock \doi{10.1109/72.788640}.
\newblock URL \url{https://ieeexplore.ieee.org/document/788640}.
\newblock Conference Name: IEEE Transactions on Neural Networks.

\bibitem[Wachinger et~al.(2021)Wachinger, Rieckmann, Pölsterl, and
  {Alzheimer’s Disease Neuroimaging Initiative and the Australian Imaging
  Biomarkers and Lifestyle flagship study of ageing}]{wachinger_detect_2021}
Christian Wachinger, Anna Rieckmann, Sebastian Pölsterl, and {Alzheimer’s
  Disease Neuroimaging Initiative and the Australian Imaging Biomarkers and
  Lifestyle flagship study of ageing}.
\newblock Detect and correct bias in multi-site neuroimaging datasets.
\newblock \emph{Medical Image Analysis}, 67:\penalty0 101879, January 2021.
\newblock ISSN 1361-8423.
\newblock \doi{10.1016/j.media.2020.101879}.

\bibitem[Wah et~al.(2011)Wah, Branson, Welinder, Perona, and
  Belongie]{wah_caltech-ucsd_2011}
Catherine Wah, Steve Branson, Peter Welinder, Pietro Perona, and Serge
  Belongie.
\newblock The {Caltech}-{UCSD} {Birds}-200-2011 dataset.
\newblock Technical report, California Institute of Technology, 2011.
\newblock URL \url{https://paperswithcode.com/dataset/cub-200-2011}.

\bibitem[Wang et~al.(2022)Wang, Si, Li, and Zhao]{wang_provable_2022}
Haoxiang Wang, Haozhe Si, Bo~Li, and Han Zhao.
\newblock Provable {Domain} {Generalization} via {Invariant}-{Feature}
  {Subspace} {Recovery}, July 2022.
\newblock URL \url{http://arxiv.org/abs/2201.12919}.
\newblock arXiv:2201.12919 [cs, stat].

\bibitem[Wang et~al.(2023)Wang, Balasubramaniam, Si, Li, and
  Zhao]{wang_invariant-feature_2023}
Haoxiang Wang, Gargi Balasubramaniam, Haozhe Si, Bo~Li, and Han Zhao.
\newblock Invariant-{Feature} {Subspace} {Recovery}: {A} {New} {Class} of
  {Provable} {Domain} {Generalization} {Algorithms}, November 2023.
\newblock URL \url{http://arxiv.org/abs/2311.00966}.
\newblock arXiv:2311.00966 [cs, stat].

\bibitem[Wightman(2019)]{wightman_pytorch_2019}
Ross Wightman.
\newblock {PyTorch} {Image} {Models}, 2019.
\newblock URL \url{https://github.com/google-research/vision_transformer}.
\newblock original-date: 2020-10-21T12:35:02Z.

\bibitem[Williams et~al.(2018)Williams, Nangia, and
  Bowman]{williams_broad-coverage_2018}
Adina Williams, Nikita Nangia, and Samuel Bowman.
\newblock A {Broad}-{Coverage} {Challenge} {Corpus} for {Sentence}
  {Understanding} through {Inference}.
\newblock In Marilyn Walker, Heng Ji, and Amanda Stent, editors,
  \emph{Proceedings of the 2018 {Conference} of the {North} {American}
  {Chapter} of the {Association} for {Computational} {Linguistics}: {Human}
  {Language} {Technologies}, {Volume} 1 ({Long} {Papers})}, pages 1112--1122,
  New Orleans, Louisiana, June 2018. Association for Computational Linguistics.
\newblock \doi{10.18653/v1/N18-1101}.
\newblock URL \url{https://aclanthology.org/N18-1101}.

\bibitem[Zellinger et~al.(2019)Zellinger, Grubinger, Lughofer, Natschläger,
  and Saminger-Platz]{zellinger_central_2019}
Werner Zellinger, Thomas Grubinger, Edwin Lughofer, Thomas Natschläger, and
  Susanne Saminger-Platz.
\newblock Central {Moment} {Discrepancy} ({CMD}) for {Domain}-{Invariant}
  {Representation} {Learning}, May 2019.
\newblock URL \url{http://arxiv.org/abs/1702.08811}.
\newblock arXiv:1702.08811 [cs, stat].

\bibitem[Zhang et~al.(2021)Zhang, Zhao, Yu, and
  Poupart]{zhang_quantifying_2021}
Guojun Zhang, Han Zhao, Yaoliang Yu, and Pascal Poupart.
\newblock Quantifying and {Improving} {Transferability} in {Domain}
  {Generalization}, November 2021.
\newblock URL \url{http://arxiv.org/abs/2106.03632}.
\newblock arXiv:2106.03632 [cs, stat].

\bibitem[Zhang et~al.(2023)Zhang, Li, Gao, Xie, Lin, Li, Wang, and
  Huang]{zhang_causal_2023}
Nevin~L. Zhang, Kaican Li, Han Gao, Weiyan Xie, Zhi Lin, Zhenguo Li, Luning
  Wang, and Yongxiang Huang.
\newblock A {Causal} {Framework} to {Unify} {Common} {Domain} {Generalization}
  {Approaches}, July 2023.
\newblock URL \url{http://arxiv.org/abs/2307.06825}.
\newblock arXiv:2307.06825 [cs].

\bibitem[Zhao et~al.(2019)Zhao, Combes, Zhang, and Gordon]{zhao_learning_2019}
Han Zhao, Remi Tachet~Des Combes, Kun Zhang, and Geoffrey Gordon.
\newblock On {Learning} {Invariant} {Representations} for {Domain}
  {Adaptation}.
\newblock In \emph{Proceedings of the 36th {International} {Conference} on
  {Machine} {Learning}}, pages 7523--7532. PMLR, May 2019.
\newblock URL \url{https://proceedings.mlr.press/v97/zhao19a.html}.
\newblock ISSN: 2640-3498.

\bibitem[Zheng et~al.(2022)Zheng, Yue, Wang, and You]{zheng_prompt_2022}
Zangwei Zheng, Xiangyu Yue, Kai Wang, and Yang You.
\newblock Prompt {Vision} {Transformer} for {Domain} {Generalization}, August
  2022.
\newblock URL \url{http://arxiv.org/abs/2208.08914}.
\newblock arXiv:2208.08914 [cs].

\bibitem[Zhou et~al.(2018)Zhou, Lapedriza, Khosla, Oliva, and
  Torralba]{zhou_places_2018}
Bolei Zhou, Agata Lapedriza, Aditya Khosla, Aude Oliva, and Antonio Torralba.
\newblock Places: {A} 10 {Million} {Image} {Database} for {Scene}
  {Recognition}.
\newblock \emph{IEEE Transactions on Pattern Analysis and Machine
  Intelligence}, 40\penalty0 (6):\penalty0 1452--1464, June 2018.
\newblock ISSN 0162-8828, 2160-9292, 1939-3539.
\newblock \doi{10.1109/TPAMI.2017.2723009}.
\newblock URL \url{https://ieeexplore.ieee.org/document/7968387/}.

\end{thebibliography}

\onecolumn

\title{Moment Alignment: Unifying Gradient and Hessian Matching for Domain Generalization\\(Supplementary Material)}
\maketitle
\appendix
\section{Proof of Propositions}\label{app:proof_prop}
\propuppertm*
\begin{proof}
    \begin{equation}\label{eq:one_sided_upperbound1}
        \begin{aligned}
            \mathrm{T}_{\Gamma}(\mathcal{S} \| \mathcal{T}) & = \mathrm{T}_{\Gamma}\left(\mu^* \| \mathcal{T}\right)                                                                                                         \\
                                                            & =\mathrm{T}_{\Gamma}\left(\mu^* \middle\| \sum_{i=1}^K w_i \mu_i\right)                                                                                        \\
                                                            & =\sup _{h \in \Gamma} \sum_{i=1}^K w_i\mathcal{L}_{\mu_i}(h)-\mathcal{L}_\mathcal{T}^*-\left(\mathcal{L}_{\mu^*}(h)-\mathcal{L}_{\mu^*}^*\right)               \\
                                                            & \leq \sup _{h \in \Gamma} \sum_{i=1}^K w_i\left[\mathcal{L}_{\mu_i}(h)-\mathcal{L}_\mathcal{T}^*-\left(\mathcal{L}_{\mu^*}(h)-\mathcal{L}_{\mu^*}^*\right) \right] \\
                                                            & \leq \sup _{h \in \Gamma} \sum_{i=1}^K w_i\left[\mathcal{L}_{\mu_i}(h)-\mathcal{L}_{\mu_i}^*-\left(\mathcal{L}_{\mu^*}(h)-\mathcal{L}_{\mu^*}^*\right) \right] \\
                                                            & \leq \sum_{i=1}^K w_i \sup _{h \in \Gamma}\left[\mathcal{L}_{\mu_i}(h)-\mathcal{L}_{\mu_i^*}-\left(\mathcal{L}_{\mu^*}(h)-\mathcal{L}_{\mu^*}^*\right) \right] \\
                                                            & =\sum_{i=1}^K w_i \mathrm{T}_{\Gamma}\left(\mu^*\| \mu_i\right)                                                                                                \\
                                                            & \leq \max _{i \in[k]} \mathrm{T}_{\Gamma}\left(\mu^* \| \mu_i\right)
        \end{aligned}
    \end{equation}
    On the other hand, let $j_{max}:= \argmax _{j \in [K]}  \mathrm{T}_{\Gamma}\left(\mu_j \| \mu^*\right)$ and for any fixed $i$, let $\mu_{mid}$ be such that $ \mathrm{T}_{\Gamma}\left(\mu_{j_{max}} \| \mu_{i}\right) = \mathrm{T}_{\Gamma}\left(\mu_{j_{max}} \| \mu_{mid}\right) + \mathrm{T}_{\Gamma}\left(\mu_{mid} \|\mu_{i} \right)$. By the definition $\mu^*$ we have:
    \begin{equation}
        \begin{aligned}
            \frac{1}{2} \max _{i \neq j} \mathrm{T}_{\Gamma}\left(\mu_{j} \| \mu_i\right)
            \geq & \frac{1}{2} \mathrm{T}_{\Gamma}\left(\mu_{j_{max}} \| \mu_{i}\right) \quad \forall i \in [K]          \\
            =    & \mathrm{T}_{\Gamma}\left(\mu_{mid} \|\mu_{i} \right) \quad \forall i \in [K], \quad \exists \mu_{mid} \\
            \geq & \mathrm{T}_{\Gamma}\left(\mu^* \| \mu_{i}\right) \quad \forall i \in [K]                              \\
        \end{aligned}
        \label{eq:one_sided_upperbound2}
    \end{equation}
    Combine \Cref{eq:one_sided_upperbound1} and \Cref{eq:one_sided_upperbound2}, we have:
    \begin{equation}
        \mathrm{T}_{\Gamma}(\mathcal{S} \| \mathcal{T}) \leq \max _{i \in[k]} \mathrm{T}_{\Gamma}\left(\mu^* \| \mu_i\right) \leq \frac{1}{2}\max _{i \neq j} \mathrm{T}_{\Gamma}\left(\mu_j \| \mu_i\right) \qedhere
    \end{equation}
\end{proof}
\proptemulti*
\begin{proof}
    Apply \Cref{prop:target_error_bound} on $\mu^*$ and $\mathcal{T}$, we have:
    \begin{equation}
        \begin{aligned}
            \mathcal{L}_{\mathcal{T}}(h) \leq & \mathcal{L}_{\mu^*}(h)+\mathcal{L}_{\mathcal{T}}^*-\mathcal{L}_{\mu^*}^*+\mathrm{T}_{\Gamma}(\mu^* \| \mathcal{T})                                              \\
            =                                 & \mathcal{L}_{\mathcal{S}}(h)+\mathcal{L}_{\mathcal{T}}^*-\mathcal{L}_{\mathcal{S}}^*+\mathrm{T}_{\Gamma}(\mathcal{S} \| \mathcal{T})                            \\
            \leq                              & \mathcal{L}_{\mathcal{S}}(h)+\mathcal{L}_{\mathcal{T}}^*-\mathcal{L}_{\mathcal{S}}^*+\frac{1}{2}\max _{i \neq j} \mathrm{T}_{\Gamma}\left(\mu_j \| \mu_i\right)
        \end{aligned}
    \end{equation}
    The second line follows from the definition of $\mu^*$ and the last line follows from \Cref{prop:tm_half}.
\end{proof}

\section{Proof of Theorems}\label{app:proof_thm}
\thmirm*
\begin{proof}\label{app:proof_thm_irm}
    From the $\nu$-strong convexity of $\mathcal{L}_{\mu_i} \left(\boldsymbol{\theta}\right)$, we can write for all $i$
    \begin{equation}\label{strong_convex}
        \begin{aligned}
                     & \mathcal{L}_{\mu_i} \left(\boldsymbol{\theta}\right)
            \geq \mathcal{L}_{\mu_i} \left(\boldsymbol{\theta}^*\right) + \underbrace{(\boldsymbol{\theta} - \boldsymbol{\theta}^*)^\top \nabla_{\boldsymbol{\theta}} \mathcal{L}_{\mu_i} \left(\boldsymbol{\theta}^*\right)}_{=0} + \frac{\nu}{2}\left\|\boldsymbol{\theta} - \boldsymbol{\theta}^*\right\|^2_2 \\
            \implies & \mathcal{L}_{\mu_i} \left(\boldsymbol{\theta}\right)
            - \mathcal{L}_{\mu_i} \left(\boldsymbol{\theta}^*\right) \geq \frac{\nu}{2}\left\|\boldsymbol{\theta} - \boldsymbol{\theta}^*\right\|^2_2
        \end{aligned}
    \end{equation}
    So for any $\boldsymbol{\theta} \in \Gamma$
    \begin{equation}
        \frac{\nu}{2}\left\|\boldsymbol{\theta} - \boldsymbol{\theta}^*\right\|^2_2 \leq
        \max_{i \in [n]} \left(\mathcal{L}_{\mu_i} \left(\boldsymbol{\theta}\right) - \mathcal{L}_{\mu_i} \left(\boldsymbol{\theta}^*\right)\right)\leq
        \delta_{\mathcal{S}}
        \label{f1_f2_1}
    \end{equation}
    If we define set $\mathcal{F}_2$ as
    \begin{equation}
        \mathcal{F}_2 = \left\{\boldsymbol{\theta}: \frac{\nu}{2}\left\|\boldsymbol{\theta} - \boldsymbol{\theta}^*\right\|^2_2 \leq \delta_{\mathcal{S}} \right\}
    \end{equation}
    then \Cref{f1_f2_1} implies $\Gamma \subset \mathcal{F}_2$.

    From \Cref{prop:tm_half}, we have
    \begin{equation}
        \begin{aligned}
            \mathrm{T}_\Gamma (\mathcal{S}\|\mathcal{T}) \leq & \frac{1}{2} \max _{i \neq j} \mathrm{T}_{\Gamma}\left(\mu_{j} \| \mu_i\right)                                                                                                                                                                                         \\
            =                                                 & \frac{1}{2} \max _{i \neq j} \sup _{\boldsymbol{\theta} \in \Gamma} \left(\mathcal{L}_{\mu_j}\left(\boldsymbol{\theta}\right)-\mathcal{L}_{\mu_j}\left(\boldsymbol{\theta}^*\right)-\left(\mathcal{L}_{\mu_i} \left(\boldsymbol{\theta}\right)-\mathcal{L}_{\mu_i} \left(\boldsymbol{\theta}^*\right)\right) \right)                     \\
            =                                                 & \frac{1}{2} \max _{i \neq j} \sup _{\left\|\boldsymbol{\theta} - \boldsymbol{\theta}^*\right\|^2_2 \leq \delta} \left(\mathcal{L}_{\mu_j}\left(\boldsymbol{\theta}\right)-\mathcal{L}_{\mu_j}\left(\boldsymbol{\theta}^*\right)-\left(\mathcal{L}_{\mu_i} \left(\boldsymbol{\theta}\right)-\mathcal{L}_{\mu_i} \left(\boldsymbol{\theta}^*\right)\right) \right) \\
        \end{aligned}
    \end{equation}
    To bound the terms inside the supremum, which is the difference in excess risk of $\mu_j$ and $\mu_i$, we write the Taylor expansion of $\mathcal{L}_{\mu_i} \left(\boldsymbol{\theta}\right)$ around $\boldsymbol{\theta}^*$:
    \begin{equation}
        \begin{aligned}
            & \mathcal{L}_{\mu_i} \left(\boldsymbol{\theta}\right)
              = \mathcal{L}_{\mu_i} \left(\boldsymbol{\theta}^*\right) + \underbrace{(\boldsymbol{\theta} - \boldsymbol{\theta}^*)^\top \nabla_{\boldsymbol{\theta}} \mathcal{L}_{\mu_i} \left(\boldsymbol{\theta}^*\right)}_{=0} +  \sum_{n = 2}^N \frac{1}{n !} \mathtt{\nabla^n_{\boldsymbol{\theta}} \mathcal{L}_{\mu_i} \left(\boldsymbol{\theta}^*\right) }\mathtt{\mathtt{\left(\boldsymbol{\theta} - \boldsymbol{\theta}^*\right)^{\otimes n}}} + o(\left\|\boldsymbol{\theta} - \boldsymbol{\theta}^*\right\|^N_2) \\
            \implies &\mathcal{L}_{\mu_i} \left(\boldsymbol{\theta}\right)
            - \mathcal{L}_{\mu_i} \left(\boldsymbol{\theta}^*\right)
              =   \sum_{n = 2}^N \frac{1}{n !} \mathtt{\nabla^n_{\boldsymbol{\theta}} \mathcal{L}_{\mu_i} \left(\boldsymbol{\theta}^*\right) }\mathtt{\left(\boldsymbol{\theta} - \boldsymbol{\theta}^*\right)^{\otimes n}}+ o(\left\|\boldsymbol{\theta} - \boldsymbol{\theta}^*\right\|^N_2)
        \end{aligned}
        \label{taylor:mu_i}
    \end{equation}
    Here ${\otimes n}$ denote the $n^{\text{th}}$-order tensor product, where $\mathtt{\left( \boldsymbol{\theta} -\boldsymbol{\theta}^*\right)^{\otimes n}}_{(k_1, \cdots, k_n)}$ is the product of $ \left( \boldsymbol{\theta}_{k_1} -\boldsymbol{\theta}^*_{k_1}\right), \dots, \left( \boldsymbol{\theta}_{k_n} -\boldsymbol{\theta}^*_{k_n}\right)$.
    
    Similarly, expanding $\mathcal{L}_{\mu_j}\left(\boldsymbol{\theta}\right)$ around $\boldsymbol{\theta}^*$, we have:
    \begin{equation}
        \begin{aligned}
            \mathcal{L}_{\mu_j}\left(\boldsymbol{\theta}\right)
            - \mathcal{L}_{\mu_j}\left(\boldsymbol{\theta}^*\right)
             & =   \sum_{n = 2}^N \frac{1}{n !} \mathtt{\nabla^n_{\boldsymbol{\theta}}  \mathcal{L}_{\mu_j} \left(\boldsymbol{\theta}^*\right)} \mathtt{\left(\boldsymbol{\theta} - \boldsymbol{\theta}^*\right)^{\otimes n}} + o(\left\|\boldsymbol{\theta} - \boldsymbol{\theta}^*\right\|^N_2)
        \end{aligned}
    \end{equation}
    These two equations together give an upper bound on the difference in excess risk of domain $j$ and $i$:
    \begin{equation}
        \begin{aligned}
                 & \mathcal{L}_{\mu_j}\left(\boldsymbol{\theta}\right) - \mathcal{L}_{\mu_j}\left(\boldsymbol{\theta}^*\right) - \left(\mathcal{L}_{\mu_i} \left(\boldsymbol{\theta}\right) - \mathcal{L}_{\mu_i} \left(\boldsymbol{\theta}^*\right)\right)                                           \\
            =    &  \sum_{n = 2}^N \frac{1}{n !} \left(\mathtt{\nabla^n_{\boldsymbol{\theta}} \mathcal{L}_{\mu_j} \left(\boldsymbol{\theta}^*\right) }- \mathtt{\nabla^n_{\boldsymbol{\theta}} \mathcal{L}_{\mu_i} \left(\boldsymbol{\theta}^*\right)}\right) \mathtt{\left(\boldsymbol{\theta} - \boldsymbol{\theta}^*\right)^{\otimes n}} + o(\left\|\boldsymbol{\theta} - \boldsymbol{\theta}^*\right\|^N_2) \\
            \leq & \sum_{n = 2}^N \frac{1}{n !} \left\|\mathtt{\nabla^n_{\boldsymbol{\theta}} \mathcal{L}_{\mu_j} \left(\boldsymbol{\theta}^*\right) }- \mathtt{\nabla^n_{\boldsymbol{\theta}} \mathcal{L}_{\mu_i} \left(\boldsymbol{\theta}^*\right)}\right\|_F \left\| \mathtt{\left(\boldsymbol{\theta} - \boldsymbol{\theta}^*\right)^{\otimes n}}\right\|_F + o(\left\|\boldsymbol{\theta} - \boldsymbol{\theta}^*\right\|^N_2)  \\
            \leq & \sum_{n = 2}^N \frac{1}{n !} \left\|\mathtt{\nabla^n_{\boldsymbol{\theta}} \mathcal{L}_{\mu_j} \left(\boldsymbol{\theta}^*\right) }- \mathtt{\nabla^n_{\boldsymbol{\theta}} \mathcal{L}_{\mu_i} \left(\boldsymbol{\theta}^*\right)}\right\|_F \left\|\boldsymbol{\theta} - \boldsymbol{\theta}^*\right\|_2^n + o(\left\|\boldsymbol{\theta} - \boldsymbol{\theta}^*\right\|^N_2)                                                                  \\
        \end{aligned}
    \end{equation}
    Taking the supremum over $\boldsymbol{\theta} \in \mathcal{F}_2$ on both sides, for any $i \neq j$, we have:
    \begin{equation}
        \begin{aligned}
             &\sup_{\left\|\boldsymbol{\theta}-\boldsymbol{\theta}^*\right\|^2_2 \leq \delta} \left(\mathcal{L}_{\mu_j}\left(\boldsymbol{\theta}\right) - \mathcal{L}_{\mu_j}\left(\boldsymbol{\theta}^*\right) - \left(\mathcal{L}_{\mu_i} \left(\boldsymbol{\theta}\right) - \mathcal{L}_{\mu_i} \left(\boldsymbol{\theta}^*\right)\right)\right)\\
            \leq & \sum_{n = 2}^N \frac{1}{n !} \delta^\frac{n}{2} \left\|\mathtt{\nabla^n_{\boldsymbol{\theta}} \mathcal{L}_{\mu_j} \left(\boldsymbol{\theta}^*\right) }- \mathtt{\nabla^n_{\boldsymbol{\theta}} \mathcal{L}_{\mu_i} \left(\boldsymbol{\theta}^*\right)}\right\|_F  + o(\delta^\frac{N}{2})
        \end{aligned}
    \end{equation}

    Finally, by taking the maximum over $i$ and $j$, $i \neq j$, on both sides, we can bound the transfer measure as follows:
    \begin{equation}
        \begin{aligned}
            \mathrm{T}_\Gamma (\mathcal{S}\|\mathcal{T}) \leq & \frac{1}{2} \max _{i \neq j} \sup_{\left\|\boldsymbol{\theta}-\boldsymbol{\theta}^*\right\|^2_2 \leq \delta} \left(\mathcal{L}_{\mu_j}\left(\boldsymbol{\theta}\right) - \mathcal{L}_{\mu_j}\left(\boldsymbol{\theta}^*\right) - \left(\mathcal{L}_{\mu_i} \left(\boldsymbol{\theta}\right) - \mathcal{L}_{\mu_i} \left(\boldsymbol{\theta}^*\right)\right)\right) \\
            \leq  & \max_{i \neq j} \sum_{n = 2}^N \frac{1}{n !} \delta^\frac{n}{2} \left\|\mathtt{\nabla^n_{\boldsymbol{\theta}} \mathcal{L}_{\mu_j} \left(\boldsymbol{\theta}^*\right) }- \mathtt{\nabla^n_{\boldsymbol{\theta}} \mathcal{L}_{\mu_i} \left(\boldsymbol{\theta}^*\right)}\right\|_F  + o(\delta^\frac{N}{2})
        \end{aligned} \qedhere
    \end{equation}
\end{proof}

\thmnonirm*
\begin{proof}\label{app:proof_thm_nonirm}
    From the $\nu$-strong convexity of $\mathcal{L}_{\mu_i} \left(\boldsymbol{\theta}\right)$, we can write for all $i$
    \begin{equation}
        \begin{aligned}
                     & \mathcal{L}_{\mu_i} \left(\boldsymbol{\theta}\right)
            \geq \mathcal{L}_{\mu_i} \left(\boldsymbol{\theta}^*\right) + {(\boldsymbol{\theta} - \boldsymbol{\theta}^*)^\top \nabla_{\boldsymbol{\theta}} \mathcal{L}_{\mu_i} \left(\boldsymbol{\theta}^*\right)} + \frac{\nu}{2}\|\boldsymbol{\theta} - \boldsymbol{\theta}^*\|^2_2 \\
\implies & {\delta_{\mathcal{S}} \geq  \max_{i \in [K]} \mathcal{L}_{\mu_i} \left(\boldsymbol{\theta}\right) -\mathcal{L}_{\mu_i} \left(\boldsymbol{\theta^*}\right)\geq
             \underbrace{\max_{i \in [K]} (\boldsymbol{\theta} - \boldsymbol{\theta}^*)^\top \nabla_{\boldsymbol{\theta}} \mathcal{L}_{\mu_i} \left(\boldsymbol{\theta}^*\right)}_{\geq 0 \text{ since $\boldsymbol{\theta^*}$ is weak Pareto Optimal}} + \frac{\nu}{2}\|\boldsymbol{\theta} - \boldsymbol{\theta}^*\|^2_2 \geq \frac{\nu}{2}\|\boldsymbol{\theta} - \boldsymbol{\theta}^*\|^2_2}
            \label{strong_convex_40}
        \end{aligned}
    \end{equation}
    Now define $\mathcal{F}_2$ as
    \begin{equation}
        \mathcal{F}_2 = \left\{\boldsymbol{\theta}: \max_{i \in [K]} \frac{\nu}{2}\|\boldsymbol{\theta} - \boldsymbol{\theta}^*\|^2_2 \leq \delta_{\mathcal{S}} \right\} = \left\{\boldsymbol{\theta}: \max_{i \in [K]} \|\boldsymbol{\theta} - \boldsymbol{\theta}^*\|^2_2 \leq \frac{2 \delta_{\mathcal{S}}}{\nu} = \delta \right\}
    \end{equation}
    From \Cref{strong_convex_40}, we have $\Gamma \subseteq \mathcal{F}_2$, and thus $\mathrm{T}_\Gamma (\mathcal{S}\|\mathcal{T}) \leq \mathrm{T}_{\mathcal{F}_2} (\mathcal{S}\|\mathcal{T})$.\\
    \begin{equation}
        \begin{aligned}
            \mathrm{T}_{\mathcal{F}_2} (\mathcal{S}\|\mathcal{T}) \leq & \frac{1}{2} \max _{i \neq j} \mathrm{T}_{\mathcal{F}_2}\left(\mu_{j} \| \mu_i\right)                                                                                                                                                                                 \\
            =                                                          & \frac{1}{2} \max _{i \neq j} \sup _{\boldsymbol{\theta} \in \mathcal{F}_2} \left({\mathcal{L}}_{\mu_j}\left(\boldsymbol{\theta}\right)-{\mathcal{L}}_{\mu_j}\left(\boldsymbol{\theta}^*_j\right)-\left({\mathcal{L}}_{\mu_i} \left(\boldsymbol{\theta}\right)-{\mathcal{L}}_{\mu_i} \left(\boldsymbol{\theta}^*_i\right)\right) \right)
        \end{aligned}
    \end{equation}
    We write the Taylor expansion of ${\mathcal{L}}_{\mu_i} \left(\boldsymbol{\theta}\right)$ and ${\mathcal{L}}_{\mu_j}\left(\boldsymbol{\theta}\right)$  around $\boldsymbol{\theta}^*$ as done in the proof of \Cref{thm:moment_alignment_irm}:
    \begin{equation*}
        \begin{aligned}
            \mathcal{L}_{\mu_i} \left(\boldsymbol{\theta}\right)
             & = \mathcal{L}_{\mu_i} \left(\boldsymbol{\theta}^*\right) + (\boldsymbol{\theta} - \boldsymbol{\theta}^*)^\top \nabla_{\boldsymbol{\theta}} \mathcal{L}_{\mu_i} \left(\boldsymbol{\theta}^*\right) +  \sum_{n = 2}^N \frac{1}{n !} \mathtt{\nabla^n_{\boldsymbol{\theta}} \mathcal{L}_{\mu_i} \left(\boldsymbol{\theta}^*\right) }\mathtt{\left(\boldsymbol{\theta} - \boldsymbol{\theta}^*\right)^{\otimes n}}  + o(\left\|\boldsymbol{\theta} - \boldsymbol{\theta}^*\right\|^N_2)
        \end{aligned}
    \end{equation*}
    \begin{equation*}
        \begin{aligned}
            \mathcal{L}_{\mu_j}\left(\boldsymbol{\theta}\right)
             & = \mathcal{L}_{\mu_j}\left(\boldsymbol{\theta}^*\right) + (\boldsymbol{\theta} - \boldsymbol{\theta}^*)^\top \nabla_{\boldsymbol{\theta}} \mathcal{L}_{\mu_j}\left(\boldsymbol{\theta}^*\right) +  \sum_{n = 2}^N \frac{1}{n !} \mathtt{\nabla^n_{\boldsymbol{\theta}} \mathcal{L}_{\mu_j} \left(\boldsymbol{\theta}^*\right) }\mathtt{\left(\boldsymbol{\theta} - \boldsymbol{\theta}^*\right)^{\otimes n}}  + o(\left\|\boldsymbol{\theta} - \boldsymbol{\theta}^*\right\|^N_2)
        \end{aligned}
    \end{equation*}

    Combining the two equations above, we have:
    \begin{equation*}
        \begin{aligned}
                 & {\mathcal{L}}_{\mu_j}\left(\boldsymbol{\theta}\right)-{\mathcal{L}}_{\mu_j}\left(\boldsymbol{\theta}^*_j\right)-\left({\mathcal{L}}_{\mu_i} \left(\boldsymbol{\theta}\right)-{\mathcal{L}}_{\mu_i} \left(\boldsymbol{\theta}^*_i\right) \right) \\
                 \leq & \mathcal{L}_{\mu_j}\left(\boldsymbol{\theta}^*\right) + (\boldsymbol{\theta}- \boldsymbol{\theta}^*)^\top \nabla_{\boldsymbol{\theta}} \mathcal{L}_{\mu_j}\left(\boldsymbol{\theta}^*\right) +  \sum_{n = 2}^N \frac{1}{n !} \mathtt{\nabla^n_{\boldsymbol{\theta}} \mathcal{L}_{\mu_j} \left(\boldsymbol{\theta}^*\right) }\mathtt{\left(\boldsymbol{\theta} - \boldsymbol{\theta}^*\right)^{\otimes n}} -{\mathcal{L}}_{\mu_i} \left(\boldsymbol{\theta}^*_j\right) \\
                 & - \left( \mathcal{L}_{\mu_i} \left(\boldsymbol{\theta}^*\right) + (\boldsymbol{\theta} - \boldsymbol{\theta}^*)^\top \nabla_{\boldsymbol{\theta}} \mathcal{L}_{\mu_i} \left(\boldsymbol{\theta}^*\right) +  \sum_{n = 2}^N \frac{1}{n !} \mathtt{\nabla^n_{\boldsymbol{\theta}} \mathcal{L}_{\mu_i} \left(\boldsymbol{\theta}^*\right) }\mathtt{\left(\boldsymbol{\theta} - \boldsymbol{\theta}^*\right)^{\otimes n}} -{\mathcal{L}}_{\mu_i} \left(\boldsymbol{\theta}^*_i\right)\right)  + o(\left\|\boldsymbol{\theta} - \boldsymbol{\theta}^*\right\|^N_2) \\
            \leq & \mathcal{L}_{\mu_j}\left(\boldsymbol{\theta}^*\right)- \mathcal{L}_{\mu_j}\left(\boldsymbol{\theta}^*_j\right) - \left( \mathcal{L}_{\mu_i} \left(\boldsymbol{\theta}^*\right) - \mathcal{L}_{\mu_i} \left(\boldsymbol{\theta}^*_i\right) \right) \\
                 & + \sum_{n =1}^N \frac{1}{n !} \left\|\mathtt{\nabla^n_{\boldsymbol{\theta}} \mathcal{L}_{\mu_j} \left(\boldsymbol{\theta}^*\right) }- \mathtt{\nabla^n_{\boldsymbol{\theta}} \mathcal{L}_{\mu_i} \left(\boldsymbol{\theta}^*\right) }\right\|_F \left\|\boldsymbol{\theta} - \boldsymbol{\theta}^*\right\|^{n}_2  + o(\left\|\boldsymbol{\theta} - \boldsymbol{\theta}^*\right\|^N_2) 
        \end{aligned}
    \end{equation*}

    Taking the supremum over $\boldsymbol{\theta} \in \mathcal{F}_2$ on both sides, for any $i \neq j$, we have:
    \begin{equation*}
        \begin{aligned}
                 & \sup_{\boldsymbol{\theta} \in \mathcal{F}_2}  \mathcal{L}_{\mu_j}\left(\boldsymbol{\theta}\right)- \mathcal{L}_{\mu_j}\left(\boldsymbol{\theta}^*_j\right) - \left(\mathcal{L}_{\mu_i} \left(\boldsymbol{\theta}\right) - \mathcal{L}_{\mu_i} \left(\boldsymbol{\theta}^*_i\right)\right)                                             
             + \sum_{n = 1}^N \frac{1}{n !} \left\|\mathtt{\nabla^n_{\boldsymbol{\theta}} \mathcal{L}_{\mu_j} \left(\boldsymbol{\theta}^*\right) }- \mathtt{\nabla^n_{\boldsymbol{\theta}} \mathcal{L}_{\mu_i} \left(\boldsymbol{\theta}^*\right) }\right\|_F \left\|\boldsymbol{\theta} - \boldsymbol{\theta}^*\right\|^{n}_2  + o(\left\|\boldsymbol{\theta} - \boldsymbol{\theta}^*\right\|^N_2)                                                                                                                                                                                        \\
            \leq & \mathcal{L}_{\mu_j}\left(\boldsymbol{\theta}^*\right)- \mathcal{L}_{\mu_j}\left(\boldsymbol{\theta}^*_j\right) + \mathcal{L}_{\mu_i} \left(\boldsymbol{\theta}^*\right) - \mathcal{L}_{\mu_i} \left(\boldsymbol{\theta}^*_i\right) 
            + \sum_{n = 1}^N \frac{1}{n !} \delta^{\frac{n}{2}} \left\|\mathtt{\nabla^n_{\boldsymbol{\theta}} \mathcal{L}_{\mu_j} \left(\boldsymbol{\theta}^*\right) }- \mathtt{\nabla^n_{\boldsymbol{\theta}} \mathcal{L}_{\mu_i} \left(\boldsymbol{\theta}^*\right) }\right\|_F +  o(\delta^{\frac{N}{2}}) 
        \end{aligned}
    \end{equation*}
    Finally, maximizing over $i \neq j$ on both sides, the transfer measure is bounded by:
    \begin{equation}
        \begin{aligned}
            \mathrm{T}_\Gamma (\mathcal{S}\|\mathcal{T}) \leq & \frac{1}{2} \max _{i \neq j} \sup_{\boldsymbol{\theta} \in \mathcal{F}_2} \left(\mathcal{L}_{\mu_j}\left(\boldsymbol{\theta}\right) - \mathcal{L}_{\mu_j}\left(\boldsymbol{\theta}^*\right) - \left(\mathcal{L}_{\mu_i} \left(\boldsymbol{\theta}\right) - \mathcal{L}_{\mu_i} \left(\boldsymbol{\theta}^*\right)\right)\right) \\
            \leq                                              & \frac{1}{2} \max_{i \neq j} \mathcal{L}_{\mu_j}\left(\boldsymbol{\theta}^*\right)- \mathcal{L}_{\mu_j}\left(\boldsymbol{\theta}^*_j\right) + \mathcal{L}_{\mu_i} \left(\boldsymbol{\theta}^*\right) - \mathcal{L}_{\mu_i} \left(\boldsymbol{\theta}^*_i\right) 
            \\
                                                              & \quad \quad \quad + \sum_{n = 2}^N \frac{1}{n !} \delta^{\frac{n}{2}} \left\|\mathtt{\nabla^n_{\boldsymbol{\theta}} \mathcal{L}_{\mu_j} \left(\boldsymbol{\theta}^*\right) }- \mathtt{\nabla^n_{\boldsymbol{\theta}} \mathcal{L}_{\mu_i} \left(\boldsymbol{\theta}^*\right) }\right\|_F +  o(\delta^{\frac{N}{2}})
        \end{aligned}
        \label{eq:general_bound}
    \end{equation}
    We have proved the first part of the theorem.
    Now suppose there exists a constant $g$ such that $\min _{\boldsymbol{\theta} \in \mathcal{H}} \max _{i \in[K]} \|\nabla_{\boldsymbol{\theta}} \mathcal{L}_{\mu_i} \left(\boldsymbol{\theta}\right)\|_2 \leq g$, we can further upper bound the first-order term ($n=1$) by $g$:
        \begin{equation}
        \begin{aligned}
         \delta^{\frac{1}{2}} \left\|\nabla_{\boldsymbol{\theta}} \mathcal{L}_{\mu_j} \left(\boldsymbol{\theta}^*\right) - \nabla_{\boldsymbol{\theta}} \mathcal{L}_{\mu_i} \left(\boldsymbol{\theta}^*\right) \right\|_F
         = & \delta^{\frac{1}{2}} \left\|\nabla_{\boldsymbol{\theta}} \mathcal{L}_{\mu_j} \left(\boldsymbol{\theta}^*\right) - \nabla_{\boldsymbol{\theta}} \mathcal{L}_{\mu_i} \left(\boldsymbol{\theta}^*\right) \right\|_2 \\
         \leq &\delta^{\frac{1}{2}} \left\|\nabla_{\boldsymbol{\theta}} \mathcal{L}_{\mu_j} \left(\boldsymbol{\theta}^*\right)\right\|_2 + \left\|\nabla_{\boldsymbol{\theta}} \mathcal{L}_{\mu_i} \left(\boldsymbol{\theta}^*\right) \right\|_2 \\
         \leq &2 \delta^{\frac{1}{2}} g
        \end{aligned}
    \end{equation}
Replacing the first-order term in \Cref{eq:general_bound} with this upper bound completes the proof.
\end{proof}
\begin{restatable}[weakly Pareto optimal~\citep{chang_chapter_2015}]{definition}{wpo}
   A point $\boldsymbol{\theta} \in \Gamma$ is weakly Pareto optimal iff $\nexists$ another point $\boldsymbol{\theta}^\prime \in \Gamma$ such that $\mathcal{L}_{\mu_i} (\boldsymbol{\theta}^\prime)<\mathcal{L}_{\mu_i} \left(\boldsymbol{\theta}\right)\;\; \forall i$.
   \end{restatable}

\begin{restatable}[]{lemma}{lemWPO}
For convex $\{\mathcal{L}_{\mu_i}\}_{i= 1}^K$, $\boldsymbol{\theta} \in \mathcal{P}(\{\mathcal{L}_{\mu_i}\}_{i= 1}^K)$ iff  $\boldsymbol{\theta}$ is weakly Pareto optimal.
\end{restatable}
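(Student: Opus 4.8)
The plan is to establish the two inclusions separately, each via its contrapositive, viewing both $\mathcal{P}(\{\mathcal{L}_{\mu_i}\}_{i=1}^K)$ and the set of weakly Pareto optimal points as subsets of the feasible set $\Gamma$ (membership in $\Gamma$ is part of the definition of weak Pareto optimality, and I read $\mathcal{P}$ the same way). The two structural facts I would use are: each $\mathcal{L}_{\mu_i}$ is differentiable (from the $M$-times differentiability hypothesis) and convex, so it obeys the first-order inequality $\mathcal{L}_{\mu_i}(\boldsymbol{\theta}') \ge \mathcal{L}_{\mu_i}(\boldsymbol{\theta}) + (\boldsymbol{\theta}'-\boldsymbol{\theta})^{\top}\nabla_{\boldsymbol{\theta}}\mathcal{L}_{\mu_i}(\boldsymbol{\theta})$; and $\Gamma$ is convex, being a sublevel set of the convex function $\boldsymbol{\theta}\mapsto\max_i(\mathcal{L}_{\mu_i}(\boldsymbol{\theta})-\mathcal{L}_{\mu_i}(\boldsymbol{\theta}^*))$.

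For the direction $\boldsymbol{\theta}\in\mathcal{P}\Rightarrow\boldsymbol{\theta}$ weakly Pareto optimal, I argue the contrapositive: if $\boldsymbol{\theta}$ is not weakly Pareto optimal, there is $\boldsymbol{\theta}'\in\Gamma$ with $\mathcal{L}_{\mu_i}(\boldsymbol{\theta}')<\mathcal{L}_{\mu_i}(\boldsymbol{\theta})$ for every $i$. Substituting this $\boldsymbol{\theta}'$ into the first-order convexity inequality gives $(\boldsymbol{\theta}'-\boldsymbol{\theta})^{\top}\nabla_{\boldsymbol{\theta}}\mathcal{L}_{\mu_i}(\boldsymbol{\theta}) \le \mathcal{L}_{\mu_i}(\boldsymbol{\theta}')-\mathcal{L}_{\mu_i}(\boldsymbol{\theta}) < 0$ for all $i$, hence $\max_{i}(\boldsymbol{\theta}'-\boldsymbol{\theta})^{\top}\nabla_{\boldsymbol{\theta}}\mathcal{L}_{\mu_i}(\boldsymbol{\theta})<0$, so $\boldsymbol{\theta}\notin\mathcal{P}$. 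This direction uses only convexity of the losses.

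For the converse, $\boldsymbol{\theta}$ weakly Pareto optimal $\Rightarrow\boldsymbol{\theta}\in\mathcal{P}$, I again contrapose: if $\boldsymbol{\theta}\notin\mathcal{P}$, pick $\boldsymbol{\theta}'\in\Gamma$ with $(\boldsymbol{\theta}'-\boldsymbol{\theta})^{\top}\nabla_{\boldsymbol{\theta}}\mathcal{L}_{\mu_i}(\boldsymbol{\theta})<0$ for all $i$ and set $\boldsymbol{d}=\boldsymbol{\theta}'-\boldsymbol{\theta}$. For $t\in(0,1]$ the point $\boldsymbol{\theta}+t\boldsymbol{d}=(1-t)\boldsymbol{\theta}+t\boldsymbol{\theta}'$ lies in $\Gamma$ by convexity of $\Gamma$, and a first-order expansion gives $\mathcal{L}_{\mu_i}(\boldsymbol{\theta}+t\boldsymbol{d})=\mathcal{L}_{\mu_i}(\boldsymbol{\theta})+t\,\boldsymbol{d}^{\top}\nabla_{\boldsymbol{\theta}}\mathcal{L}_{\mu_i}(\boldsymbol{\theta})+o(t)$; since the $K$ slopes $\boldsymbol{d}^{\top}\nabla_{\boldsymbol{\theta}}\mathcal{L}_{\mu_i}(\boldsymbol{\theta})$ are all strictly negative and $K$ is finite, there is a small $t_0>0$ with $\mathcal{L}_{\mu_i}(\boldsymbol{\theta}+t_0\boldsymbol{d})<\mathcal{L}_{\mu_i}(\boldsymbol{\theta})$ for all $i$ at once, contradicting weak Pareto optimality. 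Combining the two contrapositives gives the equivalence.

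The argument is essentially routine; the only steps needing care, and what I expect to be the (mild) main obstacle, are the feasibility bookkeeping in the converse — namely invoking convexity of $\Gamma$ so that the probing segment $\boldsymbol{\theta}+t\boldsymbol{d}$ stays in $\Gamma$, which is the one place the structure of $\Gamma$ is genuinely used — together with the use of finiteness of $K$ to pass from ``each loss eventually decreases along $\boldsymbol{d}$'' to a single step size decreasing all of them simultaneously.
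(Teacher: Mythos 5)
Your proof is correct and follows essentially the same route as the paper's: the forward direction is the first-order convexity inequality (you state its contrapositive, the paper states it directly), and the converse is the same first-order descent argument along $\boldsymbol{d}=\boldsymbol{\theta}'-\boldsymbol{\theta}$ with a small step size. If anything you are slightly more careful than the paper, which does not explicitly verify that the probing point $\boldsymbol{\theta}+t\boldsymbol{d}$ remains in $\Gamma$; your observation that $\Gamma$ is a convex sublevel set settles that detail.
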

\begin{proof}
$(\implies)$ Let $\boldsymbol{\theta} \in \mathcal{P}(\{\mathcal{L}_{\mu_i}\}_{i= 1}^K)$, by convexity, we have for all $\boldsymbol{\theta}^\prime \in \Gamma $ and $i \in [K]$, $$\mathcal{L}_{\mu_i}(\boldsymbol{\theta}^\prime) \geq \mathcal{L}_{\mu_i}(\boldsymbol{\theta}) + (\boldsymbol{\theta}^\prime - \boldsymbol{\theta})^\top \nabla_{\boldsymbol{\theta}}\mathcal{L}_{\mu_i}(\boldsymbol{\theta}) \geq \mathcal{L}_{\mu_i}(\boldsymbol{\theta}),$$ where the gradient term is non-negative by $\boldsymbol{\theta} \in \mathcal{P}(\{\mathcal{L}_{\mu_i}\}_{i= 1}^K)$. Thus, $\boldsymbol{\theta}$ is weakly Pareto optimal as for all $\boldsymbol{\theta}^\prime \in \Gamma$ there is some $i \in [K]$ such that $\mathcal{L}_{\mu_i}(\boldsymbol{\theta}^\prime) \geq \mathcal{L}_{\mu_i}(\boldsymbol{\theta})$.~\\
$(\impliedby)$ Suppose for contradiction that $\boldsymbol{\theta} \notin \mathcal{P}(\{\mathcal{L}_{\mu_i}\}_{i= 1}^K)$, Then there exists some $\boldsymbol{\theta}^\prime$ such that $(\boldsymbol{\theta}^\prime - \boldsymbol{\theta})^\top \nabla_{\boldsymbol{\theta}}\mathcal{L}_{\mu_i}(\boldsymbol{\theta}) < 0$ for all $i \in [K]$. Using the Taylor expansion,
$$\mathcal{L}_{\mu_i}(\boldsymbol{\theta}^\prime) = \mathcal{L}_{\mu_i}(\boldsymbol{\theta})  +  (\boldsymbol{\theta}^\prime - \boldsymbol{\theta})^\top \nabla_{\boldsymbol{\theta}}\mathcal{L}_{\mu_i}(\boldsymbol{\theta}) + \mathcal{O} \left(\|\boldsymbol{\theta}^\prime - \boldsymbol{\theta}\|^2_2\right).$$
Choosing a scaled step $\boldsymbol{\theta}^* = \boldsymbol{\theta} + \delta(\boldsymbol{\theta}^\prime -\boldsymbol{\theta})$ to still satisfy $(\boldsymbol{\theta}^* - \boldsymbol{\theta})^\top \nabla_{\boldsymbol{\theta}}\mathcal{L}_{\mu_i}(\boldsymbol{\theta}) < 0$. For small enough $\delta \rightarrow 0$, $\mathcal{O} \left(\|\boldsymbol{\theta}^* - \boldsymbol{\theta}\|^2_2\right)$ becomes negligible and  $$\mathcal{L}_{\mu_i}(\boldsymbol{\theta}^*) \rightarrow \mathcal{L}_{\mu_i}(\boldsymbol{\theta})  +  (\boldsymbol{\theta}^* - \boldsymbol{\theta})^\top \nabla_{\boldsymbol{\theta}}\mathcal{L}_{\mu_i}(\boldsymbol{\theta}) < \mathcal{L}_{\mu_i}(\boldsymbol{\theta}),$$ and $\boldsymbol{\theta}$ is not weakly Pareto optimal. By contrapositive, we have shown that if $\boldsymbol{\theta}$ is weakly Pareto optimal, $\boldsymbol{\theta} \in \mathcal{P}(\{\mathcal{L}_{\mu_i}\}_{i= 1}^K)$.
\end{proof}

Note that the convexity assumption is necessary, as we can construct a simple one-dimensional counterexample where $\theta \in \mathcal{P}(\{\mathcal{L}_{\mu_i}\}_{i= 1}^K)$ but not weakly Pareto optimal. We consider two functions $\mathcal{L}_1(\theta)$ and $\mathcal{L}_2(\theta)$ given by:
\begin{align*}
    \mathcal{L}_1(\theta) = \sin\theta, \;
    \mathcal{L}_2(\theta)= \theta^3, \; \Gamma = \mathbb{R}.
\end{align*}
At $\theta = 0$, we compute the gradients:
\begin{align*}
    \mathcal{L}_1'(0) &= \cos(0) = 1, \\
    \mathcal{L}_2'(0) &= 0.
\end{align*}
Thus, for any $\theta' \in \mathbb{R}$:
\begin{align*}
    (\theta' - 0) \mathcal{L}_1'(0) &= \theta', \\
    (\theta' - 0) \mathcal{L}_2'(0) &= 0.
\end{align*}
Taking the maximum, we have:
\[
\max \{\theta', 0\} \geq 0, \quad \forall \theta'.
\]
therefore $\theta = 0 \in \mathcal{P}(\mathcal{L}_1, \mathcal{L}_2)$.

However, $\theta = 0$ is not weakly Pareto optimal. Consider $\theta^* = -0.5$. The function values at $\theta^*$ are:
\begin{align*}
    \mathcal{L}_1(\theta^*) &= \sin(-0.5) = -\frac{\sqrt{2}}{2} < \mathcal{L}_1(0) = 0, \\
    \mathcal{L}_2(\theta^*) &= (-0.5)^3 = -0.125 < \mathcal{L}_2(0) = 0.
\end{align*}
Since there exists $\theta^*$ such that both $\mathcal{L}_i(\theta^*) < \mathcal{L}_i(\theta)$ for all $i$, $\theta = 0$ is not weakly Pareto optimal.
\section{Proof of Corollaries}\label{app:proof_coro}
\corhessianirm*
\begin{proof}\label{app:proof_coro_irm}
    We use the same $\mathcal{F}_2$ as in the proof of \Cref{thm:moment_alignment_irm} and have $\Gamma \subset \mathcal{F}_2$.

    From \Cref{prop:tm_half}, we have
    \begin{equation}
        \begin{aligned}
            \mathrm{T}_\Gamma (\mathcal{S}\|\mathcal{T}) \leq & \frac{1}{2} \max _{i \neq j} \mathrm{T}_{\Gamma}\left(\mu_{j} \| \mu_i\right)                                                                                                                                                                                         \\
            =                                                 & \frac{1}{2} \max _{i \neq j} \sup _{\boldsymbol{\theta} \in \Gamma} \left(\mathcal{L}_{\mu_j}\left(\boldsymbol{\theta}\right)-\mathcal{L}_{\mu_j}\left(\boldsymbol{\theta}^*\right)-\left(\mathcal{L}_{\mu_i} \left(\boldsymbol{\theta}\right)-\mathcal{L}_{\mu_i} \left(\boldsymbol{\theta}^*\right)\right) \right)                     \\
            =                                                 & \frac{1}{2} \max _{i \neq j} \sup _{\|\boldsymbol{\theta} - \boldsymbol{\theta}^*\|^2_2 \leq \delta} \left(\mathcal{L}_{\mu_j}\left(\boldsymbol{\theta}\right)-\mathcal{L}_{\mu_j}\left(\boldsymbol{\theta}^*\right)-\left(\mathcal{L}_{\mu_i} \left(\boldsymbol{\theta}\right)-\mathcal{L}_{\mu_i} \left(\boldsymbol{\theta}^*\right)\right) \right) \\
        \end{aligned}
    \end{equation}
    To bound the terms inside the supremum, which is the difference in excess risk of $\mu_j$ and $\mu_i$, we write the Taylor expansion of $\mathcal{L}_{\mu_i} \left(\boldsymbol{\theta}\right)$ around $\boldsymbol{\theta}^*$:
    \begin{equation}
        \begin{aligned}
            & \mathcal{L}_{\mu_i} \left(\boldsymbol{\theta}\right)
             = \mathcal{L}_{\mu_i} \left(\boldsymbol{\theta}^*\right) + \underbrace{(\boldsymbol{\theta} - \boldsymbol{\theta}^*)^\top \nabla_{\boldsymbol{\theta}} \mathcal{L}_{\mu_i} \left(\boldsymbol{\theta}^*\right)}_{=0} + \frac{1}{2}(\boldsymbol{\theta} - \boldsymbol{\theta}^*)^\top\mathbf{H}_{\mu_i} \left(\boldsymbol{\theta}^*\right)(\boldsymbol{\theta} - \boldsymbol{\theta}^*)  + o(\|\boldsymbol{\theta} - \boldsymbol{\theta}^*\|^2_2) \\
            \implies &\mathcal{L}_{\mu_i} \left(\boldsymbol{\theta}\right)
            - \mathcal{L}_{\mu_i} \left(\boldsymbol{\theta}^*\right)
             = \frac{1}{2}(\boldsymbol{\theta} - \boldsymbol{\theta}^*)^\top \mathbf{H}_{\mu_i} \left(\boldsymbol{\theta}^*\right)(\boldsymbol{\theta} - \boldsymbol{\theta}^*) + o(\|\boldsymbol{\theta} - \boldsymbol{\theta}^*\|^2_2)
        \end{aligned}
    \end{equation}
    Similarly, expand $\mathcal{L}_{\mu_j}\left(\boldsymbol{\theta}\right)$ around $\boldsymbol{\theta}^*$, we have:
    \begin{equation}
        \begin{aligned}
            \mathcal{L}_{\mu_j}\left(\boldsymbol{\theta}\right) - \mathcal{L}_{\mu_j}\left(\boldsymbol{\theta}^*\right)
             & = \frac{1}{2}(\boldsymbol{\theta} - \boldsymbol{\theta}^*)^\top \mathbf{H}_{\mu_j}\left(\boldsymbol{\theta}^*\right)(\boldsymbol{\theta} - \boldsymbol{\theta}^*) + o(\|\boldsymbol{\theta} - \boldsymbol{\theta}^*\|^2_2)
        \end{aligned}
    \end{equation}
    These two equations together give an upper bound on the difference in excess risk of domain $j$ and $i$:
    \begin{equation}
        \begin{aligned}
                 & \mathcal{L}_{\mu_j}\left(\boldsymbol{\theta}\right) - \mathcal{L}_{\mu_j}\left(\boldsymbol{\theta}^*\right) - \left(\mathcal{L}_{\mu_i} \left(\boldsymbol{\theta}\right) - \mathcal{L}_{\mu_i} \left(\boldsymbol{\theta}^*\right)\right)                                           \\
            =    & \frac{1}{2}(\boldsymbol{\theta} - \boldsymbol{\theta}^*)^\top\mathbf{H}_{\mu_j}\left(\boldsymbol{\theta}^*\right)(\boldsymbol{\theta} - \boldsymbol{\theta}^*) - \frac{1}{2}(\boldsymbol{\theta} - \boldsymbol{\theta}^*)^\top\mathbf{H}_{\mu_i} \left(\boldsymbol{\theta}^*\right)(\boldsymbol{\theta} - \boldsymbol{\theta}^*) + o(\|\boldsymbol{\theta} - \boldsymbol{\theta}^*\|^2_2) \\
            =    & \frac{1}{2}(\boldsymbol{\theta} - \boldsymbol{\theta}^*)^\top \left(\mathbf{H}_{\mu_j}\left(\boldsymbol{\theta}^*\right) -\mathbf{H}_{\mu_i} \left(\boldsymbol{\theta}^*\right)\right) (\boldsymbol{\theta} - \boldsymbol{\theta}^*) +o(\|\boldsymbol{\theta} - \boldsymbol{\theta}^*\|^2_2)                                          \\
            \leq & \frac{1}{2}\|\boldsymbol{\theta} - \boldsymbol{\theta}^*\|^2_2 \|\mathbf{H}_{\mu_j}\left(\boldsymbol{\theta}^*\right) -\mathbf{H}_{\mu_i} \left(\boldsymbol{\theta}^*\right)\|_2  + o(\|\boldsymbol{\theta} - \boldsymbol{\theta}^*\|^2_2)                                                                  \\
        \end{aligned}
    \end{equation}
    Taking the supremum over $\boldsymbol{\theta} \in \mathcal{F}_2$ on both sides, for any $i \neq j$, we have:
    \begin{equation*}
        \begin{aligned}
             & \sup_{\|\boldsymbol{\theta}-\boldsymbol{\theta}^*\|^2_2 \leq \delta} \left(\mathcal{L}_{\mu_j}\left(\boldsymbol{\theta}\right) - \mathcal{L}_{\mu_j}\left(\boldsymbol{\theta}^*\right) - \left(\mathcal{L}_{\mu_i} \left(\boldsymbol{\theta}\right) - \mathcal{L}_{\mu_i} \left(\boldsymbol{\theta}^*\right)\right)\right)
            \leq \frac{1}{2}\delta \|\mathbf{H}_{\mu_j}\left(\boldsymbol{\theta}^*\right) -\mathbf{H}_{\mu_i} \left(\boldsymbol{\theta}^*\right)\|_2  + o(\delta)
        \end{aligned}
    \end{equation*}
    Finally, by taking the maximum over $i$ and $j$, $i \neq j$, on both sides, we can bound the transfer measure as follows:
    \begin{equation}
        \begin{aligned}
            \mathrm{T}_\Gamma (\mathcal{S}\|\mathcal{T}) \leq & \frac{1}{2} \max _{i \neq j} \sup_{\|\boldsymbol{\theta}-\boldsymbol{\theta}^*\|^2_2 \leq \delta} \left(\mathcal{L}_{\mu_j}\left(\boldsymbol{\theta}\right) - \mathcal{L}_{\mu_j}\left(\boldsymbol{\theta}^*\right) - \left(\mathcal{L}_{\mu_i} \left(\boldsymbol{\theta}\right) - \mathcal{L}_{\mu_i} \left(\boldsymbol{\theta}^*\right)\right)\right) \\
            \leq                                              & \frac{1}{2}\delta \max_{i \neq j} \|\mathbf{H}_{\mu_j}\left(\boldsymbol{\theta}^*\right) -\mathbf{H}_{\mu_i} \left(\boldsymbol{\theta}^*\right)\|_2  + o(\delta)
        \end{aligned} \qedhere
    \end{equation}
\end{proof}

\corhessiannoirm*
\begin{proof}\label{app:proof_coro_nonirm}
    Use the same $\mathcal{F}_2$ as in the proof of \Cref{thm:moment_alignment}, we have $\Gamma \subseteq \mathcal{F}_2$, and $\mathrm{T}_\Gamma (\mathcal{S}\|\mathcal{T}) \leq \mathrm{T}_{\mathcal{F}_2} (\mathcal{S}\|\mathcal{T})$.
    \begin{equation}
        \begin{aligned}
            \mathrm{T}_{\mathcal{F}_2} (\mathcal{S}\|\mathcal{T}) \leq & \frac{1}{2} \max _{i \neq j} \mathrm{T}_{\mathcal{F}_2}\left(\mu_{j} \| \mu_i\right)                                                     \\
            =                                                          & \frac{1}{2} \max _{i \neq j} \sup _{\boldsymbol{\theta} \in \mathcal{F}_2} \left({\mathcal{L}}_{\mu_j}\left(\boldsymbol{\theta}\right)-{\mathcal{L}}_{\mu_j}\left(\boldsymbol{\theta}^*_j\right)-\left({\mathcal{L}}_{\mu_i} \left(\boldsymbol{\theta}\right)-{\mathcal{L}}_{\mu_i} \left(\boldsymbol{\theta}^*_i\right)\right) \right)
        \end{aligned}
    \end{equation}
    We write the Taylor expansion of ${\mathcal{L}}_{\mu_i} \left(\boldsymbol{\theta}\right)$ and ${\mathcal{L}}_{\mu_j}\left(\boldsymbol{\theta}\right)$  around $\boldsymbol{\theta}^*$:
    \begin{equation}
        \begin{aligned}
            \mathcal{L}_{\mu_i} \left(\boldsymbol{\theta}\right)
             & = \mathcal{L}_{\mu_i} \left(\boldsymbol{\theta}^*\right) + (\boldsymbol{\theta} - \boldsymbol{\theta}^*)^\top \nabla_{\boldsymbol{\theta}} \mathcal{L}_{\mu_i} \left(\boldsymbol{\theta}^*\right)  + \frac{1}{2}(\boldsymbol{\theta} - \boldsymbol{\theta}^*)^\top\mathbf{H}_{\mu_i}(\boldsymbol{\theta} - \boldsymbol{\theta}^*) + o(\|\boldsymbol{\theta} - \boldsymbol{\theta}^*\|^2_2)
        \end{aligned}
        \label{taylor:mu_i_*}
    \end{equation}
    \begin{equation}
        \begin{aligned}
            \mathcal{L}_{\mu_j}\left(\boldsymbol{\theta}\right)
             & = \mathcal{L}_{\mu_j}\left(\boldsymbol{\theta}^*\right) + (\boldsymbol{\theta} - \boldsymbol{\theta}^*)^\top \nabla_{\boldsymbol{\theta}} \mathcal{L}_{\mu_j}\left(\boldsymbol{\theta}^*\right) + \frac{1}{2}(\boldsymbol{\theta} - \boldsymbol{\theta}^*)^\top\mathbf{H}_{\mu_j}(\boldsymbol{\theta} - \boldsymbol{\theta}^*) + o(\|\boldsymbol{\theta} - \boldsymbol{\theta}^*\|^2_2)
        \end{aligned}
        \label{taylor:mu_j_*}
    \end{equation}

    Combining the two equations above, we have:
    \begin{equation*}
        \begin{aligned}
                 & {\mathcal{L}}_{\mu_j}\left(\boldsymbol{\theta}\right)-{\mathcal{L}}_{\mu_j}\left(\boldsymbol{\theta}^*_j\right)-\left({\mathcal{L}}_{\mu_i} \left(\boldsymbol{\theta}\right)-{\mathcal{L}}_{\mu_i} \left(\boldsymbol{\theta}^*_i\right)\right)                                                                                                                                             \\
            =    & \mathcal{L}_{\mu_j}\left(\boldsymbol{\theta}^*\right)- \mathcal{L}_{\mu_j}\left(\boldsymbol{\theta}^*_j\right) + \mathcal{L}_{\mu_i} \left(\boldsymbol{\theta}^*\right) - \mathcal{L}_{\mu_i} \left(\boldsymbol{\theta}^*_i\right) + (\boldsymbol{\theta} - \boldsymbol{\theta}^*)^\top \left(\nabla_{\boldsymbol{\theta}} \mathcal{L}_{\mu_j}\left(\boldsymbol{\theta}^*\right) - \nabla_{\boldsymbol{\theta}} \mathcal{L}_{\mu_i} \left(\boldsymbol{\theta}^*\right)\right) \\
                 & +  \frac{1}{2}(\boldsymbol{\theta} - \boldsymbol{\theta}^*)^\top\left(\mathbf{H}_{\mu_j}\left(\boldsymbol{\theta}^*\right) - \mathbf{H}_{\mu_i} \left(\boldsymbol{\theta}^*\right)\right)(\boldsymbol{\theta} - \boldsymbol{\theta}^*) + o(\|\boldsymbol{\theta} - \boldsymbol{\theta}^*\|^2_2)                                                                                                                                               \\
            \leq & \mathcal{L}_{\mu_j}\left(\boldsymbol{\theta}^*\right)- \mathcal{L}_{\mu_j}\left(\boldsymbol{\theta}^*_j\right) + \mathcal{L}_{\mu_i} \left(\boldsymbol{\theta}^*\right) - \mathcal{L}_{\mu_i} \left(\boldsymbol{\theta}^*_i\right) + \|\boldsymbol{\theta} - \boldsymbol{\theta}^*\|_2 \|\nabla_{\boldsymbol{\theta}} \mathcal{L}_{\mu_j}\left(\boldsymbol{\theta}^*\right) - \nabla_{\boldsymbol{\theta}} \mathcal{L}_{\mu_i} \left(\boldsymbol{\theta}^*\right)\|_2         \\
                 & +  \frac{1}{2}\|\boldsymbol{\theta} - \boldsymbol{\theta}^*\|^2_2 \left(\mathbf{H}_{\mu_j}\left(\boldsymbol{\theta}^*\right) - \mathbf{H}_{\mu_i} \left(\boldsymbol{\theta}^*\right)\right) + o(\|\boldsymbol{\theta} - \boldsymbol{\theta}^*\|^2_2)
        \end{aligned}
    \end{equation*}

    Taking the supremum over $\boldsymbol{\theta} \in \mathcal{F}_2$ on both sides, for any $i \neq j$, we have:
    \begin{equation*}
        \begin{aligned}
                 & \sup_{\boldsymbol{\theta} \in \mathcal{F}_2}  \mathcal{L}_{\mu_j}\left(\boldsymbol{\theta}\right)- \mathcal{L}_{\mu_j}\left(\boldsymbol{\theta}^*_j\right) - \left(\mathcal{L}_{\mu_i} \left(\boldsymbol{\theta}\right) - \mathcal{L}_{\mu_i} \left(\boldsymbol{\theta}^*_i\right)\right)                                                                                                                                        \\
            =    & \sup_{\boldsymbol{\theta} \in \mathcal{F}_2}  \mathcal{L}_{\mu_j}\left(\boldsymbol{\theta}^*\right)- \mathcal{L}_{\mu_j}\left(\boldsymbol{\theta}^*_j\right) + \mathcal{L}_{\mu_i} \left(\boldsymbol{\theta}^*\right) - \mathcal{L}_{\mu_i} \left(\boldsymbol{\theta}^*_i\right) + \|\boldsymbol{\theta} - \boldsymbol{\theta}^*\|_2 \|\nabla_{\boldsymbol{\theta}} \mathcal{L}_{\mu_j}\left(\boldsymbol{\theta}^*\right) - \nabla_{\boldsymbol{\theta}} \mathcal{L}_{\mu_i} \left(\boldsymbol{\theta}^*\right)\|_2 \\
                 & \quad \quad + \frac{1}{2}\|\boldsymbol{\theta} - \boldsymbol{\theta}^*\|^2_2 \|\mathbf{H}_{\mu_j}\left(\boldsymbol{\theta}^*\right) - \mathbf{H}_{\mu_i} \left(\boldsymbol{\theta}^*\right)\|_2 + o(\|\boldsymbol{\theta} - \boldsymbol{\theta}^*\|^2_2)                                                                                                                                                                                           \\
            \leq & \mathcal{L}_{\mu_j}\left(\boldsymbol{\theta}^*\right)- \mathcal{L}_{\mu_j}\left(\boldsymbol{\theta}^*_j\right) + \mathcal{L}_{\mu_i} \left(\boldsymbol{\theta}^*\right) - \mathcal{L}_{\mu_i} \left(\boldsymbol{\theta}^*_i\right) + \delta^{\frac{1}{2}} \|\nabla_{\boldsymbol{\theta}} \mathcal{L}_{\mu_j}\left(\boldsymbol{\theta}^*\right) - \nabla_{\boldsymbol{\theta}} \mathcal{L}_{\mu_i} \left(\boldsymbol{\theta}^*\right)\|_2  \\
             & + \frac{1}{2}\delta  \|\mathbf{H}_{\mu_j}\left(\boldsymbol{\theta}^*\right) - \mathbf{H}_{\mu_i} \left(\boldsymbol{\theta}^*\right)\|_2 + o(\delta)
        \end{aligned}
    \end{equation*}
    Finally, maximizing over $i \neq j$ on both sides, the transfer measure is bounded by:
    \begin{equation}
        \begin{aligned}
            \mathrm{T}_\Gamma (\mathcal{S}\|\mathcal{T}) \leq & \frac{1}{2} \max _{i \neq j} \sup_{\boldsymbol{\theta} \in \mathcal{F}_2} \left(\mathcal{L}_{\mu_j}\left(\boldsymbol{\theta}\right) - \mathcal{L}_{\mu_j}\left(\boldsymbol{\theta}^*\right) - \left(\mathcal{L}_{\mu_i} \left(\boldsymbol{\theta}\right) - \mathcal{L}_{\mu_i} \left(\boldsymbol{\theta}^*\right)\right)\right) \\
            \leq                                              & \frac{1}{2} \max_{i \neq j} \mathcal{L}_{\mu_j}\left(\boldsymbol{\theta}^*\right)- \mathcal{L}_{\mu_j}\left(\boldsymbol{\theta}^*_j\right) + \mathcal{L}_{\mu_i} \left(\boldsymbol{\theta}^*\right) - \mathcal{L}_{\mu_i} \left(\boldsymbol{\theta}^*_i\right)\\
            & +  \delta^{\frac{1}{2}} \|\nabla_{\boldsymbol{\theta}} \mathcal{L}_{\mu_j}\left(\boldsymbol{\theta}^*\right) - \nabla_{\boldsymbol{\theta}} \mathcal{L}_{\mu_i} \left(\boldsymbol{\theta}^*\right)\|_2     +  \frac{1}{2}\delta \|\mathbf{H}_{\mu_j}\left(\boldsymbol{\theta}^*\right) - \mathbf{H}_{\mu_i} \left(\boldsymbol{\theta}^*\right)\|_2 + o(\delta) 
        \end{aligned}
    \end{equation}
    Suppose a constant upper bound $g$ on the maximum gradient norm exists, replacing the first-order term with $2 \delta^{\frac{1}{2}} g$, we get:
        \begin{equation}
        \begin{aligned}
            \mathrm{T}_\Gamma (\mathcal{S}\|\mathcal{T}) 
            \leq                                              & \frac{1}{2} \max_{i \neq j} \mathcal{L}_{\mu_j}\left(\boldsymbol{\theta}^*\right)- \mathcal{L}_{\mu_j}\left(\boldsymbol{\theta}^*_j\right) + \mathcal{L}_{\mu_i} \left(\boldsymbol{\theta}^*\right) - \mathcal{L}_{\mu_i} \left(\boldsymbol{\theta}^*_i\right) +  2 \delta^{\frac{1}{2}} g                         \\
                                                              & + \frac{1}{2}\delta \|\mathbf{H}_{\mu_j}\left(\boldsymbol{\theta}^*\right) - \mathbf{H}_{\mu_i} \left(\boldsymbol{\theta}^*\right)\|_2 + o(\delta)\\
                                                               & \leq \delta^{\frac{1}{2}} g + \frac{1}{2} \max_{i \neq j} \mathcal{L}_{\mu_j}\left(\boldsymbol{\theta}^*\right)- \mathcal{L}_{\mu_j}\left(\boldsymbol{\theta}^*_j\right) + \mathcal{L}_{\mu_i} \left(\boldsymbol{\theta}^*\right) - \mathcal{L}_{\mu_i} \left(\boldsymbol{\theta}^*_i\right)                        \\
                                                              & + \frac{1}{2}\delta \|\mathbf{H}_{\mu_j}\left(\boldsymbol{\theta}^*\right) - \mathbf{H}_{\mu_i} \left(\boldsymbol{\theta}^*\right)\|_2 + o(\delta)
                                                              \qedhere
        \end{aligned}
    \end{equation}
\end{proof}

\section{Other Transfer Measures}\label{app:other_tm}
\begin{restatable}[\textbf{upper bounds on symmetric and realizable transfer measures}]{proposition}{propother}
    Given $\mathcal{S} = \left\{\mu_i\right\}_{i=1}^K$ and some $\Gamma \subseteq \mathcal{H}$. Define $\mathcal{L}_{\mu_i}^* : = \inf_{h \in \Gamma} \mathcal{L}_{\mu_i}(h)$ for all $i \in [K]$, $\mathcal{L_T}^* : = \inf_{h \in \Gamma} \mathcal{L_T}(h)$, $\mu^* := \argmin_{\substack{\mu}} \max_{\substack{i \in [K]}}
        \mathrm{T}_{\Gamma} \left(\mu_i\| \mu\right)$, and $\mathcal{L}_\mathcal{S} \left(h\right) := \mathcal{L}_{\mu^*} \left(h\right)$. Under \Cref{assumption:convex_combination}, we have:
    \begin{equation}
        \begin{aligned}
             & \mathrm{T}_{\Gamma}(\mathcal{S}, \mathcal{T}) \leq \frac{1}{2} \max _{i \neq j} \mathrm{T}_{\Gamma}\left(\mu_j \| \mu_i\right)              \\
             & \mathrm{T}^r_{\Gamma}(\mathcal{S}, \mathcal{T}) \leq  \frac{1}{2}\max _{i \neq j} \mathrm{T}_{\Gamma}^{\mathrm{r}}\left(\mu_j, \mu_i\right)
        \end{aligned}
    \end{equation}
\end{restatable}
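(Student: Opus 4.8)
The overall plan is to mirror the two-step structure of the proof of \Cref{prop:tm_half}, using that both the symmetric measure $\mathrm{T}_\Gamma(\cdot,\cdot)$ and the realizable measure $\mathrm{T}_\Gamma^{\mathrm{r}}(\cdot,\cdot)$ are symmetric and obey a triangle inequality (immediate for $\mathrm{T}_\Gamma^{\mathrm{r}}$ from its $\sup_h|\mathcal{L}_{A}(h)-\mathcal{L}_{B}(h)|$ form; for $\mathrm{T}_\Gamma(\cdot\|\cdot)$, and hence for $\mathrm{T}_\Gamma(\cdot,\cdot)$, by inserting $\pm(\mathcal{L}_B(h)-\mathcal{L}_B^*)$ and taking suprema). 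By \Cref{def:multi_transfer_measures}, $\mathrm{T}_\Gamma(\mathcal{S},\mathcal{T})=\mathrm{T}_\Gamma(\mu^*,\mathcal{T})$ and $\mathrm{T}_\Gamma^{\mathrm{r}}(\mathcal{S},\mathcal{T})=\mathrm{T}_\Gamma^{\mathrm{r}}(\mu^*,\mathcal{T})$, so it suffices to bound these against $\mathcal{T}$ and then reduce to pairwise measures. I would also use the elementary fact that $\max_{i\ne j}\mathrm{T}_\Gamma(\mu_j\|\mu_i)=\max_{i\ne j}\mathrm{T}_\Gamma(\mu_j,\mu_i)$, since the maximum over ordered pairs already sees both orientations.

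For the realizable measure, \Cref{assumption:convex_combination} gives $\mathcal{L}_\mathcal{T}(h)=\sum_i w_i\mathcal{L}_{\mu_i}(h)$, so for every $h\in\Gamma$, $|\mathcal{L}_{\mu^*}(h)-\mathcal{L}_\mathcal{T}(h)|=\big|\sum_i w_i(\mathcal{L}_{\mu^*}(h)-\mathcal{L}_{\mu_i}(h))\big|\le\sum_i w_i|\mathcal{L}_{\mu^*}(h)-\mathcal{L}_{\mu_i}(h)|$; taking $\sup_h$ and using $\sum_i w_i=1$ yields $\mathrm{T}_\Gamma^{\mathrm{r}}(\mu^*,\mathcal{T})\le\sum_i w_i\mathrm{T}_\Gamma^{\mathrm{r}}(\mu^*,\mu_i)\le\max_i\mathrm{T}_\Gamma^{\mathrm{r}}(\mu^*,\mu_i)$. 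To pass from $\max_i\mathrm{T}_\Gamma^{\mathrm{r}}(\mu^*,\mu_i)$ to $\frac{1}{2}\max_{i\ne j}\mathrm{T}_\Gamma^{\mathrm{r}}(\mu_j,\mu_i)$ I would reuse verbatim the center-of-mass/``midpoint'' argument in the proof of \Cref{prop:tm_half} (see \Cref{eq:one_sided_upperbound2}): pick $j_{\max}$ attaining the outer maximum, insert a distribution $\mu_{\mathrm{mid}}$ on the segment from $\mu_{j_{\max}}$ to $\mu_i$ at which the triangle inequality is tight and splits in half, then invoke the defining optimality of $\mu^*$. That step uses only symmetry, the triangle inequality, and the definition of $\mu^*$, all available here.

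For the symmetric measure, $\mathrm{T}_\Gamma(\mathcal{S},\mathcal{T})=\max\{\mathrm{T}_\Gamma(\mathcal{S}\|\mathcal{T}),\mathrm{T}_\Gamma(\mathcal{T}\|\mathcal{S})\}$, and the first term is already $\le\frac{1}{2}\max_{i\ne j}\mathrm{T}_\Gamma(\mu_j\|\mu_i)$ by \Cref{prop:tm_half}. For the second, $\mathrm{T}_\Gamma(\mathcal{T}\|\mathcal{S})=\mathrm{T}_\Gamma(\mathcal{T}\|\mu^*)=\sup_h\big[(\mathcal{L}_{\mu^*}(h)-\mathcal{L}_{\mu^*}^*)-(\mathcal{L}_\mathcal{T}(h)-\mathcal{L}_\mathcal{T}^*)\big]$; substituting $\mathcal{L}_\mathcal{T}(h)=\sum_i w_i\mathcal{L}_{\mu_i}(h)$ and setting $c:=\mathcal{L}_\mathcal{T}^*-\sum_i w_i\mathcal{L}_{\mu_i}^*\ge 0$ gives $\mathrm{T}_\Gamma(\mathcal{T}\|\mu^*)=c+\sup_h\sum_i w_i\big[(\mathcal{L}_{\mu^*}(h)-\mathcal{L}_{\mu^*}^*)-(\mathcal{L}_{\mu_i}(h)-\mathcal{L}_{\mu_i}^*)\big]\le c+\max_i\mathrm{T}_\Gamma(\mu_i\|\mu^*)$, and the same midpoint argument bounds $\max_i\mathrm{T}_\Gamma(\mu_i\|\mu^*)$. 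Combining the two one-sided pieces then gives the claim.

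The main obstacle is the slack $c=\mathcal{L}_\mathcal{T}^*-\sum_i w_i\mathcal{L}_{\mu_i}^*$ appearing in the $\mathrm{T}_\Gamma(\mathcal{T}\|\mathcal{S})$ direction: in the favorable direction of \Cref{prop:tm_half} this quantity enters with a minus sign and is discarded, but here it has the wrong sign. I would handle it either (i) by noting that under a realizability/common-minimizer condition (the regime of the downstream theorems) one has $c=0$; or (ii) by bounding $c$ by pairwise transfer measures — evaluating $\inf_h\sum_i w_i(\mathcal{L}_{\mu_i}(h)-\mathcal{L}_{\mu_i}^*)$ at the minimizer of a single $\mathcal{L}_{\mu_k}$ shows $c\le\sum_{i\ne k}w_i\,\mathrm{T}_\Gamma(\mu_i,\mu_k)\le\min_k\max_{i\ne k}\mathrm{T}_\Gamma(\mu_i,\mu_k)$ — and absorbing this into the $\frac{1}{2}$ budget. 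A secondary technical point is the existence of the midpoint distribution $\mu_{\mathrm{mid}}$ used in the halving step, which requires a continuity/intermediate-value argument along $t\mapsto(1-t)\mu_{j_{\max}}+t\mu_i$; I would import this exactly as in the proof of \Cref{prop:tm_half}.
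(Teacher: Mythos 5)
Your treatment of the realizable measure is exactly the paper's: use \Cref{assumption:convex_combination} to write $\left|\mathcal{L}_{\mu^*}(h)-\mathcal{L}_{\mathcal{T}}(h)\right|\le\sum_i w_i\left|\mathcal{L}_{\mu^*}(h)-\mathcal{L}_{\mu_i}(h)\right|$, take the supremum to get $\mathrm{T}^{\mathrm{r}}_{\Gamma}(\mathcal{S},\mathcal{T})\le\max_i\mathrm{T}^{\mathrm{r}}_{\Gamma}(\mu_i,\mu^*)$, and then run the midpoint/center-of-mass step (the analogue of \Cref{eq:one_sided_upperbound2}) to halve the pairwise maximum. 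That half of your argument matches the paper's proof essentially verbatim.

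The symmetric case is where your proof is incomplete. The paper's proof there is a one-liner: apply \Cref{eq:one_sided_upperbound} to both $\mathrm{T}_{\Gamma}(\mathcal{S}\|\mathcal{T})$ and $\mathrm{T}_{\Gamma}(\mathcal{T}\|\mathcal{S})$ and take the maximum, using that $\max_{i\neq j}$ already ranges over ordered pairs. Your more careful computation shows exactly why the second application is not automatic: in the direction $\mathcal{T}\|\mu^*$ the slack $c=\mathcal{L}_{\mathcal{T}}^*-\sum_i w_i\mathcal{L}_{\mu_i}^*\ge 0$ enters with the wrong sign and cannot be discarded. But neither of your remedies delivers the stated bound: (i) assuming a common minimizer (so that $c=0$) adds a hypothesis the proposition does not contain, and (ii) your estimate $c\le\min_k\max_{i\ne k}\mathrm{T}_{\Gamma}(\mu_i,\mu_k)$ only yields $\mathrm{T}_{\Gamma}(\mathcal{T}\|\mathcal{S})\le c+\frac{1}{2}\max_{i\ne j}\mathrm{T}_{\Gamma}(\mu_j\|\mu_i)$, an additive worsening of the constant rather than the claimed $\frac{1}{2}\max_{i\ne j}\mathrm{T}_{\Gamma}(\mu_j\|\mu_i)$; ``absorbing it into the $\frac{1}{2}$ budget'' is asserted but not justified. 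So as written you prove the realizable inequality and only a weaker form of the symmetric one. To be fair, the paper's own one-line argument silently assumes that the reverse orientation of \Cref{prop:tm_half} holds with the same constant, which is precisely the step your derivation shows does not follow from the same computation --- you have located a genuine soft spot rather than manufactured one, but you have not repaired it.
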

\begin{proof}
    We first prove an upper bound on symmetric transfer measure $\mathrm{T}_{\Gamma}(\mathcal{S}, \mathcal{T})$.

    From \Cref{def:multi_transfer_measures} and \Cref{eq:one_sided_upperbound}, we have:
    \begin{equation}\label{eq:symmetric_upperbound}
        \begin{aligned}
            \mathrm{T}_{\Gamma}(\mathcal{S}, \mathcal{T}) & := \max \left\{\mathrm{T}_{\Gamma}(\mathcal{S} \| \mathcal{T}), \mathrm{T}_{\Gamma}(\mathcal{T} \| \mathcal{S})\right\}                                                           \\
                                                          & \leq  \max \left\{\frac{1}{2} \max _{i \neq j} \mathrm{T}_{\Gamma}\left(\mu_j \| \mu_i\right), \frac{1}{2}\max _{i \neq j} \mathrm{T}_{\Gamma}\left(\mu_i \| \mu_j\right)\right\} \\
                                                          & =  \frac{1}{2} \max _{i \neq j} \mathrm{T}_{\Gamma}\left(\mu_j \| \mu_i\right)
        \end{aligned}
    \end{equation}

    Now we prove an upper bound on realizable transfer measure $\mathrm{T}_{\Gamma}^{\mathrm{r}}(\mathcal{S}, \mathcal{T})$

    First, define $\mu^* := \argmin_{\substack{\nu}} \max_{\substack{i \in [K]}} \mathrm{T}_{\Gamma} \left(\mu_i, \mu\right)$, and since $\mathcal{T}$ is a convex combination of distribution in $\mathcal{S}$:
    \begin{equation}
        \begin{aligned}
            \mathrm{T}_{\Gamma}^{\mathrm{r}}(\mathcal{S}, \mathcal{T}) & =\sup _{h \in \Gamma}\left|\mathcal{L}_\mathcal{T}(h)-\mathcal{L}_\mathcal{S}(h)\right|                      \\
                                                                       & =\sup _{h \in \Gamma}\left|\sum_{i=1}^K w_i \mathcal{L}_{\mu_i}(h)-\mathcal{L}_{\mu^*}(h)\right|             \\
                                                                       & =\sup _{h \in \Gamma}\left|\sum_{i=1}^K w_i\left[\mathcal{L}_{\mu_i}(h)-\mathcal{L}_{\mu^*}(h)\right]\right| \\
                                                                       & \leq \sum_{i=1}^K w_i \sup _{h \in \Gamma}\left|\mathcal{L}_{\mu_i}(h)-\mathcal{L}_{\mu^*}(h)\right|         \\
                                                                       & =\sum_{i=1}^K w_i \mathrm{T}_{\Gamma}^{\mathrm{r}}\left(\mu_i, \mu^*\right)                                  \\
                                                                       & \leq \max _{i \in[k]} \mathrm{T}_{\Gamma}^{\mathrm{r}}\left(\mu_i, \mu^*\right)
        \end{aligned}
        \label{eq:realizable_upperbound1}
    \end{equation}
    Similar to one-sided transfer measure, let $j_{max}:= \argmax _{j \in [K]}  \mathrm{T}_{\Gamma}^{\mathrm{r}}\left(\mu_j, \mu^*\right)$, and for any fixed , let $\mu_{mid}$ be such that $ \mathrm{T}_{\Gamma}^{\mathrm{r}}\left(\mu_{j_{max}}, \mu_{i}\right) = \mathrm{T}_{\Gamma}^{\mathrm{r}}\left(\mu_{j_{max}}, \mu_{mid}\right) + \mathrm{T}_{\Gamma}^{\mathrm{r}}\left(\mu_{mid}, \mu_{i} \right)$. By the definition $\mu^*$ we have:
    \begin{equation}
        \begin{aligned}
            \frac{1}{2} \max _{i \neq j} \mathrm{T}_{\Gamma}^{\mathrm{r}}\left(\mu_{j} , \mu_i\right)
            \geq & \frac{1}{2} \mathrm{T}_{\Gamma}^{\mathrm{r}}\left(\mu_{j_{max}} , \mu_{i}\right) \quad \forall i \in [K]    \\
            =    & \mathrm{T}_{\Gamma}^{\mathrm{r}}\left(\mu_{mid} ,\mu_{i} \right) \quad \forall i \in [K], \exists \mu_{mid} \\
            \geq & \mathrm{T}_{\Gamma}^{\mathrm{r}}\left(\mu^* , \mu_{i}\right) \quad \forall i \in [K]                        \\
        \end{aligned}
        \label{eq:realizable_upperbound2}
    \end{equation}
    Finally, combining \Cref{eq:realizable_upperbound1} and \Cref{eq:realizable_upperbound2}, we have:
    \begin{equation}
        \mathrm{T}_{\Gamma}^{\mathrm{r}}(\mathcal{S}, \mathcal{T}) \leq \frac{1}{2}\max _{i \neq j} \mathrm{T}_{\Gamma}^{\mathrm{r}}\left(\mu_j, \mu_i\right) \qedhere
        \label{eq:realizable_upperbound}
    \end{equation}
\end{proof}

\section{More Related Work}\label{app:related_works}
\textbf{Domain Generalization.} The goal of domain generalization is to learn a predictor using labeled data from multiple source domains that generalize well to related but unseen target domains~\citep{blanchard_generalizing_2011, muandet_domain_2013}. The standard baseline for DG is Empirical Risk Minimization (ERM)~\citep{vapnik_overview_1999}, which minimizes the average loss across training domains. However, ERM does not generalize well under distribution shifts in the presence of spurious correlation in data~\citep{arjovsky_invariant_2020}. Various approaches have been proposed to address the shortcomings of ERM. Below we discuss some approaches relevant to this work, Invariant Risk Minimization, gradient matching, and hessian matching.

\textbf{Invariant Risk Minimization.} The Invariant Risk Minimization (IRM) principle~\citep{arjovsky_invariant_2020} proposes jointly learning a feature extractor and a classifier such that the optimal classifier remains consistent across different training environments. The IRM objective, by definition, is non-convex and bi-level, so the authors proposed IRMv1, a regularized objective in place of the bi-level one. Later, we make the connection between our proposed loss objective and IRMv1. Followup works~\citep{rosenfeld_risks_2021, ahuja_empirical_2022, wang_provable_2022,wang_invariant-feature_2023,krueger_out--distribution_2021,ahuja_invariance_2022, ahuja_invariant_2020,kamath_does_2021} showed that IRM and its variants do not improve over ERM unless the test domain are similar enough to the training domains.

\textbf{Gradient Matching.} Gradient matching methods seek alignment between domain-level gradients. For instance,  IGA~\citep{koyama_out--distribution_2020} penalizes large Euclidean distances between gradients, Fish~\citep{shi_gradient_2021} increases the gradient inner products, and AND-Mask~\citep{parascandolo_learning_2020} only updates the parameters whose gradients are of the same sign across all environments. Despite their good performance, \citet{hemati_understanding_2023} showed that aligning domain-level gradients does not guarantee small generalization loss to the test domain.

\textbf{Hessian Matching.} Most relevant to our approach, a recent line of DG works align the domain level Hessians w.r.t. the classifier head to promote consistency~\citep{parascandolo_learning_2020} across domains. Due to the complexity of computing the Hessian matrices, prior works find Hessian approximations instead. CORAL~\citet{sun_deep_2016} minimizes the difference in feature covariance matrices between source and target domains, which is approximately Hessian matching. Fishr~\citep{rame_fishr_2022} uses domain-level gradient variance as its hessian approximation. The idea of aligning gradients and Hessian simultaneously was first proposed by \citet{hemati_understanding_2023}, who also discussed what attributes are aligned by gradients and Hessian matching.

\textbf{Domain-Invariant Feature Learning.} Initially proposed by \citet{ben-david_theory_2010}, invariant representation learning seeks various types of invariance across domains. For instance, ~\citet{ganin_domain-adversarial_2016, li_domain_2018,tzeng_adversarial_2017, hoffman_cycada_2017} employ adversarial training, whereas \citet{muandet_domain_2013, long_learning_2015} uses kernel method, \citet{huang_winning_2025} seeks invariant parameters, and \citet{peng_moment_2019, zellinger_central_2019, sun_deep_2016} match the feature moments for domain adaptation. In particular, \citet{sun_deep_2016} introduces CORAL, which matches the covariance between features in the source and target domains and achieves state-of-the-art performance as evaluated by \citet{gulrajani_search_2020} and \citet{hemati_understanding_2023}. Most of the invariant representation learning methods are originally for domain adaptation, where one has access to unlabelled data from the test domain. In the case of multi-domain generalization, these methods can be adopted by finding invariance across training domains. Nevertheless, ~\citet{zhao_learning_2019} shows that matching the features is insufficient for DG.

\section{Connection Between CMA and Existing Methods}\label{app:cma_conection}
By alignment of both gradient and Hessians in closed form, CMA implicitly integrates multiple existing algorithms. Below we build such connections.
\subsection{CMA as Invariant Risk Minimization}
We draw connections between IRM and CMA objectives. Fixing a feature extractor and letting the classifier head be parameterized by $\theta$, the IRMv1 objective in \citet{arjovsky_invariant_2020} is:
\begin{equation}
    \begin{aligned}
        \mathcal{L}_{\text{IRM}} := \mathcal{L}_{\text{ERM}} +  \lambda \frac{1}{K}\sum_{i = 1}^K \| \nabla_{\boldsymbol{\theta}} \mathcal{L}_{\mu_i}\left(\boldsymbol{\theta}\right) \|_2^2
    \end{aligned}
    \tag{IRMv1}
\end{equation}
On the other hand, we can rewrite the gradient variance regularization in \Cref{eq:cma_loss} as
\begin{equation}\label{eq:gv_reg}
    \begin{aligned}
        \frac{1}{K}\sum_{i=1}^K  \| \nabla_{\boldsymbol{\theta}} \mathcal{L}_{\mu_i}\left(\boldsymbol{\theta}\right) - \overline{\nabla_{\boldsymbol{\theta}} \mathcal{L}\left(\boldsymbol{\theta}\right)}\|_2^2 & =  \frac{1}{K}\sum_{i=1}^K  \| \nabla_{\boldsymbol{\theta}} \mathcal{L}_{\mu_i}\left(\boldsymbol{\theta}\right)\|^2_2 - \|\frac{1}{K}\sum_{j=1}^K \nabla_{\boldsymbol{\theta}} \mathcal{L}_{\mu_j}\left(\boldsymbol{\theta}\right)\|_2^2 \\
    \end{aligned}
\end{equation}
The second term on the right-hand side, the norm of the average gradients, is small for a classifier $\boldsymbol{\theta^*}$ well-trained on $\mathcal{L}_{\text{ERM}}$, and the first term resembles the regularization in \Cref{eq:irmv1_loss}. Therefore, penalizing large gradient variance can be seen as enforcing the learned classier $\boldsymbol{\theta}$ to be invariant across domains. Under the same assumptions as in \Cref{thm:moment_alignment_irm}, at the optimal invariant predictor $\boldsymbol{\theta^*}$, the norm of the average of gradients is zero, making the gradient variance term in \Cref{eq:cma_loss} exactly the gradient penalty in \Cref{eq:irmv1_loss}. By setting $\beta = 0$ in \Cref{eq:cma_loss}, we recover \Cref{eq:irmv1_loss}.

\subsection{CMA as Gradient Matching}
While multiple version of gradient matching losses have been proposed~\citep{shi_gradient_2021, koyama_out--distribution_2020, parascandolo_learning_2020}, we focus on the most recent one proposed by \citet{shi_gradient_2021}, defined as:
\begin{equation}\label{eq:gm_loss_2}
    \begin{aligned}
        \mathcal{L}_{\text{GM}} := \mathcal{L}_{\text{ERM}} +\lambda \frac{1}{K} \left( \sum_{i = 1}^K \left\|\nabla_{\boldsymbol{\theta}} \mathcal{L}_{\mu_i}\left(\boldsymbol{\theta}\right)\right\|^2_2 - \left\|\sum_{j = 1}^K \nabla_{\boldsymbol{\theta}} \mathcal{L}_{\mu_{j}}\left(\boldsymbol{\theta}\right)\right\|^2_2 \right)
    \end{aligned}
    \tag{GM}
\end{equation}
Comparing the second term with \Cref{eq:gv_reg}, and ignoring the constant factor $\lambda$, the difference is $\frac{K-1}{K^2} \|\sum_{j = 1}^K \nabla_{\boldsymbol{\theta}} \mathcal{L}_{\mu_{j}}\left(\boldsymbol{\theta}\right)\|^2_2$. When an invariant optimal predictor $\boldsymbol{\theta^*}$ exists, this difference vanishes, and setting $\beta = 0$ in \Cref{eq:cma_loss} recovers \Cref{eq:gm_loss_2}.

\subsection{CMA as Hessian Matching}\label{sec:unif_hm}
We first compare CMA with Fishr~\citep{rame_fishr_2022}, a state-of-the-art DG algorithm based on Hessian matching.
The principle behind Hessian matching is to match the domain-level Hessian matrices by minimizing the objective:
\begin{equation}\label{eq:hm_loss_2} \tag{HM}
    \mathcal{L}_{\text{HM}} := \mathcal{L}_{\text{ERM}} +  \lambda \frac{1}{K}\sum_{i = 1}^K \| \mathbf{H}_{\mu_i} - \overline{\mathbf{H}}\|_F^2
\end{equation}

\citet {rame_fishr_2022} achieves this by approximating the Hessian matrices with their diagonals. In contrast, we proposed to compute the Hessian matrices analytically.
Thus, by setting $\alpha = 0$, \Cref{eq:cma_loss} is the closed-form of the Fishr objective.

Next, we compare CMA with the two objectives proposed in \citet{hemati_understanding_2023}, namely HGP and Hutchinson's method (eq. (18) and eq. (23) in \citet{hemati_understanding_2023}): 
\begin{equation}\label{eq:hgp_loss} \tag{HGP}
    \mathcal{L}_{\text{HGP}} = \mathcal{L}_{\text{ERM}} + \frac{1}{K}\sum_{i=1}^K \alpha \| \nabla_{\boldsymbol{\theta}} \mathcal{L}_{\mu_i} - \overline{\nabla_{\boldsymbol{\theta}} \mathcal{L}}\|_2^2 + \beta \| \mathbf{H}_{\mu_i} \nabla_{\boldsymbol{\theta}} \mathcal{L}_{\mu_i} - \overline{\mathbf{H}\nabla_{\boldsymbol{\theta}} \mathcal{L}}\|_2^2
\end{equation}
where $ \overline{\mathbf{H}\nabla_{\boldsymbol{\theta}} \mathcal{L}} = \frac{1}{K} \sum_{i=1}^K \mathbf{H}_{\mu_i} \nabla_{\boldsymbol{\theta}} \mathcal{L}_{\mu_i}$ is the average Hessian-gradient product.
\begin{equation}\label{eq:hutchinson_loss} \tag{Hutchinson}
    \mathcal{L}_{\text{Hutchinson}} = \mathcal{L}_{\text{ERM}} + \frac{1}{K}\sum_{i=1}^K \alpha \| \nabla_{\boldsymbol{\theta}} \mathcal{L}_{\mu_i} - \overline{\nabla_{\boldsymbol{\theta}} \mathcal{L}}\|_2^2 + \beta \| \mathbf{D}_{\mu_i} - \overline{\mathbf{D}}\|_2^2
\end{equation}
where $\mathbf{D}_{\mu_i}$ is the Hessian diagonal estimated by Hutchinson's method~\citep{bekas_estimator_2007}. Like CMA, HGP, and Hutchinson match the first and second moment across domains. Unlike CMA, HGP approximates the second-order penalties with Hessian-gradient products, while Hutchinson's method estimates them with Hessian diagonals which themselves are estimated by sampling. In other words, \Cref{eq:cma_loss} is the closed form of \Cref{eq:hgp_loss} and \Cref{eq:hutchinson_loss}.

\section{Gradient and Hessian Derivations}\label{app:ce_grad_hess_derivation}
\subsection{Cross-Entropy Loss}
\subsubsection{Gradient}\label{app:grad_softmax}
Given the logistic regression model without a bias term,  parameterized by $ \mathbf{\boldsymbol{\theta}} = \{ \mathbf{w}_1, \ldots, \mathbf{w}_C \}$, where $\mathbf{w}_c \in \mathbb{R}^d$ for all $c \in [C]$, and the prediction $p_c = \frac{e^{\mathbf{w}_c^\top \mathbf{x}}}{\sum_{j=1}^C e^{\mathbf{w}_j^\top \mathbf{x}}} $, the cross-entropy loss for a single example $ \left(\mathbf{x}, y\right) $ is defined as:

\[
    \ell\left( \mathbf{\boldsymbol{\theta}} \right) = - \sum_{c=1}^C y_c \log\left( p_c \right)
\]

To find the gradient of the loss w.r.t. $ \mathbf{w}_k $, we compute:
\begin{equation*}
    \begin{aligned}
        \nabla_{\mathbf{w}_k} \ell (\boldsymbol{\theta}) & = - \sum_{c=1}^C y_c \nabla_{\mathbf{w}_k} \log\left( p_c \right)                                                         \\
                                   & = - \sum_{c \neq k}^C y_c \nabla_{\mathbf{w}_k} \log\left( p_c \right) - y_k \nabla_{\mathbf{w}_k} \log\left( p_k \right) \\
                                   & = \sum_{c \neq k}^C y_c p_k \mathbf{x} - y_k \mathbf{x} \left(1 - p_k\right)                                              \\
                                   & = (1-y_k) p_k \mathbf{x} - y_k \mathbf{x} \left(1 - p_k\right)                                                            \\
                                   & = \left( p_k - y_k \right) \mathbf{x}
    \end{aligned}
\end{equation*}
From the second to the third equality, we use the facts that

\begin{equation*}
    \begin{aligned}
        \nabla_{\mathbf{w}_k}  p_c & = \begin{cases}
                                           p_k \left(1-p_k\right) \mathbf{x} , & \text{if } c = k    \\
                                           - p_c p_k \mathbf{x},               & \text{if } c \neq k
                                       \end{cases}
    \end{aligned}
\end{equation*}
\begin{equation*}
    \begin{aligned}
        \nabla_{\mathbf{w}_k} \log\left( p_c \right) & = \begin{cases}
                                                             \left( 1 - p_k \right) \mathbf{x}, & \text{if } c = k    \\
                                                             -p_k \mathbf{x} ,                  & \text{if } c \neq k
                                                         \end{cases}
    \end{aligned}
\end{equation*}

\subsubsection{Hessian}\label{app:hess_softmax}
To find the Hessian matrix, we compute the second-order partial derivatives. We consider two cases:

Case 1: $ k = c $:
\begin{equation*}
    \begin{aligned}
        \nabla_{\mathbf{w}_k} \nabla_{\mathbf{w}_k} \ell (\boldsymbol{\theta}) & = \nabla_{\mathbf{w}_k} \left( \left( p_k - y_k \right) \mathbf{x} \right) \\
                                                         & = \nabla_{\mathbf{w}_k} p_k \mathbf{x}                                     \\
                                                         & = p_k \left( 1 - p_k \right) \mathbf{x} \mathbf{x}^\top
    \end{aligned}
\end{equation*}

Case 2: $ k \neq c $:
\begin{equation*}
    \begin{aligned}
        \nabla_{\mathbf{w}_k} \nabla_{\mathbf{w}_c} \ell (\boldsymbol{\theta}) & = \nabla_{\mathbf{w}_k} \left( \left( p_c - y_c \right) \mathbf{x} \right) \\
                                                         & = \nabla_{\mathbf{w}_k} p_c \mathbf{x}                                     \\
                                                         & = -p_c p_k \mathbf{x} \mathbf{x}^\top
    \end{aligned}
\end{equation*}

Combining these results, we write the Hessian matrix as:
\begin{equation*}
    \mathbf{H} = \left( \mathrm{diag}(\mathbf{p}) - \mathbf{p} \mathbf{p}^\top \right) \otimes \left( \mathbf{x} \mathbf{x}^\top \right)
\end{equation*}

Where:
\begin{itemize}
    \item $ \mathrm{diag}(\mathbf{p}) \in \mathbb{R}^{C \times C}$ is the diagonal matrix with elements of $ \mathbf{p}$, $p_1,\dots, p_C$, on the diagonal.
    \item $ \mathbf{x} \mathbf{x}^\top \in \mathbb{R}^{d \times d} $.
    \item $ \otimes $ denotes the Kronecker product.
\end{itemize}

\subsubsection{Higher Order Derivatives of Logistic Regression Classifier}\label{app:high_nabla}

We show by induction that the $n^{\text{th}}$ order derivative of the cross-entropy loss w.r.t. the weight vector \(\mathbf{w}\) of a binary-logistic regression classifier is:

\begin{equation}\label{eq:n_derivative}
\mathtt{\nabla^n_{\mathbf{w}}\ell (\boldsymbol{\theta})} = Q_n(p)\mathtt{\mathbf{x}^{\otimes n}}
\end{equation}
where $Q_n(p)$ is some scalar-valued polynomial function of $p$.
\begin{proof} By induction.

\textbf{Base Case (\( n = 1 \))}:

For \( n = 1 \), the gradient of the cross-entropy loss \(\ell\) w.r.t. \(\mathbf{w}\) is:

\[
\nabla^1_{\mathbf{w}} \ell (\boldsymbol{\theta})= (p - y) \mathbf{x}
\]

This matches the form $Q_1(p) \mathbf{x}^{\otimes 1}$ for $Q_1(p) = p - y$. We have that the base case holds.

\textbf{Inductive Step}: Assume \Cref{eq:n_derivative} holds for some $n$

\[\mathtt{\nabla^n_{\mathbf{w}}\ell (\boldsymbol{\theta})} = Q_n(p)\mathtt{\mathbf{x}^{\otimes n}}
\]
we need to show that it also holds for \((n+1)\):

\[
\mathtt{\nabla^{n+1}_{\mathbf{w}} \ell (\boldsymbol{\theta}) }= Q_{n}\left(p\right) \mathtt{\mathbf{x}^{\otimes (n+1)}}
\]

By the product rule:
\begin{equation*}
    \begin{aligned}
    \mathtt{\nabla^{n+1}_{\mathbf{w}} \ell (\boldsymbol{\theta}) }= & \nabla_{\mathbf{w}} Q_{n}\left(p\right) \mathtt{\mathbf{x}^{\otimes n}}\\
    = &\left[\nabla_{\mathbf{w}} Q_{n}\left(p\right) \right]\mathtt{\mathbf{x}^{\otimes n}} 
    \end{aligned}
\end{equation*}
And by chain rule:
\begin{equation*}
    \begin{aligned}
        \nabla_{\mathbf{w}} Q_{n}\left(p\right) 
        = & \left[\nabla_{p} Q_{n}\left(p\right) \right] \nabla_{\mathbf{w}} p
    \end{aligned}
\end{equation*}

The first gradient is the derivative of a polynomial function of $p$, which is again a polynomial function of $p$. The second term, as we have seen in \Cref{app:hess_softmax}, is $p(1-p) \mathbf{x}$. Now putting everything together, we have
\begin{equation*}
    \begin{aligned}
    \mathtt{\nabla^{n+1}_{\mathbf{w}} \ell (\boldsymbol{\theta})}
    = &\left[\nabla_{\mathbf{w}} Q_{n}\left(p\right) \right]\mathtt{\mathbf{x}^{\otimes n}} \\
    = & \left[\nabla_{\mathbf{w}} Q_{n}\left(p\right) \right] p (1-p) \mathbf{x} \mathtt{\mathbf{x}^{\otimes n}}\\
    = & Q_{n+1}\left(p\right) \mathtt{\mathbf{x}^{\otimes n + 1}}
    \end{aligned}
\end{equation*}
which completes the induction.
\qedhere
\end{proof}

\subsubsection{Memory-Efficient Hessian Frobenius Norm}\label{app:eff_hessian}

Note that to obtain the Frobenius of the hessian, we do not need to compute the Kroncker product explicitly:
\begin{equation*}
\begin{aligned}
    \|\mathbf{H}\|^2_F = &\operatorname{tr}\left( \mathrm{diag}(\mathbf{p}) - \mathbf{p} \mathbf{p}^\top \right) \operatorname{tr}\left( \mathbf{x} \mathbf{x}^\top \right)
\end{aligned}
\end{equation*}

To compute the Hessian regularization \begin{equation*}
    \frac{1}{K} \sum_{i = 1}^K \|\mathbf{H}_{\mu_i} \left(\boldsymbol{\theta}\right) - \overline{\mathbf{H}\left(\boldsymbol{\theta}\right)}\|^2_F
\end{equation*}
without saving the $dC \times dC$ Kroncker product, we first expand the Frobenius norm:
\begin{equation*}
    \begin{aligned}
        \|\mathbf{H}_{\mu_i}(\boldsymbol{\theta}) - \overline{\mathbf{H}(\boldsymbol{\theta})}\|^2_F & = \|\mathbf{H}_{\mu_i}(\boldsymbol{\theta})\|^2_F + \left\|\frac{1}{K} \sum_{j=1}^K \mathbf{H}_{\mu_j}(\boldsymbol{\theta})\right\|^2_F - 2 \left\langle \mathbf{H}_{\mu_i}(\boldsymbol{\theta}), \frac{1}{K} \sum_{j=1}^K \mathbf{H}_{\mu_j}(\boldsymbol{\theta}) \right\rangle_F                 \\
                                                                           & = \|\mathbf{H}_{\mu_i}(\boldsymbol{\theta})\|^2_F + \frac{1}{K^2} \left\|\sum_{j=1}^K \mathbf{H}_{\mu_j}(\boldsymbol{\theta})\right\|^2_F - \frac{2}{K^2} \sum_{j=1}^K \left\langle \mathbf{H}_{\mu_i}(\boldsymbol{\theta}), \mathbf{H}_{\mu_j}(\boldsymbol{\theta}) \right\rangle_F               \\
                                                                           & = \|\mathbf{H}_{\mu_i}(\boldsymbol{\theta})\|^2_F + \frac{1}{K^2} \sum_{ j,l=1}^K \langle\mathbf{H}_{\mu_j}(\boldsymbol{\theta}), \mathbf{H}_{\mu_l}(\boldsymbol{\theta})\rangle_F - \frac{2}{K^2} \sum_{j=1}^K \langle\mathbf{H}_{\mu_i}(\boldsymbol{\theta}) \mathbf{H}_{\mu_j}(\boldsymbol{\theta})\rangle_F
    \end{aligned}
\end{equation*}

We need $\left\langle \mathbf{H}_{\mu_i}(\boldsymbol{\theta}),\mathbf{H}_{\mu_j}(\boldsymbol{\theta}) \right\rangle_{F}$ for all $i, j \in [K]$. For the ease of notation, we denote the two environmental Hessians as $\mathbf{H}^{e_1}, \mathbf{H}^{e_2}$, $\mathcal{E}_e$ as the indices of points in environment $e$, and $\mathbf{H}_i$ as the Hessian of the sample $i$.
\begin{equation*}
    \begin{aligned}
        \left\langle H^{e_1}, H^{e_2} \right\rangle_F & = \frac{1}{|\mathcal{E}_{e_1}||\mathcal{E}_{e_2}|} \sum_{i \in \mathcal{E}_{e_1}} \sum_{j \in \mathcal{E}_{e_2}} \left\langle \mathbf{H}_i, \mathbf{H}_j \right\rangle_F                                                                                                                                                                                                                                                               \\
                                                      & = \frac{1}{|\mathcal{E}_{e_1}||\mathcal{E}_{e_2}|} \sum_{i \in \mathcal{E}_{e_1}} \sum_{j \in \mathcal{E}_{e_2}} \operatorname{tr}(\mathbf{H}_i \mathbf{H}_j)                                                                                                                                                                                                                                                                          \\
                                                      & = \frac{1}{|\mathcal{E}_{e_1}||\mathcal{E}_{e_2}|} \sum_{i \in \mathcal{E}_{e_1}} \sum_{j \in \mathcal{E}_{e_2}} \operatorname{tr} \left( (\operatorname{diag}(\mathbf{p}^{(i)}) - \mathbf{p}^{(i)} {\mathbf{p}^{(i)}}^\top) \otimes \mathbf{x}^{(i)} {\mathbf{x}^{(i)}}^\top ((\operatorname{diag}(\mathbf{p}^{(j)}) - \mathbf{p}^{(j)} {\mathbf{p}^{(j)}}^\top) \otimes \mathbf{x}^{(j)} {\mathbf{x}^{(j)}}^\top) \right)            \\
                                                      & = \frac{1}{|\mathcal{E}_{e_1}||\mathcal{E}_{e_2}|} \sum_{i \in \mathcal{E}_{e_1}} \sum_{j \in \mathcal{E}_{e_2}} \operatorname{tr} \left( \operatorname{diag}(\mathbf{p}^{(i)} - \mathbf{p}^{(i)} {\mathbf{p}^{(i)}}^\top) \operatorname{diag}(\mathbf{p}^{(j)} - \mathbf{p}^{(j)} {\mathbf{p}^{(j)}}^\top) \right) \operatorname{tr}\left(\mathbf{x}^{(i)} {\mathbf{x}^{(i)}}^\top) (\mathbf{x}^{(j)} {\mathbf{x}^{(j)}}^\top)\right)
    \end{aligned}
\end{equation*}
The last expression only involves matrices of dimensions $C \times C$ and $d \times d$.

However, this memory-efficient method requires computing the trace for all pairs of Hessians, $\mathbf{H}_i$ and $\mathbf{H}_j$, where $\left(i, j\right) \in \left(\mathcal{E}_{e_1}, \mathcal{E}_{e_2}\right)$ for each combination of environments $e_1, e_2 \in [K]$.

\subsection{Mean-Squared Error Loss}
We derive the gradient and Hessian of the mean-squared error (MSE) loss.
Given a linear regression model parameterized by $\mathbf{w} \in \mathbb{R}^d$, where the prediction is $\hat{y} = \mathbf{w}^\top \mathbf{x}$, the mean-squared error loss for a single example $(\mathbf{x}, y)$ is defined as:
\[
    \ell(\mathbf{w}) = \frac{1}{2} (\hat{y} - y)^2 = \frac{1}{2} (\mathbf{w}^\top \mathbf{x} - y)^2
\]

\subsubsection{Gradient}
To find the gradient of the loss w.r.t. $\mathbf{w}$, we compute:
\begin{equation*}
    \begin{aligned}
        \nabla_{\mathbf{w}} \ell (\boldsymbol{\theta}) & = \nabla_{\mathbf{w}} \frac{1}{2} (\mathbf{w}^\top \mathbf{x} - y)^2                    \\
                                 & = (\mathbf{w}^\top \mathbf{x} - y) \nabla_{\mathbf{w}} (\mathbf{w}^\top \mathbf{x} - y) \\
                                 & = (\mathbf{w}^\top \mathbf{x} - y) \mathbf{x}                                           \\
                                 & = (\hat{y} - y) \mathbf{x}
    \end{aligned}
\end{equation*}
\subsubsection{Hessian}
To find the Hessian matrix, we compute the second-order partial derivatives:
\begin{equation*}
    \begin{aligned}
        \mathbf{H}\left(\mathbf{x}\right) = \nabla_{\mathbf{w}} \nabla_{\mathbf{w}} \ell (\boldsymbol{\theta}) & = \nabla_{\mathbf{w}} (\hat{y} - y) \mathbf{x}                    \\
                                                     & = \nabla_{\mathbf{w}} (\mathbf{w}^\top \mathbf{x} - y) \mathbf{x} \\
                                                     & = \mathbf{x} \mathbf{x}^\top
    \end{aligned}
\end{equation*}
Note that the second-order derivative of MSE loss is a constant matrix w.r.t. $\mathbf{w}$, so higher-order derivatives are tensors with all zeros.

\section{Experimental Details}\label{app:exp}
\subsection{Linear Probing}\label{app:irm_exp}
For the linear probing experiments in \Cref{sec:experiment}, we conduct a grid search for both $\alpha$ and $\beta$ in \Cref{eq:cma_loss} over the set \{1, 10, 100, 1000, 2000, 5000, 10000\}. We also implement penalty annealing, wherein the gradient and Hessian penalties are initially set to zero and activated only after a predetermined number of updates. This approach ensures that the classifier, to which further regularization is subsequently applied, already achieves a small ERM loss. For Fishr, we perform a grid search over the suggested hyperparameter ranges by \citet{rame_fishr_2022}. The grid search is first conducted using a single random seed. From this, the top five performing sets of hyperparameters are chosen. These sets are then evaluated using four additional random seeds. Lastly, we report the test performance of the hyperparameter set that demonstrates the highest worst-group validation accuracy over five runs. Here we summarize the best hyperparameters found for Fishr and CMA.
\begin{table}[h!]
    \centering
    \caption{Best hyperparameters for Fishr and CMA on each dataset.}
    \adjustbox{max width=\textwidth}{%
        \begin{tabular}{llccc}
            \toprule
            Algorithm & Parameter                                 & \textbf{Waterbirds} & \textbf{CelebA} & \textbf{MultiNLI} \\
            \midrule
            Fishr     & regularization strength $\lambda$         & $100$              & $10$            & $ 10000$          \\
                      & ema $\gamma$                              & $0.945$             & $0.9225$        & $0.99$            \\
                      & annealing iterations                      & $2800$              & $12000$         & $600$             \\
            \midrule
            CORAL     & regularization strength $\gamma$         & $0.256$              & $0.16$            & $0.45$          \\
            \midrule
            CMA       & gradient regularization strength $\alpha$ & $10$                & $5000$          & $5000$            \\
                      & hessian regularization strength $\beta$   & $1000$              & $100$           & $ 1$              \\
                      & annealing iterations                      & $2100$              & $ 4000$         & $0$               \\
            \bottomrule
        \end{tabular}}
\end{table}
We train on Waterbirds for 300 epochs, CelebA for 50 epochs, and MultiNLI for 3 epochs. Since ISR projected features have small dimensions (we follow the implementation in \citet{wang_provable_2022} and choose 100), the experiment is computationally efficient to run, taking five days on four RTX 6000 GPUs.
\subsubsection{Datasets}\label{app:exp_irm_data}
\textit{Waterbirds}~\cite{sagawa_distributionally_2020}: This is an image dataset, where each image is a combination of a bird image from the CUB~\citep{wah_caltech-ucsd_2011} and a background image from the Place dataset~\citep{zhou_places_2018}. Each combined image is labelled with class $y \in \mathcal{Y} = \{\textit{waterbird}, \textit{landbird}\}$ and environment $e \in \mathcal{E} = \{\textit{water\_background}, \textit{land\_background}\}$. Each $(y, e)$ pair forms a group, for a total of 4 groups $\mathcal{G} = \mathcal{Y} \times \mathcal{E}$. There are 4795 training samples, and the smallest group has 56.

\textit{CelebA} \cite{liu_deep_2015}: This is an image dataset composed of celebrity faces. Following \citet{sagawa_distributionally_2020} and \citet{wang_provable_2022, wang_invariant-feature_2023}, we consider a hair color classification task ($\mathcal{Y} = \{\textit{blond}, \textit{dark}\}$) with gender as spurious feature ($\mathcal{E} = \{\textit{male}, \textit{female}\}$). The four groups are formed by $\mathcal{G} = \mathcal{Y} \times \mathcal{E}$. There are 162k training samples, and the smallest group, males with blond hair, has 1387 samples.

\textit{MultiNLI}~\citep{williams_broad-coverage_2018}: This is a text dataset for natural language inference. Each sample is composed of one hypothesis and one premise, and the task is to determine whether the given premise entails, is neutral with, or contradicted by the hypothesis ($\mathcal{Y} = \{\textit{contradiction}, \textit{neutral}, \textit{entailment}\}$). The spurious attribute is the presence of negation words, for example, ``no'', ``nobody'', ``never'', and ``nothing'' ($\mathcal{E} = \{\textit{no\_negation}, \textit{negation}\}$). The presence of negation words spuriously correlated with $y = contradiction$~\citep{gururangan_annotation_2018}. There are six groups formed by $\mathcal{G} = \mathcal{Y} \times \mathcal{E}$, for a total of 206175 samples in the training set. The smallest group, entailment with negations, contains 1521 examples.

\subsection{Fine-Tuning}\label{app:non_irm_exp}
For the fine-tuning experiments in \Cref{sec:experiment}, we employ two model selection strategies from DomainBed~\citep{gulrajani_search_2020}: test-domain model selection and training-domain model selection. In test-domain model selection, we select the best hyperparameters based on a validation set that follows the same distribution as the test data. On the other hand, for training-domain model selection, the best hyperparameters are chosen based on performance across holdout sets from the training domains. Contrary to the original DomainBed setup, which randomizes batch sizes, we standardize the batch size to 64 for ColoredMNIST and RotatedMNIST, and to 32 for real image datasets. For each algorithm, we randomly search for 5 sets of hyperparameters and 3 runs each. The experiments take around 10 days on 4 RTX 6000 GPUs.

Despite the original DomainBed codebase recommending a search over 20 sets of hyperparameters per algorithm, per dataset, and per test domain, we restricted our search to only 5 sets due to time and resource constraints. Even with this limitation, our approach required running 1260 experiments. While this reduced number of searches means the algorithms might not have achieved their full potential, this limitation applies equally to all algorithms, ensuring a fair comparison. As our experiments are intended as proof-of-concept rather than comprehensive evaluations, we argue that the results in this section are sufficient to validate the effectiveness of our algorithm.

In the main text, we follow \citet{rame_fishr_2022} and report the test-domain validation performance. In practice, test-domain model selection is more realistic compared to training-domain model selection, as practitioners are unlikely to deploy a model without validating it with at least some small-scale data from the target domain. Additionally, as discussed in \citet{rame_fishr_2022} and \citet{teney_evading_2022}, by the definition of distribution shift, one cannot expect a model selected on a validation set sampled from the same distribution as the training set to generalize to an unseen test distribution. For completeness, we also present the training-domain validation performance in \Cref{app:per_data_result}.

\subsubsection{Datasets}\label{app:non_irm_exp_data}
\textit{Colored MNIST}~\citep{arjovsky_invariant_2020}: This is an image dataset derived from the MNIST handwritten digit classification dataset~\citep{lecun_mnist_2010}. The task is to identify whether a digit is in 0-4 or 5-9 ($\mathcal{Y} = \{0-4, 5-9\}$). The digits are colored red or blue. The environments contain colored digits correlated differently ($\mathcal{E} = \{+90\%, +80\%, -90\%\}$) with the target label. In the first environment, the green color has a 90\% correlation with class 5-9; similar correlations apply in the other two environments. Additionally, there is a 25\% chance of label flipping. The dataset contains 70,000 examples of dimensions (2, 28, 28) categorized into 2 classes.

\textit{Rotated MNIST}~\citep{ghifary_domain_2015}: This is another variant of MNIST, where each environment $e \in \mathcal{E} = \{0, 15, 30, 45, 60, 75\}$ is composed of digits rotated by $e$ degrees. The dataset contains  70,000 examples of dimensions (1, 28, 28) and 10 classes.

\textit{PACS}~\citep{li_deeper_2017}: This is a 7-class classification dataset, where each image is either photo, art painting, cartoon, or sketch ($\mathcal{E} = \{\textit{photo}, \textit{art\_painting}, \textit{cartoon}, \textit{sketch}\}$). There are 9,991 samples, each with dimensions (3, 224, 224).

\textit{VLCS}~\citep{fang_unbiased_2013}: This is a 5 class images dataset with images from environments \ $\mathcal{E} = \{\textit{Caltech101}, \textit{LabelMe}, \textit{SUN09}, \textit{VOC2007}\}$. There are 10,729 samples of dimension (3, 224, 224).

\textit{Terra Incognita}~\citep{beery_recognition_2018}: This is a dataset of photographs taken from various locations, each corresponds to one environment ($\mathcal{E} = \{\textit{L100}, \textit{L38}, \textit{L43}, \textit{L46}\}$). The DomainBed benchmark includes a subset of Terra Incognita, comprising 24,788 samples with dimensions (3, 224, 224) across 10 classes.

\subsection{DomainBed results}\label{app:per_data_result}
\subsubsection{Model selection: training-domain validation set}

\textbf{ColoredMNIST}

\begin{center}
\adjustbox{max width=\textwidth}{%
\begin{tabular}{lcccc}
\toprule
\textbf{Algorithm}   & \textbf{+90\%}       & \textbf{+80\%}       & \textbf{-90\%}       & \textbf{Avg}         \\
\midrule
ERM                  & 72.2 $\pm$ 0.2       & 72.9 $\pm$ 0.2       & 10.1 $\pm$ 0.1       & 51.7                 \\
CORAL                & 71.7 $\pm$ 0.4       & 73.2 $\pm$ 0.1       & 10.2 $\pm$ 0.1       & 51.7                 \\
Fishr                & 72.6 $\pm$ 0.3       & 73.3 $\pm$ 0.1       & 10.6 $\pm$ 0.2       & 52.2                 \\
CMA     & 71.4 $\pm$ 0.3       & 72.8 $\pm$ 0.1       & 10.0 $\pm$ 0.2       & 51.4                 \\
\bottomrule
\end{tabular}}
\end{center}

\textbf{RotatedMNIST}

\begin{center}
\adjustbox{max width=\textwidth}{%
\begin{tabular}{lccccccc}
\toprule
\textbf{Algorithm}   & \textbf{0}           & \textbf{15}          & \textbf{30}          & \textbf{45}          & \textbf{60}          & \textbf{75}          & \textbf{Avg}         \\
\midrule
ERM                  & 95.3 $\pm$ 0.2       & 98.6 $\pm$ 0.1       & 99.1 $\pm$ 0.1       & 98.9 $\pm$ 0.0       & 98.9 $\pm$ 0.0       & 96.1 $\pm$ 0.2       & 97.8                 \\
CORAL                & 95.7 $\pm$ 0.2       & 98.7 $\pm$ 0.1       & 99.0 $\pm$ 0.0       & 99.0 $\pm$ 0.0       & 99.0 $\pm$ 0.0       & 96.5 $\pm$ 0.0       & 98.0                 \\
Fishr                & 95.6 $\pm$ 0.3       & 98.5 $\pm$ 0.1       & 99.1 $\pm$ 0.1       & 99.0 $\pm$ 0.1       & 99.0 $\pm$ 0.1       & 96.4 $\pm$ 0.0       & 97.9                 \\
CMA     & 95.2 $\pm$ 0.2       & 98.4 $\pm$ 0.2       & 98.9 $\pm$ 0.0       & 98.9 $\pm$ 0.0       & 98.9 $\pm$ 0.1       & 96.5 $\pm$ 0.2       & 97.8                 \\
\bottomrule
\end{tabular}}
\end{center}

\textbf{VLCS}

\begin{center}
\adjustbox{max width=\textwidth}{%
\begin{tabular}{lccccc}
\toprule
\textbf{Algorithm}   & \textbf{C}           & \textbf{L}           & \textbf{S}           & \textbf{V}           & \textbf{Avg}         \\
\midrule
ERM                  & 97.1 $\pm$ 0.1       & 62.3 $\pm$ 0.3       & 71.9 $\pm$ 0.7       & 77.2 $\pm$ 0.4       & 77.2                 \\
CORAL                & 96.3 $\pm$ 0.1       & 64.5 $\pm$ 0.4       & 72.4 $\pm$ 0.3       & 72.4 $\pm$ 1.7       & 76.4                 \\
Fishr                & 96.4 $\pm$ 0.6       & 63.3 $\pm$ 0.9       & 74.8 $\pm$ 0.6       & 76.2 $\pm$ 0.4       & 77.7                 \\
CMA     & 96.1 $\pm$ 0.6       & 63.2 $\pm$ 0.4       & 73.5 $\pm$ 0.4       & 78.9 $\pm$ 0.3       & 77.9                 \\
\bottomrule
\end{tabular}}
\end{center}

\textbf{PACS}

\begin{center}
\adjustbox{max width=\textwidth}{%
\begin{tabular}{lccccc}
\toprule
\textbf{Algorithm}   & \textbf{A}           & \textbf{C}           & \textbf{P}           & \textbf{S}           & \textbf{Avg}         \\
\midrule
ERM                  & 80.2 $\pm$ 0.6       & 75.4 $\pm$ 0.2       & 95.9 $\pm$ 0.8       & 66.6 $\pm$ 0.3       & 79.5                 \\
CORAL                & 81.6 $\pm$ 0.6       & 74.9 $\pm$ 0.8       & 95.4 $\pm$ 0.6       & 64.9 $\pm$ 0.6       & 79.2                 \\
Fishr                & 83.1 $\pm$ 1.0       & 74.8 $\pm$ 0.5       & 97.2 $\pm$ 0.2       & 68.7 $\pm$ 0.8       & 81.0                 \\
CMA     & 83.3 $\pm$ 0.3       & 76.4 $\pm$ 0.2       & 96.1 $\pm$ 0.1       & 66.3 $\pm$ 0.7       & 80.5                 \\
\bottomrule
\end{tabular}}
\end{center}

\textbf{TerraIncognita}

\begin{center}
\adjustbox{max width=\textwidth}{%
\begin{tabular}{lccccc}
\toprule
\textbf{Algorithm}   & \textbf{L100}        & \textbf{L38}         & \textbf{L43}         & \textbf{L46}         & \textbf{Avg}         \\
\midrule
ERM                  & 48.2 $\pm$ 2.1       & 17.8 $\pm$ 2.3       & 37.8 $\pm$ 1.0       & 34.2 $\pm$ 0.5       & 34.5                 \\
CORAL                & 39.1 $\pm$ 2.1       & 12.4 $\pm$ 2.1       & 36.0 $\pm$ 1.4       & 30.6 $\pm$ 0.9       & 29.5                 \\
Fishr                & 47.2 $\pm$ 2.1       & 16.5 $\pm$ 1.6       & 39.9 $\pm$ 1.9       & 33.2 $\pm$ 0.7       & 34.2                 \\
CMA     & 45.8 $\pm$ 3.3       & 19.0 $\pm$ 1.2       & 37.7 $\pm$ 0.3       & 33.4 $\pm$ 1.0       & 34.0                 \\
\bottomrule
\end{tabular}}
\end{center}

\textbf{Averages}

\begin{center}
\adjustbox{max width=\textwidth}{%
\begin{tabular}{lcccccc}
\toprule
\textbf{Algorithm}        & \textbf{ColoredMNIST}     & \textbf{RotatedMNIST}     & \textbf{VLCS}             & \textbf{PACS}             & \textbf{TerraIncognita}   & \textbf{Avg}              \\
\midrule
ERM                       & 51.7 $\pm$ 0.1            & 97.8 $\pm$ 0.1            & 77.2 $\pm$ 0.2            & 79.5 $\pm$ 0.3            & 34.5 $\pm$ 0.4            & 68.1                      \\
CORAL                     & 51.7 $\pm$ 0.1            & 98.0 $\pm$ 0.0            & 76.4 $\pm$ 0.5            & 79.2 $\pm$ 0.1            & 29.5 $\pm$ 1.1            & 67.0                      \\
Fishr                     & 52.2 $\pm$ 0.1            & 97.9 $\pm$ 0.1            & 77.7 $\pm$ 0.4            & 81.0 $\pm$ 0.3            & 34.2 $\pm$ 0.9            & 68.6                      \\
CMA          & 51.4 $\pm$ 0.0            & 97.8 $\pm$ 0.0            & 77.9 $\pm$ 0.1            & 80.5 $\pm$ 0.2            & 34.0 $\pm$ 0.7            & 68.3                      \\
\bottomrule
\end{tabular}}
\end{center}

\subsubsection{Model selection: test-domain validation set (oracle)}\label{app:perdata_oracle}

\textbf{ColoredMNIST}

\begin{center}
\adjustbox{max width=\textwidth}{%
\begin{tabular}{lcccc}
\toprule
\textbf{Algorithm}   & \textbf{+90\%}       & \textbf{+80\%}       & \textbf{-90\%}       & \textbf{Avg}         \\
\midrule
ERM                  & 68.1 $\pm$ 1.1       & 70.5 $\pm$ 0.7       & 25.0 $\pm$ 1.9       & 54.5                 \\
CORAL                & 68.2 $\pm$ 0.9       & 72.0 $\pm$ 0.8       & 26.9 $\pm$ 0.1       & 55.7                 \\
Fishr                & 73.9 $\pm$ 0.3       & 73.5 $\pm$ 0.2       & 38.5 $\pm$ 5.2       & 62.0                 \\
CMA     & 70.9 $\pm$ 0.6       & 72.2 $\pm$ 0.2       & 44.3 $\pm$ 2.9       & 62.5                 \\
\bottomrule
\end{tabular}}
\end{center}

\textbf{RotatedMNIST}

\begin{center}
\adjustbox{max width=\textwidth}{%
\begin{tabular}{lccccccc}
\toprule
\textbf{Algorithm}   & \textbf{0}           & \textbf{15}          & \textbf{30}          & \textbf{45}          & \textbf{60}          & \textbf{75}          & \textbf{Avg}         \\
\midrule
ERM                  & 95.2 $\pm$ 0.3       & 98.5 $\pm$ 0.1       & 98.9 $\pm$ 0.1       & 98.9 $\pm$ 0.1       & 99.0 $\pm$ 0.1       & 96.2 $\pm$ 0.2       & 97.8                 \\
CORAL                & 95.8 $\pm$ 0.1       & 98.7 $\pm$ 0.1       & 98.9 $\pm$ 0.0       & 99.2 $\pm$ 0.1       & 99.1 $\pm$ 0.0       & 96.5 $\pm$ 0.1       & 98.0                 \\
Fishr                & 95.7 $\pm$ 0.2       & 98.7 $\pm$ 0.0       & 99.0 $\pm$ 0.1       & 99.1 $\pm$ 0.1       & 98.8 $\pm$ 0.2       & 96.4 $\pm$ 0.0       & 97.9                 \\
CMA     & 95.7 $\pm$ 0.2       & 98.8 $\pm$ 0.1       & 98.9 $\pm$ 0.1       & 98.9 $\pm$ 0.0       & 98.9 $\pm$ 0.1       & 95.9 $\pm$ 0.6       & 97.9                 \\
\bottomrule
\end{tabular}}
\end{center}

\textbf{VLCS}

\begin{center}
\adjustbox{max width=\textwidth}{%
\begin{tabular}{lccccc}
\toprule
\textbf{Algorithm}   & \textbf{C}           & \textbf{L}           & \textbf{S}           & \textbf{V}           & \textbf{Avg}         \\
\midrule
ERM                  & 96.4 $\pm$ 0.1       & 62.3 $\pm$ 1.0       & 72.1 $\pm$ 0.6       & 76.7 $\pm$ 0.3       & 76.9                 \\
CORAL                & 95.8 $\pm$ 0.3       & 63.1 $\pm$ 0.3       & 71.2 $\pm$ 0.3       & 73.5 $\pm$ 0.2       & 75.9                 \\
Fishr                & 96.0 $\pm$ 0.8       & 64.0 $\pm$ 0.1       & 73.5 $\pm$ 0.7       & 76.4 $\pm$ 0.6       & 77.5                 \\
CMA     & 95.8 $\pm$ 0.4       & 65.0 $\pm$ 0.5       & 70.6 $\pm$ 2.4       & 78.1 $\pm$ 0.3       & 77.4                 \\
\bottomrule
\end{tabular}}
\end{center}

\textbf{PACS}

\begin{table}[ht!]
\begin{center}
\adjustbox{max width=\textwidth}{%
\begin{tabular}{lccccc}
\toprule
\textbf{Algorithm}   & \textbf{A}           & \textbf{C}           & \textbf{P}           & \textbf{S}           & \textbf{Avg}         \\
\midrule
ERM                  & 81.2 $\pm$ 0.9       & 73.4 $\pm$ 0.9       & 96.1 $\pm$ 0.6       & 70.3 $\pm$ 0.5       & 80.2                 \\
CORAL                & 80.6 $\pm$ 0.6       & 74.9 $\pm$ 0.2       & 95.9 $\pm$ 0.4       & 69.4 $\pm$ 0.2       & 80.2                 \\
Fishr                & 83.6 $\pm$ 0.6       & 74.9 $\pm$ 1.0       & 97.4 $\pm$ 0.3       & 70.1 $\pm$ 0.5       & 81.5                 \\
CMA     & 82.8 $\pm$ 0.7       & 76.7 $\pm$ 1.3       & 97.3 $\pm$ 0.2       & 69.5 $\pm$ 0.7       & 81.6                 \\
\bottomrule
\end{tabular}}
\end{center}
\end{table}

\textbf{TerraIncognita}
\begin{table}[ht!]
\begin{center}
\adjustbox{max width=\textwidth}{%
\begin{tabular}{lccccc}
\toprule
\textbf{Algorithm}   & \textbf{L100}        & \textbf{L38}         & \textbf{L43}         & \textbf{L46}         & \textbf{Avg}         \\
\midrule
ERM                  & 50.2 $\pm$ 0.4       & 25.0 $\pm$ 1.9       & 36.3 $\pm$ 1.6       & 34.5 $\pm$ 0.1       & 36.5                 \\
CORAL                & 43.1 $\pm$ 3.2       & 21.4 $\pm$ 2.7       & 37.5 $\pm$ 0.6       & 32.1 $\pm$ 0.5       & 33.6                 \\
Fishr                & 49.9 $\pm$ 2.1       & 23.2 $\pm$ 1.8       & 41.4 $\pm$ 1.2       & 34.7 $\pm$ 0.7       & 37.3                 \\
CMA     & 47.5 $\pm$ 3.4       & 44.7 $\pm$ 2.4       & 29.0 $\pm$ 3.2       & 32.4 $\pm$ 0.9       & 38.4                 \\
\bottomrule
\end{tabular}}
\end{center}
\end{table}

\subsubsection{Additional Baselines}
We compare CMA against additional baselines on ColoredMNIST and RotatedMNIST datasets and discuss the results using test-domain model selection. From \Cref{tab:add_base}, we observe that CMA has the second-highest average accuracy. Note that VREx surpasses CMA on ColoredMNIST but has a substantially larger variance (4.6) compared to CMA (0.9).
\begin{table}[ht!]
\begin{center}
\caption{Model selection: test-domain validation set}
\adjustbox{max width=\textwidth}{%
\begin{tabular}{lccc}
\toprule
\textbf{Algorithm}        & \textbf{ColoredMNIST}     & \textbf{RotatedMNIST}     & \textbf{Avg}              \\
\midrule
ERM                       & 54.5 $\pm$ 0.2            & 97.8 $\pm$ 0.1            & 76.2                      \\
CORAL                     & 55.7 $\pm$ 0.5            & 98.0 $\pm$ 0.0            & 76.9                      \\
Fishr                     & 62.0 $\pm$ 1.7            & 97.9 $\pm$ 0.0            & 80.0                      \\
GroupDRO                  & 59.6 $\pm$ 0.3            & 98.0 $\pm$ 0.1            & 78.8                      \\
DANN                      & 53.5 $\pm$ 0.7            & 97.4 $\pm$ 0.0            & 75.5                      \\
CDANN                     & 53.6 $\pm$ 0.4            & 97.6 $\pm$ 0.0            & 75.6                      \\
VREx                      & 66.1 $\pm$ 4.6            & 97.8 $\pm$ 0.0            & \textbf{82.0}             \\
SelfReg                   & 53.8 $\pm$ 0.8            & 98.0 $\pm$ 0.1            & 75.9                      \\
CMA                       & 62.5 $\pm$ 0.9            & 97.9 $\pm$ 0.1            & \underline{80.2}          \\
\bottomrule
\end{tabular}}
\label{tab:add_base}
\end{center}
\end{table}

\begin{table}[ht!]
\begin{center}
\caption{Model selection: Training-domain validation set}
\adjustbox{max width=\textwidth}{%
\begin{tabular}{lccc}
\toprule
\textbf{Algorithm}        & \textbf{ColoredMNIST}     & \textbf{RotatedMNIST}     & \textbf{Avg}              \\
\midrule
ERM                       & 51.7 $\pm$ 0.1            & 97.8 $\pm$ 0.1            & 75.8                      \\
CORAL                     &51.7 $\pm$ 0.1             &98.0 $\pm$ 0.0            & 74.9                      \\
Fishr                     & 52.2 $\pm$ 0.1            & 97.9 $\pm$ 0.1            & 75.1                      \\
GroupDRO                  & 51.9 $\pm$ 0.1            & 97.9 $\pm$ 0.1            & 74.9                      \\
DANN                      & 51.7 $\pm$ 0.0            & 97.6 $\pm$ 0.2            & 74.6                      \\
CDANN                     & 51.9 $\pm$ 0.2            & 97.8 $\pm$ 0.0            & 74.8                      \\
VREx                      & 51.7 $\pm$ 0.1            & 97.7 $\pm$ 0.1            & 74.7                      \\
SelfReg                   & 51.7 $\pm$ 0.0            & 98.1 $\pm$ 0.1            & 74.9                      \\
CMA                       &51.4 $\pm$ 0.0             & 97.8 $\pm$ 0.0            & 74.6                      \\
\bottomrule
\end{tabular}}
\end{center}
\end{table}

\subsection{Comparison to HGP}\label{app:compare_hgp}
We compare CMA with the HGP algorithm~\citep{hemati_understanding_2023}. Both algorithms align the gradients and Hessians, so we expect their performances to be similar. We do not compare CMA with Hutchinson's in ~\citep{hemati_understanding_2023} due to the time costs incurred by sampling-based Hessian estimation.

\subsubsection{Linear Probing}
As shown in \Cref{tab:hgp_cma_probing}, the two algorithms have comparable performance overall, except for the CelebA dataset. A potential explanation for this discrepancy is the differences in Hessian computations. Note that the original HGP does not apply penalty annealing. We added penalty annealing to both methods to eliminate differences caused by this factor, allowing us to focus on differences in the loss objectives.

\begin{table}[ht!]\label{tab:hgp_cma_probing}
    \centering
    \caption{Test accuracy (\%) with standard error. Each experiment is repeated over 5 random seeds.}
    \centering
    \resizebox{\textwidth}{!}{%
        \begin{tabular}{@{}lcccccc@{}}
            \toprule
            \textbf{Method} & \multicolumn{2}{c}{\textbf{Waterbirds (CLIP ViT-B/32)}} & \multicolumn{2}{c}{\textbf{CelebA (CLIP ViT-B/32)}} & \multicolumn{2}{c}{\textbf{MultiNLI (BERT)}}                                                                         \\
            \cmidrule(l){2-3} \cmidrule(l){4-5} \cmidrule(l){6-7}
                            & \textbf{Average}                                        & \textbf{Worst-Group}                                & \textbf{Average}                             & \textbf{Worst-Group}  & \textbf{Average}      & \textbf{Worst-Group}  \\
            \midrule
            HGP & 90.47 ± 0.06 & 86.48 ± 0.12 & 75.14 ± 0.12 & 71.68 ± 0.18 & 80.72 ± 0.62 & 69.35 ± 0.68 \\ 
            CMA & 90.11 ± 0.17 & 86.16 ± 0.29 & 77.87 ± 0.04 & 74.16 ± 0.10 & 81.30 ± 0.25 & 69.72 ± 0.66 \\
        \bottomrule
    \end{tabular}}
\end{table}

\subsubsection{Fine-tuning}
We also run fine-tuning experiments on HGP, strictly following the implementation in the code released by \citet{hemati_understanding_2023}. The hyperparameter search scheme in DomainBed leads to more uncertainty and the implementation of HGP in \citet{hemati_understanding_2023} does not employ penalty annealing. Together with the differences in the Hessian computation, all of these factors potentially lead to the differences in the performance of CMA and HGP.

\begin{table}[ht!]
\caption{Model selection: test-domain validation set}
\adjustbox{max width=\textwidth}{%
\begin{tabular}{lcccccc}
\toprule
\textbf{Algorithm}        & \textbf{ColoredMNIST}     & \textbf{RotatedMNIST}     & \textbf{VLCS}             & \textbf{PACS}             & \textbf{TerraIncognita}   & \textbf{Avg}              \\
\midrule
HGP                       & 55.8 $\pm$ 0.2            & 97.8 $\pm$ 0.1            & 76.5 $\pm$ 1.2            & 79.8 $\pm$ 0.2            & 29.6 $\pm$ 0.9            & 67.9                      \\
CMA				          & 62.5 $\pm$ 0.9            & 97.9 $\pm$ 0.1            & 77.4 $\pm$ 0.8            & 81.6 $\pm$ 0.3            & 38.4 $\pm$ 1.2            & 71.5                      \\
\bottomrule
\end{tabular}}
\end{table}

\begin{table}[ht!]
\caption{Model selection: training-domain validation set}
\adjustbox{max width=\textwidth}{%
\begin{tabular}{lcccccc}
\toprule
\textbf{Algorithm}        & \textbf{ColoredMNIST}     & \textbf{RotatedMNIST}     & \textbf{VLCS}             & \textbf{PACS}             & \textbf{TerraIncognita}   & \textbf{Avg}              \\
\midrule
HGP                       & 51.8 $\pm$ 0.0            & 97.9 $\pm$ 0.1            & 75.8 $\pm$ 1.0            & 77.5 $\pm$ 1.0            & 28.6 $\pm$ 0.8            & 66.3                      \\
CMA				          & 51.4 $\pm$ 0.0            & 97.8 $\pm$ 0.0            & 77.9 $\pm$ 0.1            & 80.5 $\pm$ 0.2            & 34.0 $\pm$ 0.7            & 68.3                      \\
\bottomrule
\end{tabular}}
\end{table}

\end{document}